\newcommand{\op}{\left(}
\newcommand{\cp}{\right)}
\newcommand{\ob}{\left[}
\newcommand{\cb}{\right]}
\newcommand{\R}{\mathbb{R}}
\newcommand{\E}{\mathbb{E}}
\newcommand{\ep}{\varepsilon}
\newcommand{\N}{\mathcal{N}}
\renewcommand{\P}{\mathbb{P}}
\newcommand{\F}{\mathcal{F}}
\renewcommand{\L}{\mathcal{L}}
\newcommand{\td}{\widetilde}
\newcommand{\rv}[1]{\overleftarrow{#1}\!}
\newcommand{\OU}{\textnormal{OU}}
\DeclareMathOperator{\tr}{tr}
\DeclareMathOperator{\cov}{cov}
\begin{document}

\title{Convergence of Deterministic and Stochastic Diffusion-Model Samplers: A Simple Analysis in Wasserstein Distance}

\author{\name Eliot Beyler \email eliot.beyler@inria.fr \\
       \addr INRIA, Ecole Normale Supérieure\\
       PSL Research University\\
       Paris, France
       \AND
       \name Francis Bach \email francis.bach@inria.fr \\
       \addr INRIA, Ecole Normale Supérieure\\
       PSL Research University\\
       Paris, France}

\editor{XXX}

\maketitle

\begin{abstract}%
We provide new convergence guarantees in Wasserstein distance for diffusion-based generative models, covering both stochastic (DDPM-like) and deterministic (DDIM-like) sampling methods.
We introduce a simple framework to analyze discretization, initialization, and score estimation errors. Notably, we derive the first Wasserstein convergence bound for the Heun sampler and improve existing results for the Euler sampler of the probability flow ODE.
Our analysis emphasizes the importance of spatial regularity of the learned score function and argues for controlling the score error with respect to the true reverse process, in line with denoising score matching.
We also incorporate recent results on smoothed Wasserstein distances to sharpen initialization error bounds.
\end{abstract}

\begin{keywords}
Diffusion models, convergence guarantees, Wasserstein distance,  probability flow ODE, Heun method.
\end{keywords}

\section{Introduction}

Diffusion models \citep{sohl-dicksteinDeepUnsupervisedLearning2015,songGenerativeModelingEstimating2019,hoDenoisingDiffusionProbabilistic2020,songScoreBasedGenerativeModeling2021} are now the state-of-the-art methods for learning and sampling a distribution in high dimension, only known from a large data set of empirical samples.
Starting from a sample of a Gaussian distribution, they progressively refine it by following a differential equation involving the \emph{score function}, which is learned from empirical samples through a least-squares denoising objective, a method called \emph{denoising score matching} \citep{vincentConnectionScoreMatching2011}.
This process is either stochastic, with \emph{``DDPM-like''} samplers \citep{hoDenoisingDiffusionProbabilistic2020} that correspond to the Euler-Maruyama discretization of an SDE, or deterministic, with \emph{``DDIM-like''} samplers \citep{songDenoisingDiffusionImplicit2021} that correspond to the Euler discretization of an ODE. Deterministic sampling can be accelerated using higher order Runge–Kutta methods, the most popular of which is Heun scheme \citep{jolicoeur-martineauGottaGoFast2021,karrasElucidatingDesignSpace2022}.

\subsection{Convergence Guarantees for Diffusion Models}
The rising popularity of diffusion models and their impressive empirical performances have prompted a growing interest in providing theoretical convergence guarantees. Apart from early works \citep{bortoliConvergenceDenoisingDiffusion2022,kwonScorebasedGenerativeModeling2022} giving guarantees in Wasserstein distances but with limiting assumptions, most of the literature regarding DDPM-like stochastic samplers has used Kullback–Leibler (KL) divergence or total variation (TV) distance \citep{leeConvergenceScorebasedGenerative2022,leeConvergenceScorebasedGenerative2023,chenSamplingEasyLearning2023,chenImprovedAnalysisScorebased2023,bentonNearlyDlinearConvergence2024,liNonAsymptoticConvergenceDiffusionBased2024,confortiKLConvergenceGuarantees2025,strasmanAnalysisNoiseSchedule2024}. 

But more recently, convergence bounds in Wasserstein distance has gained attention.
First works used a limited framework (assuming a log-concave target distribution as done by \cite{tangContractiveDiffusionProbabilistic2024,brunoDiffusionbasedGenerativeModels2025,strasmanAnalysisNoiseSchedule2024,yuAdvancingWassersteinConvergence2025,gaoWassersteinConvergenceGuarantees2025,strasmanAnalysisNoiseSchedule2024}) or do not tackle the discretization error \citep{mimikos-stamatopoulosScorebasedGenerativeModels2024}.
More recent contributions tackle these limitations by working with weakly log-concave distributions \citep{gentiloni-silveriLogConcavityScoreRegularity2025} or semiconvexity assumptions on the data distribution and potentials with discontinuous gradients \citep{brunoWassersteinConvergenceScorebased2025a}.
Yet, all these works make an assumption on the learning error that do not correspond to what is minimized by the learning algorithm used to estimate the score.
Concurrently with our work, \cite{wangWassersteinBoundsGenerative2025} derived convergence guarantees for a stochastic diffusion-model sampler using the correct formulation of the score learning error.

On the other side, convergence guarantees for DDIM-like deterministic samplers is still more limited, though the number of results has grown recently. Early works gave bounds in TV/KL, with limitations such as having access to the true score \citep{chenRestorationDegradationLinearDiffusions2023} or adding a non deterministic corrector step \citep{chenProbabilityFlowODE2023}.
More recent works give convergence guarantees in TV without this restrictive assumptions \citep{liNonAsymptoticConvergenceDiffusionBased2024}, and also tackle the high order Runga-Kutta methods \citep{liAcceleratingConvergenceScoreBased2024,huangConvergenceAnalysisProbability2025,huangFastConvergenceHighOrder2025}.

However, for these deterministic samplers, results in Wasserstein distance remain limited, with \citet{gaoConvergenceAnalysisGeneral2025} that give results in a limited framework (log-concave distributions), and with additional assumptions (regularity in time of the score function), and do tackle higher order methods. Other works give Wasserstein convergence results, but for flow matching models, as \citet{bentonErrorBoundsFlow2024} (who do not tackle the discretization error) and \citet{dingCharacteristicLearningProvable2025}.

While all the works cited above consider the score learning error an exogenous factor and incorporate it into the bound, it should be noted that other studies provide statistical guarantees for this learning error \citep[][]{chenScoreApproximationEstimation2023,okoDiffusionModelsAre2023,azangulovConvergenceDiffusionModels2025,dingCharacteristicLearningProvable2025}. Such statistical guarantees will not be addressed in this work.

Recent works \citep{huraultDenoisingScoreMatching2025,pierretDiffusionModelsGaussian2025} also provide precise Wasserstein convergence estimates for Gaussian distributions, for which closed-form expressions can be derived.

\subsection{Wasserstein Distance}
They are many ways to access the similarity of two probability distributions, but we argue that the Wasserstein distance is the most suited one.
KL divergence and TV distance only depend on the ratio between densities of the probability distributions.
In particular, they are ill defined if the target distribution does not admit a density with respect to the Lebesgue measure, which would typically be the case under the \emph{manifold hypothesis} \citep[see, e.g.,][]{tenenbaumGlobalGeometricFramework2000,bengioRepresentationLearningReview2013,feffermanTestingManifoldHypothesis2016}. They also do not incorporate any notion of distance in the sampling space, therefore they cannot differentiate between a generated sample slightly outside the support of the data distribution and one far away. Note finally that the Wasserstein distance is connected to the Fréchet inception distance (FID) \citep{heuselGANsTrainedTwo2017} widely used in image generation.

\subsection{Contributions} In this work, we make the following contributions:
\begin{itemize}
\begin{samepage}
    \item We develop a simple framework to study the convergence of diffusion models, improving the state-of-the-art Wasserstein convergence guarantees for Euler discretization of the probability flow ODE and proving the first one for Heun sampler. 
    At the same time, it allows to get Wasserstein bounds for the Euler-Maruyama (DDPM) sampler of diffusion models, similar to the existing literature, but with simpler derivations.
\end{samepage}
    \item In particular, we discuss the assumption on the learning error commonly made in works on Wasserstein convergence guarantees, and the need for additional Lipchitz-continuity assumptions on the spatial regularity of the learned score function.
    \item We improve the control of initialization error in Wasserstein distance, using the result of \citet{chenAsymptoticsSmoothedWasserstein2022} on asymptotics of smoothed Wasserstein distances.
    \item We also prove convergence of order $1$ in the step size for the Euler-Maruyama (DDPM) sampler with accurate score, matching the optimal rate of convergence for the Euler-Maruyama discretization of SDEs with additive noise (i.e., constant diffusion coefficients).
\end{itemize}

\subsection{Notation}
For $Y\in \R^d$ a random variable, we denote $\L(Y)$ its distribution, and, when it exists, $p_Y$ its density with respect to the Lebesgue measure. 
For $Y$ a random variable with finite second order moment, we denote $\Vert Y\Vert_{L_2} = \op\E[\Vert Y\Vert^2]\cp^{1/2}$ its $L_2$-norm, where $\Vert\cdot\Vert$ is the Euclidean norm on $\R^d$. 
For $\mu, \nu$ two probability distributions on $\R^d$, the Wasserstein-$2$ distance is defined as
$$
 W_2(\mu,\nu) = \op\inf_{\Gamma}\int \Vert x_1 - x_2\Vert^2 d\Gamma(x_1,x_2)\cp^{1/2},
 $$
where $\Gamma$ is taken among all probability distributions on $\R^d\times\R^d$ with first marginal $\mu$ and second marginal $\nu$ \citep[see, e.g.,][]{peyreComputationalOptimalTransport2019}.
For $Y_1, Y_2$ two random variables, we denote $Y_1\bot Y_2$ if $Y_1$ and $Y_2$ are independent.
We also write $\nabla$ the gradient operator, $\nabla\cdot$ the divergence operator, $\Delta$ the Laplacian operator, always with respect to the space variable.
We denote $B(x,R)$ the closed ball of center $x$ and radius $R$.
For a matrix $A\in \R^{d\times d}$, we write $\Vert A\Vert_\textnormal{op}$ the operator norm of $A$, defined by $\Vert A \Vert_\textnormal{op} = \sup_{x\neq 0}\frac{\Vert Ax\Vert}{\Vert x \Vert}$, and $\Vert A\Vert_\textnormal{F}$ its Frobenius norm, defined by $\Vert A \Vert_\textnormal{F}^2 = \tr\ob AA^\top\cb$. 
We denote $\preccurlyeq$ the Loewner order on symmetric matrices ($A\preccurlyeq B$ if $B-A$ is positive semi-definite). 

Throughout the entire paper, $X$ we denote the random variable of interest, $X_t$ the forward noising process, $\rv X_t$ the backward (reverse) stochastic process, $x_t$ the reverse deterministic process following the probability flow ODE, and finally $\hat X, \hat X_n$ the empirical outputs and steps of the sampling algorithms.

\section{Preliminaries: Algorithms}
\subsection{Defining Diffusion Processes}
There are several ways to define diffusion models, which correspond to different time parameterizations and scalings. Here, we use the simplest one, for which the diffusion process corresponds simply to the constant-speed heat equation \citep[see, e.g.,][]{evansPartialDifferentialEquations2022}.
This choice makes the expression of the forward (\ref{eq:sde_forward}) and backward (\ref{eq:reverse_SDE}-\ref{eq:reverse_ODE}) processes much simpler, and other choices of time parameterizations can be retrieved by a change of time and rescaling.
We discuss the impact of noise schedules and how our analysis can be adapted to a more general setting in Appendix~\ref{sct:param_time}.

Starting from the random variable of interest $X$, we progressively add Gaussian noise to it with, for $t\in\R_+$:
\begin{equation*}
X_t = X + B_t,
\end{equation*}
where $B_t$ is a Brownian motion (in particular, the marginal distribution is $B_t \sim \N(0,tI)$). We will denote $p_t = p_{X_t}$ the density of $X_t$. $X_t$ verifies the following SDE:
\begin{equation}
\label{eq:sde_forward}
\left\{
\begin{array}{l}
dX_t =  dB_t, \\
X_0 = X.
\end{array}\right.
\end{equation}
 
The idea is that for large time $T$, we will have $X_T \approx B_T$, which is easy to sample from. Then we can go back to $X_0 = X$ through a reverse process given by the following proposition \citep{songScoreBasedGenerativeModeling2021}, a special case of a result by \citet{andersonReversetimeDiffusionEquation1982}.

\begin{proposition}
\label{prop:reverse_SDE}
We define a process $\rv X_t$, for $t \in [0,T]$, with the following stochastic differential equation:
\begin{equation}
\label{eq:reverse_SDE}
\left\{
\begin{array}{l}
d\rv{X}_t = \nabla\log \rv p_{t}(\rv{X}_t)dt + dW_t, \\
\rv{X}_0 = X_T,
\end{array}\right.
\end{equation}
where $\rv p_t = p_{T-t}$ and $W_t$ is a Brownian motion. Then, $\rv{X}_t$ has the same marginal distributions as $X_{T-t}$, i.e., $\forall t \in [0,T], \L(\rv{X}_t) = \L( X_{T-t})$.
\end{proposition}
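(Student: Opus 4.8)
The plan is to identify the marginal laws by comparing the Fokker--Planck (forward Kolmogorov) equations they satisfy, and then to conclude by uniqueness. Write $q_t$ for the law of $\rv X_t$. On the forward side, since $X_t = X + B_t$ solves $dX_t = dB_t$, the density $p_t$ satisfies the heat equation $\partial_t p_t = \tfrac12\Delta p_t$ for $t>0$, and therefore $\rv p_t := p_{T-t}$ satisfies the backward heat equation $\partial_t \rv p_t = -\tfrac12\Delta\rv p_t$ on $t\in[0,T)$. On the reverse side, applying Itô's formula to \eqref{eq:reverse_SDE} against smooth compactly supported test functions yields, in the weak sense, the Fokker--Planck equation
\[
\partial_t q_t \;=\; -\,\nabla\cdot\!\op q_t\,\nabla\log\rv p_t\cp \;+\; \tfrac12\Delta q_t .
\]

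Next I would verify that the ansatz $q_t = \rv p_t$ solves this equation. Using the elementary identity $\rv p_t\,\nabla\log\rv p_t = \nabla\rv p_t$, valid since $\rv p_t>0$, the right-hand side collapses to $-\nabla\cdot(\nabla\rv p_t) + \tfrac12\Delta\rv p_t = -\Delta\rv p_t + \tfrac12\Delta\rv p_t = -\tfrac12\Delta\rv p_t$, which is precisely the equation for $\rv p_t$ obtained in the first step. The initial data also match, since $q_0 = \L(\rv X_0) = \L(X_T) = p_T = \rv p_0$. Invoking uniqueness for this linear Fokker--Planck Cauchy problem then gives $q_t = \rv p_t$, i.e.\ $\L(\rv X_t) = \L(X_{T-t})$, as claimed.

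The step demanding the most care is the behaviour near $t=T$: when $\L(X)$ has no density the drift $\nabla\log\rv p_t = \nabla\log p_{T-t}$ blows up as $t\uparrow T$, so the argument cannot be run directly on the closed interval $[0,T]$. I would therefore carry it out on $[0,T-\delta]$ for arbitrary $\delta>0$: there, each slice $p_s$ with $s\ge\delta$ is the convolution of $\L(X)$ with a nondegenerate Gaussian, hence a smooth strictly positive density whose score $\nabla\log p_s$ is smooth in $x$, jointly continuous in $(s,x)$, one-sided Lipschitz (indeed $-\tfrac1s I \preccurlyeq \nabla^2\log p_s$), and of at most linear growth in $x$. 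This makes \eqref{eq:reverse_SDE} well-posed and legitimizes the weak Fokker--Planck derivation and its uniqueness on $[0,T-\delta]$; since $\delta>0$ is arbitrary the identity of marginals holds on $[0,T)$, and it extends to $t=T$ by weak continuity in $t$ of both $\L(\rv X_t)$ and $\L(X_{T-t})$. (Alternatively, this is the special case of the general time-reversal formula of \citet{andersonReversetimeDiffusionEquation1982} for the driftless diffusion $dX_t=dB_t$, as noted above; the elementary computation sketched here exploits the heat-equation scaling.)
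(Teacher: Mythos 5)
The paper does not actually prove Proposition~\ref{prop:reverse_SDE}: it states it as a known result, citing \citet{songScoreBasedGenerativeModeling2021} and explicitly flagging it as a special case of \citet{andersonReversetimeDiffusionEquation1982}, and provides no derivation in the appendix. So there is no internal proof to compare against; you have supplied a self-contained argument where the authors relied on a citation.

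Your Fokker--Planck verification is the standard and correct way to establish the marginal-law identity, and the computation checks out: since $p_t$ solves $\partial_t p_t = \tfrac12\Delta p_t$, the reversed density satisfies $\partial_t \rv p_t = -\tfrac12\Delta\rv p_t$, and substituting $q_t = \rv p_t$ into the Fokker--Planck equation $\partial_t q_t = -\nabla\cdot(q_t\nabla\log\rv p_t) + \tfrac12\Delta q_t$ and using $\rv p_t\nabla\log\rv p_t = \nabla\rv p_t$ indeed collapses the right-hand side to $-\tfrac12\Delta\rv p_t$. Your handling of regularity near $t=T$ is also the right instinct: on $[0,T-\delta]$ the score is smooth, globally Lipschitz in $x$ (this follows from the two-sided Hessian bound of Lemma~\ref{lemma:bound_spatial_regularity}, not just the one-sided semiconvexity bound you quote, which is what you actually want for well-posedness and Fokker--Planck uniqueness), so the argument is clean there. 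The one place you should be more careful is the final extension to $t=T$: ``weak continuity of $\L(\rv X_t)$'' at $t=T$ is not free, since it presupposes that the SDE solution exists and has a limit as $t\uparrow T$ despite the drift blowing up when $\L(X)$ is singular; you have shown $\L(\rv X_t)=\L(X_{T-t})\to\L(X)$, which identifies the limit of the marginals, but not that $\rv X_T$ itself is well defined. In the paper this is moot because the algorithms always stop early at $T-\epsilon$, so only $t\in[0,T-\epsilon]$ is ever used, which is exactly the range where your argument is airtight. Noting either that the proposition is used only for $t<T$, or imposing Assumption~1' so that $p_0$ itself is smooth, would close this residual gap.
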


We can also define a reverse ODE for the forward process, also known as the \emph{probability flow ODE} \citep{songScoreBasedGenerativeModeling2021}.

\begin{proposition}
\label{prop:reverse_ODE}
We define a process $(x_t)_{t\in[0,T]}$ by
\begin{equation}
\label{eq:reverse_ODE}
\left\{\begin{array}{rl}
    \frac{d x_t}{dt} &= \frac{1}{2} \nabla \log \rv p_{t}(x_t),\\
    x_{0} &= X_{T}.
\end{array}\right.
\end{equation}
where $\rv p_t = p_{T-t}$. Then, $x_t$ has the same marginal distributions as $X_{T-t}$, i.e., $\forall t \in [0,T], \L(x_t)= \L(X_{T-t})$.
\end{proposition}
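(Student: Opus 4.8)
The plan is to show that the time-reversed family of marginal densities $\rv p_t = p_{T-t}$ solves the continuity (Liouville) equation associated with the velocity field $b_t(x) := \tfrac12\nabla\log\rv p_t(x)$ of the ODE \eqref{eq:reverse_ODE}, and then to invoke uniqueness of solutions of the continuity equation, together with the matching initial datum $\L(x_0) = \L(X_T) = \rv p_0$, to conclude that $\L(x_t) = \rv p_t = p_{T-t}$ for every $t$.

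First I would record the Fokker--Planck equation of the forward process \eqref{eq:sde_forward}: since $X_t = X + B_t$, the density $p_t$ obeys the heat equation $\partial_t p_t = \tfrac12\Delta p_t$, and $p_t$ is smooth and strictly positive for every $t>0$, being a Gaussian convolution of $\L(X)$. Setting $q_t := p_{T-t}$ and differentiating in $t$ gives $\partial_t q_t = -\tfrac12\Delta q_t$. I would then rewrite the right-hand side in divergence form using $\nabla\log q_t = \nabla q_t/q_t$ (licit since $q_t>0$):
\[
-\tfrac12\Delta q_t = -\tfrac12\nabla\cdot(\nabla q_t) = -\nabla\cdot\bigl(q_t\,\tfrac12\nabla\log q_t\bigr) = -\nabla\cdot(q_t\, b_t).
\]
Hence $(q_t)$ solves $\partial_t q_t + \nabla\cdot(q_t b_t) = 0$, the continuity equation attached to the flow $\dot x_t = b_t(x_t)$ of \eqref{eq:reverse_ODE}. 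Conversely, letting $\Phi_t$ denote the flow map of \eqref{eq:reverse_ODE}, so that $x_t = \Phi_t(x_0)$ with $x_0 = X_T$, and setting $\mu_t := (\Phi_t)_\#\L(x_0)$, the standard theory of transport by a (locally Lipschitz, at-most-linearly-growing) velocity field shows that $\mu_t$ solves the same continuity equation with $\mu_0 = \L(X_T)$. Since $\mu_0 = p_T = q_0$ and that continuity equation has a unique solution for a given initial measure, $\mu_t = q_t$, i.e. $\L(x_t) = p_{T-t}$, which is the claim.

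The main obstacle is the regularity of $b_t = \tfrac12\nabla\log p_{T-t}$. For $t<T$ this is harmless: $p_s$ for $s>0$ is smooth, strictly positive, with Gaussian-type tails, so $b_t$ is smooth in space, locally Lipschitz, and of at most linear growth uniformly on time intervals bounded away from $T$, which suffices for existence/uniqueness of the flow and for well-posedness of the continuity equation. The delicate point is the limit $t\to T$ (i.e. $s\to 0$), where $\L(X)$ need not admit a density; this is handled either by proving the marginal identity on $[0,T)$ and passing to the limit, or by assuming $X$ has finite second moment so that $W_2(\L(x_t),p_{T-t})\to 0$. Alternatively, one can bypass these issues by deriving the statement from Proposition \ref{prop:reverse_SDE}: the Fokker--Planck equation of the reverse SDE \eqref{eq:reverse_SDE} reads $\partial_t\rv p_t = -\nabla\cdot(\rv p_t\nabla\log\rv p_t) + \tfrac12\Delta\rv p_t$, and since $\rv p_t\nabla\log\rv p_t = \nabla\rv p_t$ this collapses to $\partial_t\rv p_t = -\tfrac12\Delta\rv p_t = -\nabla\cdot(\rv p_t\, b_t)$, which is exactly the continuity equation for the ODE vector field $b_t$; hence the deterministic process of \eqref{eq:reverse_ODE} and the stochastic process of \eqref{eq:reverse_SDE} share the same family of marginal distributions.
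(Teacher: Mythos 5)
The paper does not prove this proposition: it is stated as a known fact and attributed to \citet{songScoreBasedGenerativeModeling2021}. Your continuity-equation argument is precisely the standard derivation from that literature and is correct: the heat equation $\partial_t p_t = \tfrac12\Delta p_t$ time-reverses to $\partial_t \rv p_t = -\tfrac12\Delta \rv p_t$, and writing $\Delta\rv p_t = \nabla\cdot(\rv p_t\nabla\log\rv p_t)$ turns this into the transport (Liouville) equation for the velocity field $b_t = \tfrac12\nabla\log\rv p_t$; since the flow pushforward of $\L(X_T)$ solves the same equation with the same initial datum, uniqueness closes the argument. Your alternative derivation — reading the same continuity equation off the Fokker--Planck equation of Proposition~\ref{prop:reverse_SDE} and observing that the diffusion term $\tfrac12\Delta\rv p_t$ cancels exactly half of the drift divergence $\nabla\cdot(\rv p_t\nabla\log\rv p_t)=\Delta\rv p_t$ — is also correct and is the tidiest route in this paper's context, since it reuses the already-stated SDE result rather than redoing the PDE computation from scratch. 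You are right to isolate the regularity caveat at $t=T$: Lemma~\ref{lemma:bound_spatial_regularity} gives a uniform Hessian bound on $[0,T-\epsilon]$ for every $\epsilon>0$, so on such subintervals $b_t$ is globally Lipschitz with linear growth and both the flow and the continuity equation are well-posed; the endpoint $t=T$ requires either that $\L(X)$ admits a density or a $W_2$-limiting argument as $t\to T^-$, which is consistent with the paper's systematic use of early stopping in all the quantitative results downstream of this proposition.
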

For this process, the only source of randomness lies in the initialization $X_T$. It is worth noting that there exists a continuum of reverse-time dynamics interpolating between the SDE and ODE formulations \citep{huangVariationalPerspectiveDiffusionBased2021,zhangDiffusionNormalizingFlow2021}, given for any $\lambda \geq 0$ by
$$
d\rv{X}_t = \frac{1+\lambda}{2}\nabla\log \rv p_{t}(\rv{X}_t)dt + \sqrt{\lambda}dW_t.
$$
The theoretical results presented in this work can be readily extended to this family of processes.

\subsection{Euler Discretization and Sampling Algorithms} Following \citet{songScoreBasedGenerativeModeling2021}, we obtain sampling algorithms by discretizing the SDE (\ref{eq:reverse_SDE}) or the ODE (\ref{eq:reverse_ODE}).
This requires knowing the \emph{score} $\nabla\log p_t$. In practice, it is learned with a neural network $s_\theta: \R\times\R^d\rightarrow\R^d$ of parameter $\theta$ using denoising score matching \citep{vincentConnectionScoreMatching2011}, which implicitly minimizes $\Vert s_\theta(t,X_t) - \nabla\log p_t (X_t)\Vert_{L_2}$.

Then, we fix a large time $T$ such that $X_T \approx B_T$ and a number of sampling steps $N$.
To avoid possible irregularity, an early stopping time $\epsilon\geq 0$ is added.
We write $h = \frac{T-\epsilon}{N}$ the step size, and for $n=0,\dots, N$, $t_n = hn$.
We start by sampling $\hat{X}_0 \sim \N(0,TI)$, then discretizing~(\ref{eq:reverse_SDE}) with the Euler–Maruyama method gives Algorithm~\ref{alg:SDE} (similar to DDPM from \citet{hoDenoisingDiffusionProbabilistic2020}).
Similarly, we get Algorithm~\ref{alg:ODE} (similar to DDIM from \citet{songDenoisingDiffusionImplicit2021}) by discretizing~(\ref{eq:reverse_ODE}) with the Euler method. As (\ref{eq:reverse_ODE}) is an ODE, it is common to use higher-order Runge–Kutta methods, and in particular Heun second-order method \citep{jolicoeur-martineauGottaGoFast2021,karrasElucidatingDesignSpace2022}, to accelerate convergence with respect to the step size. This is summed up in Algorithm~\ref{alg:Heun}.

\noindent\begin{minipage}{0.49\textwidth}
\begin{algorithm}[H]
\caption{Sampling (SDE -- Euler-Maruyama)}
\label{alg:SDE}
\textbf{Initialization:} $s_\theta, T, N, \epsilon$

\textbf{Set:} $h = \frac{T-\epsilon}{N}$, and for $n = 0, \dots, N$, $t_n = nh$

\quad--  $\hat{X_0} \leftarrow \N(0,T\cdot I)$

\textbf{For $n = 1, \dots, N$:}

\quad-- $\hat{X}_n \leftarrow \hat{X}_{n-1} + h s_\theta(T-t_{n-1},\hat{X}_{n-1})+\N(0,hI)$

\textbf{Return:} $\hat{X}_N$
\end{algorithm}
\end{minipage}
\hfill
\begin{minipage}{0.49\textwidth}
\begin{algorithm}[H]
\caption{Sampling (ODE -- Euler)}
\label{alg:ODE}
\textbf{Initialization:} $s_\theta, T, N,\epsilon$

\textbf{Set:} $h = \frac{T-\epsilon}{N}$, and for $n = 0, \dots, N$, $t_n = nh$

\quad--  $\hat{X_0} \leftarrow \N(0,T\cdot I)$

\textbf{For $n = 1, \dots, N$:}

\quad-- $\hat{X}_n \leftarrow \hat{X}_{n-1} + \frac{h}{2}s_\theta(T-t_{n-1},\hat{X}_{n-1})$

\textbf{Return:} $\hat{X}_N$
\end{algorithm}
\end{minipage}

\begin{center}
\begin{minipage}{0.65\textwidth}
\begin{algorithm}[H]
\caption{Sampling (ODE -- Heun)}
\label{alg:Heun}
\textbf{Initialization:} $s_\theta, T, N,\epsilon$

\textbf{Set:} $h = \frac{T-\epsilon}{N}$, and for $n = 0, \dots, N$, $t_n = nh$

\quad--  $\hat{X_0} \leftarrow \N(0,T\cdot I)$

\textbf{For $n = 1, \dots, N$:}

\quad-- $\hat{Y}_n \leftarrow  \hat{X}_{n-1} + \frac{h}{2} s_\theta(T-t_{n-1},\hat{X}_{n-1})$

\quad-- $\hat{X}_n \leftarrow  \hat{X}_{n-1} + \frac{h}{4} \op s_\theta(T-t_{n-1},\hat{X}_{n-1}) + s_\theta(T-t_{n},\hat{Y}_{n})\cp$

\textbf{Return:} $\hat{X}_N$
\end{algorithm}
\end{minipage}
\end{center}

\section{Controlling the Different Sources of Error in Diffusion Models}

We work in the following framework:
\begin{center}
\begin{minipage}{0.65\textwidth}
\begin{itemize}
\item[\textbf{Assumption 1.}] The target distribution has support in $B(0,R)$, i.e., $X\in B(0,R)$ almost surely.
\end{itemize}
\end{minipage}
\end{center}

Assuming a bounded support allows to control conditional moments of the probability distribution, that appear when computing the score $\nabla \log p_t$ and its derivatives with respect to space and time. Beside this, we make no other assumption on the regularity of the target distribution.
In particular, our framework can be applied under the \emph{manifold hypothesis}, for which the data lies on a low-dimensional manifold and does not admit a density with respect to the Lebesgue measure.
Our result can also be used to tackle a more general framework, proposed by \citet{saremiChainLogConcaveMarkov2023}, also used by \citet{greniouxStochasticLocalizationIterative2024} and \citet{dingCharacteristicLearningProvable2025}:
\begin{center}
\begin{minipage}{0.65\textwidth}
\begin{itemize}
\item[\textbf{Assumption 1'.}] There exists a random variable $Z$ and $\tau >0$ such that $Z\in B(0,R)$ almost surely and ${X = Z + \N(0,\tau I).}$
\end{itemize}
\end{minipage}
\end{center}
With this assumption, early stopping is not needed. Considering a process $Z_t$ defined by (\ref{eq:sde_forward}) starting from $Z$ rather that $X$, we have $Z_\tau = X$. 
Running the sampling algorithm for data distribution $X$ without early stopping ($\epsilon = 0$) corresponds to using early stopping  with $\epsilon = \tau$ for $Z_t$, as $X_0 = Z_\tau = X$. Note that we introduce the process $Z_t$ only for the purpose of theoretical analysis, while the algorithm is in fact run on $X_t$ with $\epsilon = 0$.

There are different sources of error that arises from approximating the true reverse processes (\ref{eq:reverse_SDE}) and (\ref{eq:reverse_ODE}) by Algorithms~\ref{alg:SDE}-\ref{alg:Heun}: the \emph{discretization error}, the \emph{initialization error} (using $B_T$ instead of $X_T$), the \emph{early stopping error} (stopping the reverse process at time $T-\epsilon$ instead of $T$), the \emph{score approximation error} (using $s_\theta$ instead of $\nabla\log p_t$). Moreover, the propagation of errors from previous steps to subsequent ones needs to be controlled. In this section, we present results addressing these different types of error, except for the score approximation, which we incorporate as an an exogenous factor and which is discussed in Section~\ref{sct:discussion_learning_score}.

\subsection{Control of the Spatial Regularity of the Score and Propagation of Errors}
We want to control the spatial regularity of the score, which plays an important role in the control of the propagation of errors.
\begin{lemma}
\label{lemma:bound_spatial_regularity}
Under Assumption 1, for all $t>0$ and $x\in \R^d$, we have
\begin{equation}
\label{eq:bound_hessian}
- \frac{1}{t} I\preccurlyeq  \nabla^2\log p_t(x) \preccurlyeq  \op - \frac{1}{t} + \frac{R^2}{t^2}\cp I.
\end{equation}
In particular, $\Vert\nabla^2\log p_t(x)\Vert_\textnormal{op} \leq 
C_t = \max \op \frac{1}{t}, \left|\frac{R^2}{t^2}- \frac{1}{t}\right|\cp.$ Moreover, denoting $f_{t,h}: x \mapsto x + h\nabla\log p_t(x)$, for $h\leq t$, $f_{t,h}$ is $L_{t,h}$-Lipchitz-continuous, with
\begin{equation}
\label{eq:Lipchitz_constant}
L_{t,h} = 1 + h\op\frac{R^2}{t^2} - \frac{1}{t}\cp.
\end{equation}
In particular, for $t> t^* = R^2$, $f_{t,h}$ is contractive ($L_{t,h} < 1$).
\end{lemma}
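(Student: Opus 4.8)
The plan is to exploit that $p_t$ is a Gaussian convolution, $p_t = p_X * \N(0,tI)$, hence $C^\infty$ and strictly positive everywhere, with score given by Tweedie's formula $\nabla\log p_t(x) = \tfrac1t\op\E[X\mid X_t = x] - x\cp$. First I would differentiate once more --- differentiation under the integral sign being legitimate since $X$ is bounded and the Gaussian kernel has all polynomial moments --- to obtain the classical identity
\[
\nabla^2\log p_t(x) = -\tfrac1t I + \tfrac{1}{t^2}\cov\op X\mid X_t = x\cp.
\]
Concretely, writing $q_x$ for the law of $X$ conditional on $X_t=x$, whose density against $p_X$ is proportional to $y\mapsto \exp(-\Vert x-y\Vert^2/(2t))$, one gets $\partial_{x_i}\log p_t(x) = -\tfrac1t\op x_i - \int y_i\, dq_x(y)\cp$; differentiating this in $x_j$ produces $-\tfrac1t\delta_{ij}$ plus exactly the $(i,j)$ entry of $\tfrac{1}{t^2}\cov(X\mid X_t=x)$.

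Next I would bound the conditional covariance in Loewner order. The lower bound $0\preccurlyeq\cov(X\mid X_t=x)$ is immediate and gives the left inequality of~(\ref{eq:bound_hessian}). For the upper bound, since $q_x$ is absolutely continuous with respect to $p_X$, Assumption~1 forces $q_x$ to be supported in $B(0,R)$; hence for every unit vector $u$, $u^\top\cov(X\mid X_t=x)u \le \int (u^\top y)^2\, dq_x(y) \le \int \Vert y\Vert^2\, dq_x(y)\le R^2$, i.e.\ $\cov(X\mid X_t=x)\preccurlyeq R^2 I$, which yields the right inequality. The operator-norm bound follows because $\nabla^2\log p_t(x)$ is symmetric with spectrum contained in $[-\tfrac1t,\,-\tfrac1t+\tfrac{R^2}{t^2}]$, so its operator norm equals the largest absolute eigenvalue, which is at most $\max\op\tfrac1t,\ \big|\tfrac{R^2}{t^2}-\tfrac1t\big|\cp = C_t$.

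For the statement on $f_{t,h}$, note that $f_{t,h}$ is $C^1$ with Jacobian $\nabla f_{t,h}(x) = I + h\nabla^2\log p_t(x)$, so~(\ref{eq:bound_hessian}) gives $\op1-\tfrac ht\cp I\preccurlyeq \nabla f_{t,h}(x)\preccurlyeq L_{t,h} I$ with $L_{t,h} = 1 + h\op\tfrac{R^2}{t^2}-\tfrac1t\cp$. When $h\le t$ the left-hand side is positive semidefinite, so $\nabla f_{t,h}(x)$ is symmetric positive semidefinite with largest eigenvalue at most $L_{t,h}$; thus $\Vert\nabla f_{t,h}(x)\Vert_\textnormal{op}\le L_{t,h}$ uniformly in $x$, and integrating along the segment between two points (mean value inequality) shows $f_{t,h}$ is $L_{t,h}$-Lipchitz. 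Finally, since $h>0$, $L_{t,h}<1$ exactly when $\tfrac{R^2}{t^2}<\tfrac1t$, i.e.\ $t>R^2 = t^*$, which is the contractivity claim.

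I expect the main obstacle to be the first step --- rigorously deriving the Hessian identity, i.e.\ justifying the two successive differentiations under the integral and tracking the terms that assemble into the conditional covariance. Once that identity is established, the rest is elementary: positivity of a covariance matrix, the bound $\E\big[\Vert Y\Vert^2\big]\le R^2$ for $Y$ supported in $B(0,R)$, and routine spectral and Loewner-order manipulations.
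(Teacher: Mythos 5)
Your proof is correct and follows essentially the same route as the paper's: the paper likewise derives the identity $\nabla^2\log p_t(x) = -\tfrac1t I + \tfrac{1}{t^2}\cov(X\mid X_t=x)$ (outsourced to a helper lemma on conditional-moment expressions), then bounds the conditional covariance between $0$ and $R^2 I$ using the bounded support, and reads off the Lipschitz constant of $f_{t,h}$ from the Loewner bounds on its Jacobian $I + h\nabla^2\log p_t(x)$. The only difference is cosmetic: you spell out the differentiation under the integral sign inline, where the paper defers it to an appendix lemma.
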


From these bounds, we can distinguish three different time regimes, as illustrated in Figure~\ref{fig:key_times_t}.
Near time $t=0$, without regularity assumption on the distribution of $X$, we cannot control the regularity of $\nabla^2 \log p_t$ and the bounds diverge. 
The \emph{early stopping} time $\epsilon$ is introduced to circumvent this issue. 
Then for $t> t^* =R^2$, $\frac{R^2}{t^2} - \frac{1}{t} <0$, meaning that all eigenvalues of $\nabla^2\log p_t(x)$ are negative, hence $p_t$ is strongly log-concave. 
This also corresponds to a change in regime for $f_{t,h}$, the function by which errors are propagated, which becomes contractive ($L_{t,h}<1$). We use this observation to limit the accumulation of error, as done by \citet{gentiloni-silveriLogConcavityScoreRegularity2025}, with exact formulas to bound how these coefficients multiply given in Appendix~\ref{sct:proofs:bound_propagation_error}.

\begin{figure}
    \centering
	\includegraphics[width=\linewidth]{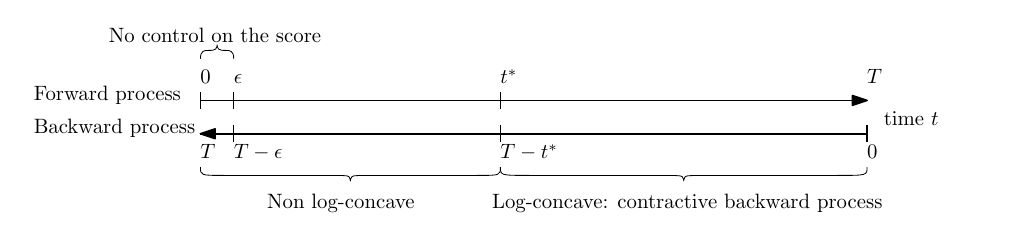}
    \vspace{-.8cm}
	\caption{Illustration of the 3 different characteristic times in the diffusion process.}
	\label{fig:key_times_t}
\end{figure}

\textbf{Remark.} In many analysis of diffusion models in Wasserstein distance \citep{tangContractiveDiffusionProbabilistic2024,brunoDiffusionbasedGenerativeModels2025,strasmanAnalysisNoiseSchedule2024,yuAdvancingWassersteinConvergence2025,gaoWassersteinConvergenceGuarantees2025} authors make the assumption that the target distribution is log-concave, hence they are only looking at the part of the diffusion process for $t\geq t^*$, which is the easy part of sampling as the backward process is contractive. 

\textbf{Tightness.} A toy example \citep[see, e.g.,][]{saremiChainLogConcaveMarkov2023} is to take $d=1$ and $X$ a mixture of two Dirac masses, $X = \frac{\delta_{-R} + \delta_{-R}}{2}$. In particular $X \in B(0,R)$ almost surely, and we can compute
$$
\nabla^2 \log p_t(x) = -\frac{1}{t} + \frac{R^2}{t^2}\op\cosh\op\frac{xR}{t}\cp\cp^{-2}.
$$
For $x=0$, we get $\nabla^2 \log p_t(0) = -\frac{1}{t} + \frac{R^2}{t^2}$ and for $x\rightarrow\infty$, $\nabla^2 \log p_t(x) \rightarrow -\frac{1}{t}$, hence all bounds in Lemma~\ref{lemma:bound_spatial_regularity} are tight.

\subsection{Control of the Discretization Error}
\label{sct:discretization_error}

In this section, we take a look at the error induced by the discretization of the continuous processes (\ref{eq:reverse_SDE}) and (\ref{eq:reverse_ODE}). Note that related work on Wasserstein guarantees for the probability flow ODE \citep[][Assumption 3.2]{gaoConvergenceAnalysisGeneral2025} assumes that the score is Lipchitz-continuous with respect to time. We believe that such an assumption, in addition to being unverifiable, is not needed as it can be directly deduced from assumptions on the target distribution.

Integrating the reverse ODE (\ref{eq:reverse_ODE}) between $t$ and $t+h$ gives 
$$
x_{t+h} = x_t +  \frac{1}{2}\int_{t}^{t+h} \nabla\log \rv p_s(x_s)ds.
$$
The Euler discretization replaces the integral by its approximation by Euler method, i.e., for one step, $h\nabla\log \rv p_{t}(x_{t})$.
The corresponding error is controlled by Lemma~\ref{lemma:bound_discretization_ODE} and scales as $O(h^2)$ similar to the deterministic case.
\begin{lemma}
\label{lemma:bound_discretization_ODE}
Under Assumption 1, for $\epsilon\leq R^2$, and $t,h\geq 0$, $t+h\leq T - \epsilon$, we have
$$
\left\Vert \int_{t}^{t+h} \nabla\log \rv p_s(x_s)ds - h\nabla\log \rv p_{t}(x_{t}) \right\Vert_{L_2} \leq2\sqrt{d}\frac{R^3}{\epsilon^{3/2}}h \int_{t}^{t+h} \frac{1}{(T-u)^{3/2}} du.
$$
\end{lemma}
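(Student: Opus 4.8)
The plan is to use a first-order Taylor expansion of the integrand $g(s) := \frac{1}{2}\nabla\log\rv p_s(x_s) = \frac{d x_s}{ds}$ around $s=t$, which gives, after integrating over $[t,t+h]$,
$$
\int_t^{t+h} g(s)\,ds - h\, g(t) = \int_t^{t+h}\!\!\int_t^{s} g'(u)\,du\,ds,
$$
so that $\bigl\Vert \int_t^{t+h} g(s)\,ds - h\,g(t)\bigr\Vert_{L_2} \leq \frac{h^2}{2}\sup_{u\in[t,t+h]}\Vert g'(u)\Vert_{L_2}$. Since the left-hand side of the statement is $2\bigl(\int_t^{t+h} g(s)\,ds - h\,g(t)\bigr)$, it then suffices to show $\sup_u \Vert g'(u)\Vert_{L_2} \leq \sqrt{d}\,R^3/\epsilon^3$, i.e. that the second time-derivative of $x_s$ along the flow is bounded in $L_2$ by $\sqrt{d}\,R^3/\epsilon^3$ on the relevant time interval. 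First I would record the time-reparametrization carefully: writing $\rv p_s = p_{T-s}$ and $x_s$ the flow, I have $\frac{d}{ds}x_s = \tfrac12\nabla\log p_{T-s}(x_s)$, and by the chain rule
$$
g'(s) = \frac{d}{ds}\left[\tfrac12\nabla\log p_{T-s}(x_s)\right] = -\tfrac12\,\partial_t\!\left(\nabla\log p_t\right)\big|_{t=T-s}(x_s) + \tfrac12\nabla^2\log p_{T-s}(x_s)\cdot\tfrac12\nabla\log p_{T-s}(x_s).
$$

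The two terms must each be controlled. For the second term, Lemma~\ref{lemma:bound_spatial_regularity} already gives $\Vert\nabla^2\log p_t(x)\Vert_\textnormal{op} \leq C_t \leq R^2/t^2$ once $t \geq \epsilon$ (checking that $R^2/t^2 \geq 1/t$ there, which holds since $t \leq T-\epsilon$ and $\epsilon \leq R^2$... more carefully, one uses $C_t = \max(1/t, |R^2/t^2 - 1/t|) \leq R^2/t^2$ whenever $t \leq 2R^2$, and otherwise $C_t = 1/t \le R^2/t^2$ as well when $t\le R^2$; in all cases $C_t\le R^2/t^2$), and one needs a bound on $\Vert\nabla\log p_t(X_t)\Vert_{L_2}$. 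The key classical identity is $\nabla\log p_t(x) = \tfrac1t\bigl(\E[X\mid X_t=x] - x\bigr)$, so $\nabla\log p_t(X_t) = \tfrac1t\bigl(\E[X\mid X_t] - X_t\bigr) = -\tfrac1t(X_t - \E[X\mid X_t])$, whose conditional expectation squared is $\tfrac1{t^2}\E[\Vert X_t - \E[X\mid X_t]\Vert^2]$; since $X_t = X + B_t$ with $X\in B(0,R)$ and $B_t\bot X$, the conditional variance is at most $\mathrm{tr}\,\mathrm{cov}(B_t\mid \cdot) \le dt$ (the conditional variance of $X_t$ given $X_t$ is zero, but given the posterior this is the posterior variance which is $\le$ the variance of the denoising target; cleanly, $\E\Vert\nabla\log p_t(X_t)\Vert^2 = \tfrac1{t^2}\E\Vert X - \E[X\mid X_t]\Vert^2 \le \tfrac1{t^2}\cdot 4R^2$ since $X,\E[X\mid X_t]\in B(0,R)$ — actually the diameter bound gives $\le 4R^2$, but using $\E\Vert X-\E[X\mid X_t]\Vert^2 \le \E\Vert X\Vert^2 \le R^2$ is tighter). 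I would use $\Vert\nabla\log p_t(X_t)\Vert_{L_2} \le R/t$. For the first, time-derivative term, I need $\partial_t\nabla\log p_t = \nabla\,\partial_t\log p_t$; using the heat/Fokker--Planck equation $\partial_t p_t = \tfrac12\Delta p_t$ one gets $\partial_t\log p_t = \tfrac12\bigl(\Delta\log p_t + \Vert\nabla\log p_t\Vert^2\bigr)$, hence $\partial_t\nabla\log p_t = \tfrac12\nabla\Delta\log p_t + \nabla^2\log p_t\cdot\nabla\log p_t$, and this requires a bound on the third spatial derivative $\nabla\Delta\log p_t = \nabla^3\log p_t$ contracted appropriately.

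The main obstacle is exactly this bound on the third-order spatial derivative of $\log p_t$ (equivalently, on $\partial_t\nabla\log p_t$ when evaluated along the process). This is where Assumption~1 does the work: all derivatives of $\log p_t$ can be written via the Gaussian convolution as moments/cumulants of the posterior distribution of $X$ given $X_t$ — for instance $\nabla^3\log p_t$ is, up to combinatorial factors and powers of $1/t$, the third cumulant of the posterior — and because the posterior is supported in $B(0,R)$, its centered moments of order $k$ are bounded by $(2R)^k$ (or $R^k$ by a centering argument). This yields a bound of the form $\Vert\partial_t\nabla\log p_t(X_t)\Vert_{L_2} \lesssim \sqrt{d}\,R^3/t^3$ after combining the cumulant estimate with $\Vert\nabla^2\log p_t\Vert_\textnormal{op}\le R^2/t^2$ and $\Vert\nabla\log p_t\Vert_{L_2}\le R/t$ in the Fokker--Planck identity (the $\sqrt d$ comes from the Laplacian/trace contraction, which is a sum of $d$ terms bounded coordinatewise, giving a $\sqrt d$ after the $L_2$ norm — or from the Frobenius-versus-operator discrepancy). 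Plugging $\sup_{u\in[t,t+h]}\Vert g'(u)\Vert_{L_2} \le \tfrac12\bigl(\sqrt d\,R^3/\epsilon^3 + (R^2/\epsilon^2)(R/\epsilon)\bigr)$ — where I use that $s\in[t,t+h]$ implies $T-s \ge \epsilon$ so all denominators are $\ge \epsilon$ — and being slightly careful to show the whole thing collapses to $\le \sqrt d\,R^3/\epsilon^3$ after multiplying by $h^2$ (here one absorbs the factor of $2$ from the $g$-versus-$\tfrac12\nabla\log$ rescaling and the $\tfrac12$ from Taylor, and bounds $1 \le \sqrt d$), completes the argument. I would double-check the constant bookkeeping since the statement has the clean constant $\sqrt d$ with no extra numerical factor, which suggests the intended route controls $\Vert g'\Vert_{L_2} = \tfrac12\Vert \tfrac{d^2}{ds^2}x_s\Vert_{L_2}$ directly by $\tfrac12\sqrt d R^3/\epsilon^3$ and then uses $h^2/2 \cdot 2 \cdot (\tfrac12\cdots)$... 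I would verify whether the cleanest path is to bound $\Vert\tfrac{d^2}{ds^2}x_s\Vert_{L_2}$ by $2\sqrt d R^3/\epsilon^3$ directly via the two-term decomposition above, each term bounded by $\sqrt d R^3/\epsilon^3$, giving the stated inequality exactly after Taylor's $h^2/2$.
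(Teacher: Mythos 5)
Your plan matches the paper's route exactly: Taylor-expand to reduce the lemma to bounding $\bigl\Vert\frac{d}{ds}\nabla\log\rv p_s(x_s)\bigr\Vert_{L_2}$, then use the chain rule together with the Fokker--Planck identity $\partial_t\log p_t = \tfrac12\Vert\nabla\log p_t\Vert^2 + \tfrac12\Delta\log p_t$ to write this derivative along the flow as $-\tfrac12\nabla^2\log\rv p_s\cdot\nabla\log\rv p_s - \tfrac12\nabla\Delta\log\rv p_s$, and control each term via posterior-moment identities (the paper's Lemma~\ref{lemma:score_moment}). So this is essentially the same approach.

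A few details need repair. \emph{(i)} Your intermediate claim ``$C_t\le R^2/t^2$ in all cases'' is false for $t>R^2$, where $C_t=1/t>R^2/t^2$; what you actually need and later use is the uniform bound $C_{T-s}\le R^2/\epsilon^2$ for $s\le T-\epsilon$ under $\epsilon\le R^2$, which holds and is what the paper invokes. \emph{(ii)} Your bound $\Vert\nabla\log p_t(X_t)\Vert_{L_2}\le R/t$ (via $X-\E[X\mid X_t]$ and bounded support) is a valid alternative to the paper's $\sqrt{d/t}$ (via $-\tfrac1t\E[B_t\mid X_t]$ and Jensen); be aware that in the paper the $\sqrt d$ in the final constant actually enters through this Hessian--score term, not through $\nabla\Delta\log p_t$ as you surmise, while with your $R/t$ bound the Hessian--score term has no $\sqrt d$ at all and $\sqrt d$ just absorbs numerical slack via $1\le\sqrt d$. \emph{(iii)} The piece you correctly single out as the main obstacle — an explicit bound on $\Vert\nabla\Delta\log p_t(X_t)\Vert_{L_2}$ — is not carried out; the paper proves $\nabla\Delta\log p_t(x)=\tfrac1{t^3}\E[\Vert X\Vert^2(X-\E[X\mid X_t=x])\mid X_t=x]-\tfrac2{t^3}\cov(X\mid X_t=x)\E[X\mid X_t=x]$ and bounds it by $3R^3/(T-s)^3\le 3R^3/\epsilon^3$ using $\Vert X\Vert\le R$; you should do this computation rather than gesture at cumulants, since the constant $3$ is what makes the bookkeeping close. \emph{(iv)} The arithmetic you flag as unverified does close once (iii) is in place: $\bigl\Vert\frac{d}{ds}\nabla\log\rv p_s(x_s)\bigr\Vert_{L_2}\le \tfrac12(R^3/\epsilon^3)+\tfrac12(3R^3/\epsilon^3)=2R^3/\epsilon^3\le 2\sqrt d\,R^3/\epsilon^3$, and $h^2/2$ from Taylor gives the stated bound; working with $g(s)=\frac{dx_s}{ds}=\tfrac12\nabla\log\rv p_s(x_s)$ as you do only introduces reciprocal factors of $2$ and $\tfrac12$ which cancel, and is a source of the off-by-$2$ slips in your final paragraph.
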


Similarly, we tackle Heun discretization by approximating the integral with the trapezoidal rule,  i.e., for one step, $\frac{h}{2}\op\nabla\log \rv p_{t}(x_{t})+\nabla\log \rv p_{t+h}(x_{t+h})\cp$.
The corresponding error is controlled by Lemma~\ref{lemma:bound_discretization_Heun}. Once again, the dependency in $O(h^3)$ is the same as the deterministic case, and gives a better convergence rate in the step size. However, it involves controlling derivatives of higher order,  hence a larger multiplicative constant.
\begin{lemma}
\label{lemma:bound_discretization_Heun}
Under Assumption 1, for $\epsilon\leq R^2$, and $t,h\geq 0$, $t+h\leq T - \epsilon$, we have
\begin{multline*}
\left\Vert \int_{t}^{t+h} \nabla\log \rv p_s(x_s)ds - \frac{h}{2}\op\nabla\log \rv p_{t}(x_{t})+\nabla\log \rv p_{t+h}(x_{t+h})\cp \right\Vert_{L_2} \\\leq 33d\frac{R^5}{\epsilon^{5/2}}h^2\int_{t}^{t+h} \frac{1}{(T-u)^{5/2}} du. 
\end{multline*}
\end{lemma}

Finally, integrating the reverse SDE (\ref{eq:reverse_SDE}) between $t$ and $t+h$ gives 
$$
\rv X_{t+h} = \rv X_t + \int_t^{t+h}  \nabla\log\rv{p}_s(\rv X_s)ds + \underbrace{\int_t^{t+h} dW_s}_{\sim \N(0,hI)}.
$$
The Euler-Maruyama discretization approximates the first integral above with Euler method, leading to 
$$
\hat X_{t+h} = \rv X_t + h\nabla\log\rv{p}_t(\rv X_t) + Z,
$$
with $Z \sim  \N(0,hI)$. Here we are free to choose a representation of $Z$ to get a particular coupling between the true process and its approximation. We take $Z = \int_t^{t+h} dW_s\sim \N(0,hI)$ such that the Gaussian noises cancel out when computing the difference. This leads to the error term controlled by Lemma~\ref{lemma:bound_discretization_SDE}.
\begin{lemma}
\label{lemma:bound_discretization_SDE}
Under Assumption 1, for $\epsilon\leq R^2$, and  $t,h>0$, $t + h \leq T-\epsilon$, we have
$$
 \int_t^{t+h} \nabla\log\rv{p}_s(\rv X_s)ds - h\nabla\log\rv{p}_t(\rv X_t)= \int_t^{t+h}\int_t^s \nabla^2\log \rv{p}_u(\rv X_u)\cdot dW_u ds,
$$
and
$$
\left\Vert \int_t^{t+h} \nabla\log\rv{p}_s(\rv X_s )ds - h\nabla\log\rv{p}_t(\rv X_t)\right\Vert_{L_2}\leq \frac{\sqrt{2d}}{2}\frac{R^2}{\epsilon} \sqrt{h} \int_t^{t+h}\frac{1}{T-s}ds.
$$
\end{lemma}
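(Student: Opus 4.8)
The plan is to express the discretization error as a stochastic (Itô) integral and then bound its $L_2$-norm using the Itô isometry together with the Hessian bound from Lemma~\ref{lemma:bound_spatial_regularity}. First I would apply Itô's formula to the process $s \mapsto \nabla\log\rv p_s(\rv X_s)$ along the reverse SDE~(\ref{eq:reverse_SDE}). A subtlety here is that $\nabla\log\rv p_s$ depends on time as well as space, so the full Itô expansion will a priori contain a time-derivative term $\partial_s(\nabla\log\rv p_s)$, a drift term from the $\nabla\log\rv p_s(\rv X_s)\,ds$ part of the dynamics, a second-order term $\tfrac12\Delta(\nabla\log\rv p_s)$ from the quadratic variation of $W$, and the martingale term $\nabla^2\log\rv p_s(\rv X_s)\cdot dW_s$. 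The key cancellation to exploit is that $\rv p_s = p_{T-s}$ and $p_t$ solves the heat equation $\partial_t p_t = \tfrac12\Delta p_t$; plugging this into the Fokker--Planck/Kolmogorov identity for the reverse process shows that all the finite-variation (drift) terms in the Itô expansion of $\nabla\log\rv p_s(\rv X_s)$ cancel exactly, because $\rv X_s$ has marginals $p_{T-s}$ and the reverse drift is precisely tuned to that. Hence $d\big(\nabla\log\rv p_s(\rv X_s)\big) = \nabla^2\log\rv p_s(\rv X_s)\cdot dW_s$, which after integrating from $t$ to $s$ and then from $t$ to $t+h$ (Fubini for stochastic integrals) gives the claimed identity
$$
\int_t^{t+h}\nabla\log\rv p_s(\rv X_s)\,ds - h\,\nabla\log\rv p_t(\rv X_t) = \int_t^{t+h}\int_t^s \nabla^2\log\rv p_u(\rv X_u)\cdot dW_u\,ds.
$$

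Next I would bound the $L_2$-norm of the right-hand side. Swapping the order of integration, $\int_t^{t+h}\int_t^s (\cdot)\,dW_u\,ds = \int_t^{t+h}(t+h-u)\,\nabla^2\log\rv p_u(\rv X_u)\cdot dW_u$, which is a single Itô integral. By the Itô isometry (applied coordinatewise, or via the Frobenius norm),
$$
\Big\Vert \int_t^{t+h}(t+h-u)\,\nabla^2\log\rv p_u(\rv X_u)\cdot dW_u\Big\Vert_{L_2}^2 = \int_t^{t+h}(t+h-u)^2\,\E\big[\Vert\nabla^2\log\rv p_u(\rv X_u)\Vert_\textnormal{F}^2\big]\,du.
$$
Then I would use Lemma~\ref{lemma:bound_spatial_regularity}: for $u$ in this range, the argument of $\rv p_u$ is $p_{T-u}$ with $T-u \geq \epsilon$, so $\Vert\nabla^2\log\rv p_u\Vert_\textnormal{op} \leq C_{T-u} \leq R^2/\epsilon^2$ (using $\epsilon \leq R^2$ so that the dominant term is $R^2/t^2$), hence $\Vert\nabla^2\log\rv p_u\Vert_\textnormal{F}^2 \leq d\,(R^2/\epsilon^2)^2$. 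Pulling this out,
$$
\Big\Vert\cdots\Big\Vert_{L_2}^2 \leq d\,\frac{R^4}{\epsilon^4}\int_t^{t+h}(t+h-u)^2\,du = d\,\frac{R^4}{\epsilon^4}\cdot\frac{h^3}{3},
$$
and taking square roots yields $\sqrt{d}\,\tfrac{1}{\sqrt 3}\tfrac{R^2}{\epsilon^2}h^{3/2} \leq \sqrt d\,\tfrac23\tfrac{R^2}{\epsilon^2}h^{3/2}$ since $1/\sqrt 3 \leq 2/3$.

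The main obstacle is the first step: rigorously justifying the cancellation of the drift terms in the Itô expansion of $\nabla\log\rv p_s(\rv X_s)$. This requires knowing that $\rv p_s$ is smooth enough in space and time for Itô's formula to apply (which follows from Assumption~1 via the mollifying effect of the heat semigroup for $s$ bounded away from $T$, i.e., $T-s \geq \epsilon > 0$), and carefully verifying the PDE identity $\partial_s\nabla\log\rv p_s + \nabla\log\rv p_s\cdot\nabla(\nabla\log\rv p_s) + \tfrac12\Delta\nabla\log\rv p_s = 0$ that encodes the drift cancellation — essentially differentiating the heat equation / Fokker--Planck relation. I would also need integrability to ensure the stochastic integrals are genuine martingales (so that the Itô isometry applies and there is no spurious expectation term), which again follows from the uniform-in-space Hessian bound on $[\epsilon, T]$ and boundedness of lower moments of $\rv X_u$ under Assumption~1. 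Everything after the identity is a routine Itô-isometry computation.
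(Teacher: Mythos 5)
Your derivation of the identity is the same as the paper's: apply Itô's formula to $s \mapsto \nabla\log\rv p_s(\rv X_s)$, use the heat-equation/Fokker--Planck relation $\partial_s\nabla\log\rv p_s = -\nabla^2\log\rv p_s\cdot\nabla\log\rv p_s - \tfrac12\nabla\Delta\log\rv p_s$ to cancel the drift and second-order terms against the reverse-SDE drift, and conclude $d(\nabla\log\rv p_s(\rv X_s)) = \nabla^2\log\rv p_s(\rv X_s)\cdot dW_s$. For the $L_2$ bound you take a mildly different and in fact sharper route. The paper pulls the outer $ds$-integral out of the $L_2$ norm by Minkowski's integral inequality and then applies Itô's isometry to the inner stochastic integral, yielding $\sqrt{d}\tfrac{R^2}{\epsilon^2}\int_t^{t+h}\sqrt{s-t}\,ds = \tfrac{2}{3}\sqrt{d}\tfrac{R^2}{\epsilon^2}h^{3/2}$. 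You instead apply stochastic Fubini to collapse the iterated integral into the single Itô integral $\int_t^{t+h}(t+h-u)\nabla^2\log\rv p_u(\rv X_u)\cdot dW_u$ and then use the isometry once, giving the constant $\tfrac{1}{\sqrt{3}} < \tfrac{2}{3}$ before you round it back up to match the stated bound. Both arguments rely on exactly the same Hessian bound $\Vert\nabla^2\log\rv p_u\Vert_\textnormal{F}^2 \le d R^4/\epsilon^4$ from Lemma~\ref{lemma:bound_spatial_regularity}; your version buys a $\sqrt{4/3}$ improvement in the constant (which could be kept if one wanted the tighter statement), at the cost of invoking the stochastic Fubini theorem, which the paper avoids. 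Your remarks about integrability and the martingale property needed to justify Itô's formula and the isometry on $\{T-s\ge\epsilon\}$ are also the right ones, and are handled by the uniform Hessian bound exactly as you say.
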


\subsection{Control of Initialization Error}
For the control of the initialization error, we start by giving a result in a more general setting, that can be applied to other time parametrizations of the diffusion process found in the literature.
\begin{proposition}
\label{prop:init_error}
Let $Y = \alpha X + \beta Z$, with $\alpha,\beta > 0$, $Z\sim \N(0,I)$ and $\hat{Y} = \gamma Z'$ with  $\gamma \geq \beta$ and $Z'\sim \N(0,I)$. Then 
$$
W_2(\L(Y),\L(\hat{Y})) \leq \alpha W_2\op \L(X),\N\op 0,\frac{\gamma^2 -\beta^2}{\alpha^2}I\cp\cp.
$$
Assume moreover that $\E[X] = 0$, and that for some some $\xi>0$, $\E\ob e^{\xi X^2}\cb < \infty$\footnote{This second technical assumption ensures that $X$ has sufficiently light tails, which is immediately verified under Assumption 1 or~1'.}, then we have the following asymptotic behavior, as $\beta/\alpha \rightarrow+\infty$,
$$
W_2(\L(Y),\L(\hat{Y})) \sim  \frac{\alpha^2}{2\beta} \left\Vert \Sigma - \frac{\gamma^2 -\beta^2}{\alpha^2}I\right\Vert_\textnormal{F},
$$
with $\Sigma = \E\ob XX^\top\cb$.
\end{proposition}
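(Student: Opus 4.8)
\emph{The inequality.} The plan is to exhibit an explicit coupling built from an optimal transport coupling of $\L(X)$ and $\N\op 0,\frac{\gamma^2-\beta^2}{\alpha^2}I\cp$. Since $\gamma\geq\beta$, the Gaussian $\hat Y$ splits in distribution as an independent sum whose first component matches the variance of the $\beta Z$ term in $Y$: writing $\alpha^2\delta^2 := \gamma^2-\beta^2\geq 0$, one has $\gamma Z' \stackrel{d}{=} \beta Z_1 + \alpha G$ with $Z_1\sim\N(0,I)$, $G\sim\N(0,\delta^2 I)$, $Z_1\bot G$. Now take $(X',G)$ an optimal $W_2$-coupling of $\L(X)$ and $\N(0,\delta^2 I)$, draw $Z_1\sim\N(0,I)$ independent of $(X',G)$, and set $Y:=\alpha X' + \beta Z_1$ and $\hat Y:=\alpha G + \beta Z_1$. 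The two marginals are as required, and because the $\beta Z_1$ terms cancel in the difference,
$$W_2\op\L(Y),\L(\hat Y)\cp^2 \leq \E\ob\Vert Y - \hat Y\Vert^2\cb = \alpha^2\,\E\ob\Vert X' - G\Vert^2\cb = \alpha^2\,W_2\op\L(X),\N(0,\delta^2 I)\cp^2,$$
which is the stated bound once we recall $\delta^2 = (\gamma^2-\beta^2)/\alpha^2$.

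\emph{The asymptotics.} Here the plan is to reduce to a Gaussian-smoothed Wasserstein distance and invoke \citet{chenAsymptoticsSmoothedWasserstein2022}. Using the homogeneity identity $W_2\op\L(cU),\L(cV)\cp = c\,W_2\op\L(U),\L(V)\cp$ for $c>0$ with $c=\alpha$, and setting $\sigma := \beta/\alpha$, $\delta^2 := (\gamma^2-\beta^2)/\alpha^2$,
$$W_2\op\L(Y),\L(\hat Y)\cp = \alpha\, W_2\op\L(X)\ast\N(0,\sigma^2 I),\ \N(0,\delta^2 I)\ast\N(0,\sigma^2 I)\cp =: \alpha\, W_2^{(\sigma)}\!\op\L(X),\N(0,\delta^2 I)\cp,$$
where we used $\L(\gamma Z'/\alpha) = \N\op 0,(\sigma^2+\delta^2)I\cp = \N(0,\delta^2 I)\ast\N(0,\sigma^2 I)$. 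After Gaussian smoothing both measures are absolutely continuous with smooth densities, so the possibly singular $\L(X)$ (manifold hypothesis) is harmless here. Both measures are centered — this is exactly where $\E[X]=0$ enters, killing the lower-order term that a mismatch of means would produce — and have covariance matrices $\Sigma$ and $\delta^2 I$; the light-tail condition $\E\ob e^{\xi X^2}\cb<\infty$ places $\L(X)$ within the class covered by \citet{chenAsymptoticsSmoothedWasserstein2022}, whose expansion of the smoothed $W_2$ distance between two centered measures gives, as $\sigma\to+\infty$,
$$W_2^{(\sigma)}\!\op\L(X),\N(0,\delta^2 I)\cp = \frac{1}{2\sigma}\,\Vert\Sigma - \delta^2 I\Vert_\textnormal{F} + o\!\op\tfrac1\sigma\cp.$$
Multiplying by $\alpha$ and substituting $\sigma = \beta/\alpha$ yields $W_2\op\L(Y),\L(\hat Y)\cp \sim \frac{\alpha^2}{2\beta}\,\Vert\Sigma - \frac{\gamma^2-\beta^2}{\alpha^2}I\Vert_\textnormal{F}$ in the regime $\beta/\alpha\to+\infty$, with $\delta^2 = (\gamma^2-\beta^2)/\alpha^2$ held fixed (which automatically keeps $\gamma\geq\beta$).

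\emph{Main obstacle.} None of the steps is deep; the only genuine care is in invoking \citet{chenAsymptoticsSmoothedWasserstein2022} with its hypotheses correctly verified — equality of first moments, so that the expansion begins at order $1/\sigma$ rather than $1/\sigma^0$, together with the exponential-moment tail bound supplied by the footnote's assumption — and in matching the constant and the Frobenius norm in their statement to our normalization. A one-dimensional sanity check confirms both: for $\L(X)=\N(0,a^2)$ one computes $W_2^{(\sigma)}\op\N(0,a^2),\N(0,\delta^2)\cp = \left|\sqrt{a^2+\sigma^2} - \sqrt{\delta^2+\sigma^2}\right| = \frac{|a^2-\delta^2|}{2\sigma} + o\!\op\tfrac1\sigma\cp$, which matches $\frac{1}{2\sigma}\Vert\Sigma - \delta^2 I\Vert_\textnormal{F}$.
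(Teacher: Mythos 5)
Your proof is correct and takes essentially the same route as the paper: the same coupling (splitting $\gamma Z'$ into an independent sum $\sqrt{\gamma^2-\beta^2}Z''+\beta Z$ and cancelling the common $\beta Z$ term) for the inequality, and the same rescaling by $\alpha$ to reduce to a Gaussian-smoothed Wasserstein distance before invoking Theorem~2.1 of \citet{chenAsymptoticsSmoothedWasserstein2022} for the asymptotics. Your explicit remark that $(\gamma^2-\beta^2)/\alpha^2$ is to be held fixed as $\beta/\alpha\to+\infty$ is a useful clarification of a regime the paper leaves implicit, and your one-dimensional Gaussian sanity check correctly matches the constant.
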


All other works on Wasserstein convergence bounds for diffusion models use a bound on initialization error derived form the first one of Proposition~\ref{prop:init_error}, and would get a better dependency in $T$  by using the second one. Indeed we use a result by \citet{chenAsymptoticsSmoothedWasserstein2022} on the asymptotic of Wasserstein distances for smoothed densities, and to our knowledge, the use of this result in the diffusion model literature is new\footnote{We only found a reference by \citet{reevesInformationTheoreticProofsDiffusion2025}, where the authors use a result of the same paper on the KL divergence, rather than on the Wasserstein distance.}.

In our setting, Proposition~\ref{prop:init_error} gives the following control on the initialization error:
\begin{corollary}
\label{cor:init_error}
Assume that $\Vert X \Vert_{L_2}<\infty$, then for $\hat{X}_0 \sim \N(0,TI)$, we have 
$$
W_2(\L(X_T),\L(\hat{X}_0)) \leq W_2(\L(X), \delta_0) = \Vert X\Vert_{L_2}.
$$
Suppose moreover that Assumption 1 holds and $\E[X] = 0$, then we have the following asymptotic behavior, as $T\rightarrow+\infty$,
$$
W_2(\L(X_T),\L(\hat{X}_0)) \sim \frac{1}{2} \frac{\Vert \Sigma\Vert_\textnormal{F}}{\sqrt{T}},
$$
with $\Sigma = \E\ob XX^\top\cb$. In particular, for $T$ large enough (depending only on $\L(X)$\textnormal{)}, we have 
$$W_2(\L(X_T),\L(\hat{X}_0)) \leq \frac{R^2}{\sqrt{T}}.$$
\end{corollary}

\subsection{Control of Early Stopping Error}
We can finally control the early stopping error with the following lemma.
\begin{lemma}
\label{lemma:bound_early_stopping_error}
For $\epsilon\geq0$, we have
$$
W_2(\L(X),\L(X_\epsilon)) \leq \sqrt{d\epsilon} .
$$
\end{lemma}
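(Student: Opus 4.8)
The plan is to exhibit an explicit coupling between $X$ and $X_\epsilon$ and bound its transport cost directly, rather than optimizing over all couplings. Recall from the definition of the forward process~(\ref{eq:sde_forward}) that $X_\epsilon = X + B_\epsilon$ where $B_\epsilon \sim \N(0,\epsilon I)$ is independent of $X$ (indeed $B_\epsilon\bot X$, since $B$ is the Brownian motion driving the noising started from $X$). Consequently the pair $(X, X_\epsilon)$, defined on this common probability space, is a valid coupling of $\L(X)$ and $\L(X_\epsilon)$: its first marginal is $\L(X)$ and its second is $\L(X_\epsilon)$.

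Since $W_2$ is defined as an infimum over all such couplings, it suffices to upper bound the cost of this particular one:
$$
W_2^2(\L(X),\L(X_\epsilon)) \le \E\big[\Vert X - X_\epsilon\Vert^2\big] = \E\big[\Vert B_\epsilon\Vert^2\big] = d\epsilon,
$$
where the last equality uses that $B_\epsilon \sim \N(0,\epsilon I)$, so each of its $d$ coordinates is a centered Gaussian of variance $\epsilon$. Taking square roots gives $W_2(\L(X),\L(X_\epsilon)) \le \sqrt{d\epsilon}$, which is the claim (the case $\epsilon = 0$ being trivial).

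There is essentially no obstacle here; the only point worth stating carefully is the independence $B_\epsilon\bot X$, which is exactly how the forward process was constructed and which guarantees that the joint law of $(X, X+B_\epsilon)$ is admissible in the infimum defining $W_2$. Equivalently, one may phrase the argument as the general elementary fact that $W_2(\mu,\, \mu * \N(0,\epsilon I)) \le \sqrt{d\epsilon}$ for any probability measure $\mu$ on $\R^d$ with finite second moment, proved in the same way by coupling a sample of $\mu$ with itself plus independent Gaussian noise.
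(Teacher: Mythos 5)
Your proof is correct and is essentially identical to the paper's: both use the natural coupling $(X, X_\epsilon) = (X, X + B_\epsilon)$ from the forward process and bound $W_2^2$ by $\E[\Vert B_\epsilon\Vert^2] = d\epsilon$. The extra remarks you add about independence and the general fact $W_2(\mu, \mu * \N(0,\epsilon I)) \le \sqrt{d\epsilon}$ are accurate but not a different approach.
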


\section{Sketch of the Proof Strategy and Discussion on Score Error Assumptions}
\label{sct:discussion_learning_score}

The strategy of the proof is to define an initial coupling between $X_T$, the initialization of the exact reverse process, and $\hat X_0$, the initialization of the algorithm, then to follow at each step how the error evolves in $L_2$ distance.

\begin{figure}
    \centering
    \subfloat[][]{
    \includegraphics[width=.48\linewidth]{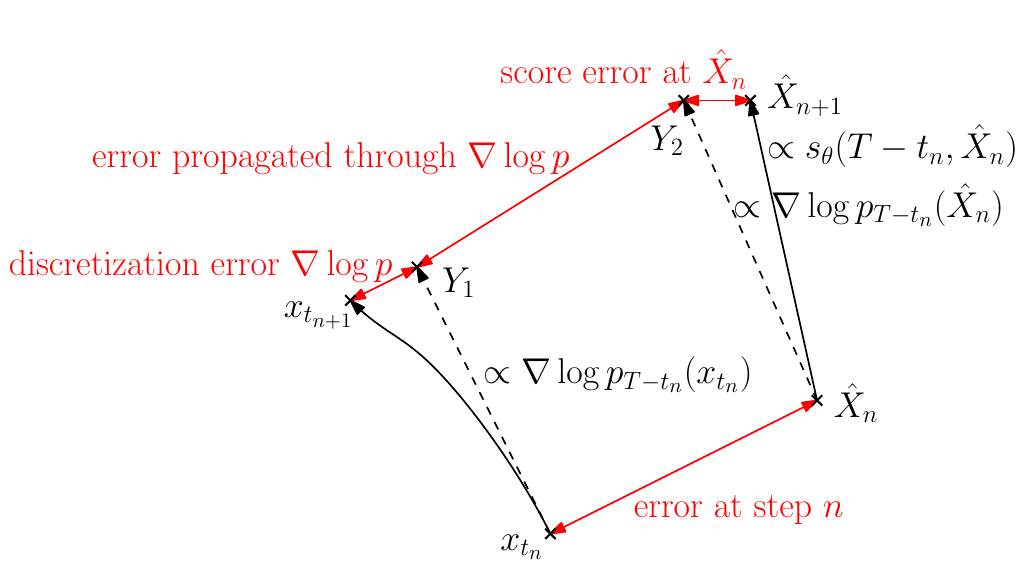}
    }
    \subfloat[][]{
    \includegraphics[width=.48\linewidth]{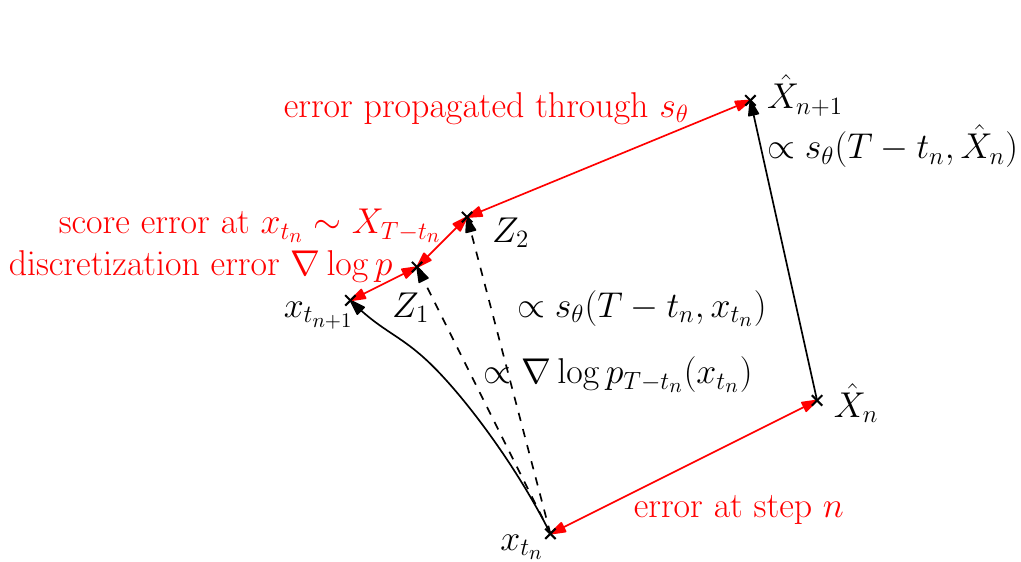}
    }
    \caption{Illustration of two different ways to decompose error at each step.}
	\label{fig:error_decomp}
\end{figure}

At a step $n+1$, the error can be decomposed as the discretization error, and the score error and the error propagated from the previous step $n$. To illustrate, we give a decomposition for the Euler discretization of the ODE in Figure~\ref{fig:error_decomp}(a). It introduces two intermediate points
$$
Y_1 = x_{t_{n}} + \frac{h}{2}\nabla\log \rv p_{t_{n}}(x_{t_{n}}), \quad Y_2 = \hat X_{{n}} + \frac{h}{2}\nabla\log \rv p_{t_{n}}(\hat X_{{n}}).
$$
This leads to the following decomposition of error:
\begin{align*}
x_{t_{n+1}} &- \hat X _{n+1} = \\
&\quad(x_{t_n} - Y_1) &&\text{discretization error of $\nabla\log\rv p_t$,}\\
&+ (Y_1 -Y_2) &&\text{error propagated from step $n-1$ through $I + \frac{h}{2} \nabla\log\rv p_{t_{n-1}}$,}\\
&+ (Y_2 - \hat X _n) &&\text{score approximation error \textbf{evaluated at the empirical process $\hat X_{n}$.}}
\end{align*}
This decomposition of error, used in earlier works on convergence of diffusion models in Wasserstein distance  \citep{gentiloni-silveriLogConcavityScoreRegularity2025,gaoWassersteinConvergenceGuarantees2025,gaoConvergenceAnalysisGeneral2025,yuAdvancingWassersteinConvergence2025}, has the advantage that we can control the regularity of $\nabla \log p_t$ in time $t$ and space $x$, hence we can control the discretization error (Lemmas~\ref{lemma:bound_discretization_ODE}-\ref{lemma:bound_discretization_SDE}), and the propagation  of error from step $n-1$ to step $n$ (Lemma~\ref{lemma:bound_spatial_regularity}).
In particular, we do not have to make any regularity assumption on the learned score $s_\theta$. We only need to control its $L_2$-error with respect to the true score $\nabla\log p_t$ \textbf{evaluated at the \emph{empirical process} $\hat X_n$}:
\begin{equation}
\label{eq:hyp_score_empirical}
\Vert \nabla \log p_{t_n}(\hat{X}_n) - s_\theta(t_n, \hat{X}_n)\Vert_{L_2} \leq \ep_{\text{score}},
\end{equation}
for $n = 0,\dots, N$.
We believe that this choice, although it simplifies the proofs, is not the right one, as $\hat X_n$ depends on $s_\theta$ itself, and as it does not reflect the error that is minimized implicitly during learning.\footnote{
\citet{brunoWassersteinConvergenceScorebased2025a} use a different decomposition of error.
They introduce an auxiliary process of the form 
$d\rv X^\theta_t = s_\theta(T-t,\rv X^\theta_t)dt + dW_t$ and evaluate the score approximation error with respect to this process $\left\Vert \nabla \log \rv p_{t_n}(\rv X_{t_n}^\theta) - s_\theta(T-t_n,\rv X_{t_n}^\theta)\right\Vert_{L_2}$.
Similarly to (\ref{eq:hyp_score_empirical}) used in other works, this control of the learning error does not correspond to what is minimized during training.
}
The error that is implicitly minimized by denoising score matching is \textbf{the error evaluated at the \emph{true process} $X_t$} \citep[][]{vincentConnectionScoreMatching2011}, hence the appropriate choice is to make an hypothesis of the form:
\begin{equation}
\label{eq:hyp_score_true_process}
\left\Vert \nabla \log p_{t_n}(X_{t_n}) - s_\theta(t_n,X_{t_n})\right\Vert_{L_2} \leq \ep_{\text{score}}.
\end{equation}
Moreover, the first decomposition hides the fact that regularity assumptions on $s_\theta$ are really needed, as shown in Section~\ref{sct:lipschitz_assumption}.

In this work, we prefer to use another decomposition, illustrated in Figure~\ref{fig:error_decomp}(b), which introduces  two different intermediate points
$$
Z_1 = x_{t_{n}} + \frac{h}{2}\nabla\log\rv p_{t_{n}}( x_{t_{n}}), \quad Z_2 = x_{t_{n}} + \frac{h}{2}s_\theta(T-t_{n},x_{t_{n}}).
$$
Then
\begin{align*}
&x_{t_n} - \hat X _n =\\
&\quad (x_{t_n} - Z_1) &&\text{discretization error of $\nabla\log\rv p_t$,}\\
&+ (Z_1 -Z_2) &&\text{score approximation error \textbf{evaluated at the true reverse process $x_{t_{n-1}}$},}\\
&+ (Z_2 - \hat X _n) &&\text{error propagated from step $n-1$ through $I + \frac{h}{2} s_\theta(T-t_{n-1},\cdot)$.}
\end{align*}
This second decomposition allows to control the $L_2$-error on the learned score \textbf{evaluated at the true (reverse) process $x_t$}, in line with denoising score matching.
However, as the error from step $n-1$ is propagated to step $n$ through $I + \frac{h}{2} s_\theta(T-t_{n-1},\cdot)$, it means that we need to control the spatial regularity to of $s_\theta$. The propagation of error through  $I + \frac{h}{2} \nabla\log\rv p_{t}$ was controlled by Lemma~\ref{lemma:bound_spatial_regularity}, hence it is natural to make the hypothesis that $s_\theta$ verifies the same properties. 

\begin{center}
\begin{minipage}{0.7\textwidth}
\begin{itemize}
\item[\textbf{Assumption 2.}] For $t >0$, $x \mapsto s_\theta(t,x)$ is $C_{t}$-Lipchitz with 
$$
C_t = \max\op\frac{1}{t}, \left|\frac{R^2}{t^2}-\frac{1}{t}\right|\cp,
$$
and for all $h\leq \epsilon \leq t$, the mapping $x \mapsto x + h s_\theta(t,x)$ is $L_{t,h}$-Lipchitz with 
$$L_{t,h} = 1 + h\op\frac{R^2}{t^2} - \frac{1}{t}\cp.$$
\end{itemize}
\end{minipage}
\end{center}

As $s_\theta$ approximates $\nabla\log p_t$, it is reasonable to assume that we can get the same kind of spatial regularity.\footnote{We could assume that we have a slightly weaker Lipchitz constant of the form $L_{t,h} = 1 + h\op\frac{R^2}{t^2} - \frac{1}{t} + \delta(t)\cp$ for some $t \mapsto \delta(t)$. It would add an additional multiplicative term $\exp\op\int_\epsilon^T\delta(t)dt\cp$ in the accumulation of error.}
However, we stress the fact that it is not enforced by the training objective.
Still, we believe that is not an artifact of the proof, and that this kind of regularity is needed to get good convergence of diffusion models.
We believe that future work should be dedicated to exploring in more details which hypotheses on the regularity of $s_\theta$ are needed and whether it is reasonable to think they are enforced in practice by biases in the network architecture and learning algorithm.

Finally, note that \citet{wangWassersteinBoundsGenerative2025} use an approach similar to us, and give Wasserstein convergence guarantees for the Euler-Maruyama discretization of the SDE (\ref{eq:reverse_SDE}). Their Assumption 4 that is similar to our Assumption 2, and they likewise control the score error with respect to the true reverse process. 
Although we pointed out in a first version of this work an error in their original proof that invalidated their bounds, the authors have since released an updated version containing a corrected argument.

\subsection{Why are Lipchitz-continuity Assumptions Important?}
\label{sct:lipschitz_assumption}

In addition to not corresponding to the minimization done in practice, the use of assumptions of the form (\ref{eq:hyp_score_empirical})  and decomposition of error (a) hide the fact that spatial regularity of $s_\theta$ is needed. Indeed, the process $(\hat{X}_n)_n$ is the discretization of the continuous SDE,
$$
d\rv X^\theta_t = s_\theta(T-t,\rv X^\theta_t)dt + dW_t,
$$
or ODE,
$$
\frac{dx^\theta_t}{dt} = \frac{1}{2}s_\theta(T-t,x^\theta_t).
$$
Thus, as the step size decreases, the process $(\hat{X}_n)_n$ can be expected to converge to the continuous paths $(\rv X^\theta_t)_t$ or $(x^\theta_t)_t$, yet these processes are not necessarily finite.

\subsubsection{Explosion of SDEs and ODEs with Non-Lipchitz Drift} ODEs, and SDEs, can explode in finite time if the drift is only locally Lipchitz-continuous and not globally.
$L_2$ control on the score error is not enough to prevent this from happening. Indeed, we can find $L_2$-approximation of the score that are not globally Lipchitz-continuous and for which an explosion occurs. Take for some $\alpha > 0$, $s(t,x) = \nabla\log p_t (x) + \alpha \Vert x\Vert x$. We have,
$$
\Vert s(t,X_t) - \nabla\log p_t (X_t)  \Vert_{L_2} = \alpha \E\ob \Vert X_t\Vert^4\cb^{1/2},
$$
which can be made as small as desired. However, in Appendix~\ref{sct:explosion_ODE} (Proposition~\ref{prop:explosion_ODE}), we prove that the solution to the ODE
$$
\frac{d\rv x^s_t}{dt} = \frac{1}{2}s(t,\rv x^s_t),
$$
explodes in finite time with non-zero probability. More precisely, there exists a random stopping time $\tau \in [0,\infty]$, such that $\P(\tau < \infty)>0$ and for $\tau < \infty$, $\Vert x_t\Vert \xrightarrow[t\rightarrow \tau^-]{} \infty$ almost surely.
Moreover, it verifies that for all $\delta > 0,$ $\P(\tau \leq \delta) > 0$, hence the explosion can happen arbitrarily close to time $t=0$ with non-zero probability, in particular before the stopping time of the reverse process at time $T-\epsilon$.
The proof is based on the fact that $\nabla\log p_t (x)$ exhibits a linear growth in $x$, hence the explosion phenomena due to the quadratic term dominates if the initialization is large enough, which is always the case with non-zero probability as $X_T$ has positive density over $\R^d$.

This explosion means that the process takes the value $\infty$ with non zero probability, in particular all its moments are infinite.
It is known that the same explosion phenomenon occurs for SDE with a quadratic drift term such as $dX_t = X_t^2 dt + dB_t,$ \citep[see, e.g.,][]{karatzasBrownianMotionStochastic1998,bonderContinuityExplosionTime2009}.
Therefore, we conjecture that the same phenomenon could also occur with the reverse SDE of diffusion models if Lipchitz-continuity assumptions on the score network $s_\theta$ are not made.

\subsubsection{Consequences for the Empirical Process} The empirical process $\hat{X}_n$, although finite, will tend towards $\rv X^s_t$, hence its moments to $+\infty$, as $h\rightarrow 0$. The use of assumptions of the form (\ref{eq:hyp_score_empirical}) is problematic as it involves a process that can diverge as the step size decreases.

Finally, note that some works using the TV distance or the KL divergence, \citep[see, e.g.,][]{chenSamplingEasyLearning2023,confortiKLConvergenceGuarantees2025}, do not make the assumption that $s_\theta$ is Lipchitz-continuous with respect to $x$, and still use the good form  (\ref{eq:hyp_score_true_process}) of assumption on the score approximation, yet give valid convergence bounds for the empirical process.
We believe that it is linked to the fact that the TV distance and KL divergence only depend on the ratio of density rather than on the actual values taken by the processes, in particular $(\hat{X}_n)_n$ can take very large values with small probabilities without changing these divergences much. We believe that the fact that we can get valid convergence bounds in KL and TV while the process diverges as $h\rightarrow0$ with non-zero probability, further demonstrates the limitations of this kind of metrics.

\section{Convergence Guarantees for Diffusion Models}
\label{sct:conv_approx_score}

We now give our convergence bounds for diffusion models. We choose to control the score error with an assumption of the form (\ref{eq:hyp_score_true_process}), in accordance with denoising score matching, hence we denote
$$
\ep_{\text{score}}(t) = \Vert \nabla \log p_{t}(X_{t}) - s_\theta(t,X_{t})\Vert_{L_2}.
$$
We present all bounds in Sections~\ref{sct:convergence_ODE_emp_score} to \ref{sct:convergence_SDE_true_score} and discuss them in Sections~\ref{sct:comments} and \ref{sct:comp_previous}.
\subsection{Euler Sampler for the Probability Flow ODE}
\label{sct:convergence_ODE_emp_score}
\begin{proposition}
\label{prop:convergence_ODE_emp_score}
Suppose that Assumptions 1 and 2 hold and that $\E[X] = 0$, then for $h \leq \epsilon\leq R^2$ and $T$ large enough (depending only on $\L(X)$\textnormal{)}, denoting $\hat{X} = \hat{X}_N$ the output of Algorithm~\ref{alg:ODE}, we have
\begin{align}
\label{eq:bound_ODE_emp}
W_2(\L(X),&\L(\hat{X})) 
\leq 
\underbrace{\sqrt{d\epsilon}}_{\textnormal{Early stopping error}}
+
\underbrace{\frac{\sqrt{2\epsilon}}{T}\exp\op\frac{R^2}{2\epsilon}\cp R^2}_{\textnormal{Propagated initialization error}} 
\notag \\ 
&\quad+ 
\underbrace{\sqrt{d}\frac{R^3}{\sqrt{2}\epsilon^{2}}\exp\op\frac{R^2}{2\epsilon}\cp h}_{\textnormal{(Propagated) discretization error}}
+ 
\underbrace{\sqrt{\frac{\epsilon}{2}} \exp\op\frac{R^2}{2\epsilon}\cp  h\sum_{k=1}^{N} \frac{\ep_{\text{score}}(\epsilon + hk)}{\sqrt{\epsilon + hk}}}_{\textnormal{(Propagated) error on score}}.
\end{align}
\end{proposition}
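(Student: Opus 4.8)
The plan is to run the single-step error-propagation argument of Section~\ref{sct:discussion_learning_score} with decomposition~(b), tracking $e_n := \Vert x_{t_n} - \hat X_n\Vert_{L_2}$ along a coupling that is fixed only at initialization. First I would choose the initial coupling: since $\Vert X\Vert_{L_2}<\infty$ and, for $T$ large enough (depending only on $\L(X)$), Corollary~\ref{cor:init_error} gives $W_2(\L(X_T),\L(\hat X_0)) \le R^2/\sqrt T$, there is a coupling of $(X_T,\hat X_0)$ with $\Vert X_T-\hat X_0\Vert_{L_2}\le R^2/\sqrt T$. I set $x_0=X_T$ and let $(x_t)$ follow the probability flow ODE~(\ref{eq:reverse_ODE}) and $(\hat X_n)$ follow Algorithm~\ref{alg:ODE}, both deterministic given their initial value, so $e_0\le R^2/\sqrt T$. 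As $\L(x_t)$ depends only on $\L(x_0)=\L(X_T)$, Proposition~\ref{prop:reverse_ODE} gives $\L(x_{t_N})=\L(X_{T-t_N})=\L(X_\epsilon)$, hence $(x_{t_N},\hat X_N)$ couples $\L(X_\epsilon)$ and $\L(\hat X)$ and $W_2(\L(X_\epsilon),\L(\hat X))\le e_N$. With Lemma~\ref{lemma:bound_early_stopping_error} and the triangle inequality, $W_2(\L(X),\L(\hat X))\le\sqrt{d\epsilon}+e_N$, so it remains to bound $e_N$ by the last three terms of~(\ref{eq:bound_ODE_emp}).

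Next I would set up the one-step recursion. With the intermediate points of decomposition~(b) built from $x_{t_{n-1}}$, namely $Z_1=x_{t_{n-1}}+\frac h2\nabla\log\rv p_{t_{n-1}}(x_{t_{n-1}})$ and $Z_2=x_{t_{n-1}}+\frac h2 s_\theta(T-t_{n-1},x_{t_{n-1}})$, write $x_{t_n}-\hat X_n=(x_{t_n}-Z_1)+(Z_1-Z_2)+(Z_2-\hat X_n)$. The first term is the ODE discretization error over $[t_{n-1},t_n]$: since $\dot x_t=\frac12\nabla\log\rv p_t(x_t)$, Lemma~\ref{lemma:bound_discretization_ODE} gives $\Vert x_{t_n}-Z_1\Vert_{L_2}\le\frac12\sqrt d\,R^3\epsilon^{-3}h^2$. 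The second term equals $\frac h2\op\nabla\log p_{T-t_{n-1}}(x_{t_{n-1}})-s_\theta(T-t_{n-1},x_{t_{n-1}})\cp$ (using $\rv p_{t_{n-1}}=p_{T-t_{n-1}}$), and since $x_{t_{n-1}}$ has the same law as $X_{T-t_{n-1}}$ by Proposition~\ref{prop:reverse_ODE}, its $L_2$-norm is exactly $\frac h2\,\ep_{\text{score}}(T-t_{n-1})$ — this is precisely where evaluating the score error along the true reverse process is needed. The third term is $g(x_{t_{n-1}})-g(\hat X_{n-1})$ with $g(x)=x+\frac h2 s_\theta(T-t_{n-1},x)$, which by Assumption~2 (applicable since $h/2\le\epsilon$ and $T-t_{n-1}\in[\epsilon,T]$) is $\lambda_n$-Lipschitz, $\lambda_n:=L_{T-t_{n-1},h/2}=1+\frac h2\op\frac{R^2}{(T-t_{n-1})^2}-\frac1{T-t_{n-1}}\cp$. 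Hence $e_n\le\lambda_n e_{n-1}+\frac12\sqrt d\,R^3\epsilon^{-3}h^2+\frac h2\ep_{\text{score}}(T-t_{n-1})$, and unrolling gives $e_N\le\op\prod_{n=1}^N\lambda_n\cp e_0+\sum_{n=1}^N\op\prod_{m=n+1}^N\lambda_m\cp\op\frac12\sqrt d\,R^3\epsilon^{-3}h^2+\frac h2\ep_{\text{score}}(T-t_{n-1})\cp$.

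The core of the argument, and the step I expect to be the main obstacle (carried out in Appendix~\ref{sct:proofs:bound_propagation_error}), is a clean bound on the products $\prod_{m=n+1}^N\lambda_m$. Writing $u_m=T-t_{m-1}$ and $f(u)=R^2u^{-2}-u^{-1}$, one has $0<\lambda_m\le e^{\frac h2 f(u_m)}$ (indeed $\lambda_m\ge\frac12$, as $\frac h2 f(u_m)\ge-\frac h{2u_m}\ge-\frac12$ using $h\le\epsilon\le u_m$), so $\prod_{m=n+1}^N\lambda_m\le\exp\op\frac h2\sum_{m=n+1}^N f(u_m)\cp$. The $u_m$ for $m=n+1,\dots,N$ are the $N-n$ points $\epsilon+h,\dots,\epsilon+(N-n)h$, and comparing this Riemann sum to $\int_\epsilon^{\epsilon+(N-n)h}f=\frac{R^2}{\epsilon}-\frac{R^2}{\epsilon+(N-n)h}+\ln\frac{\epsilon}{\epsilon+(N-n)h}$ (using that $f$ is decreasing near its singularity, where the sum concentrates, with only a bounded correction from the mild increase of $f$ for $u>2R^2$) yields $\prod_{m=n+1}^N\lambda_m\le\sqrt2\,\exp\op\frac{R^2}{2\epsilon}\cp\sqrt{\frac{\epsilon}{\epsilon+(N-n)h}}$. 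Plugging this into the unrolled bound: with $n=0$ and $u_1=T$, the initialization contribution is at most $\frac{R^2}{\sqrt T}\cdot\sqrt2\exp\op\frac{R^2}{2\epsilon}\cp\sqrt{\epsilon/T}=\frac{\sqrt{2\epsilon}}{T}\exp\op\frac{R^2}{2\epsilon}\cp R^2$; the constant discretization contributions sum, via $\sum_{k=0}^{N-1}(\epsilon+kh)^{-1/2}\le 2\sqrt T/h$ (integral comparison, using $h\le\epsilon$), to $\sqrt d\,\sqrt2\,R^3\epsilon^{-5/2}\exp(R^2/2\epsilon)\sqrt T h$; and reindexing $k=N-n$, so that $T-t_{n-1}=\epsilon+(k+1)h$ and $\epsilon+(N-n)h=\epsilon+kh$, turns the score contributions into $\sqrt{\epsilon/2}\,\exp(R^2/2\epsilon)\,h\sum_{k=0}^{N-1}\ep_{\text{score}}(\epsilon+h(k+1))/\sqrt{\epsilon+hk}$. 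Adding the early-stopping term $\sqrt{d\epsilon}$ gives~(\ref{eq:bound_ODE_emp}). The only genuinely delicate points are the Riemann-sum-to-integral comparison for the non-monotone $f$ (keeping the constant clean) and checking throughout that $h/2\le\epsilon$ and $t+h\le T-\epsilon$, as required by Assumption~2 and Lemma~\ref{lemma:bound_discretization_ODE}, hold for every index; the rest is bookkeeping.
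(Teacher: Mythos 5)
Your overall strategy matches the paper's proof almost exactly: same initial coupling via Corollary~\ref{cor:init_error}, same error decomposition~(b) with intermediate points $Z_1,Z_2$, same one-step recursion $e_n \leq L_{T-t_{n-1},h/2}\, e_{n-1} + \frac{1}{2}\sqrt d\,R^3\epsilon^{-3}h^2 + \frac h2\ep_{\text{score}}(T-t_{n-1})$, same unrolling, and the same triangle-inequality split into early-stopping plus $e_N$. The bookkeeping (reindexing $k=N-n$, the $\sum (\epsilon+kh)^{-1/2}\le 2\sqrt T/h$ comparison) also matches.

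The one place where your description diverges from (and over-complicates) the actual proof is the bound on $\prod_m L_{T-t_m,h/2}$ (Lemma~\ref{lemma:bound_propagation_error:ODE}). You propose to compare the right-endpoint Riemann sum of the non-monotone $f(u)=R^2 u^{-2}-u^{-1}$ to $\int_\epsilon^{\epsilon+(N-n)h}f$, invoking a ``bounded correction from the mild increase of $f$ for $u>2R^2$''. That correction is not negligible and, as stated, does not obviously produce the clean constant $\sqrt2\,e^{R^2/2\epsilon}$: for a right-endpoint sum of an increasing integrand, each term \emph{overestimates} its integral, so the direction you need fails precisely on that piece. The paper instead splits $f$ into its two monotone parts and applies the appropriate Riemann--integral comparison to each separately: for the decreasing part, $h\sum_k R^2(\epsilon+hk)^{-2}\le R^2\int_\epsilon^\infty t^{-2}dt = R^2/\epsilon$; for the increasing part, $-h\sum_k(\epsilon+hk)^{-1}\le -\int_{\epsilon+h}^{T-t_n} t^{-1}dt = \log\frac{\epsilon+h}{T-t_n}\le\log\frac{2\epsilon}{T-t_n}$ (here $h\le\epsilon$ is exactly where the $\sqrt2$ comes from). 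Adding, halving, and exponentiating gives $\sqrt{\frac{2\epsilon}{T-t_n}}\exp(\frac{R^2}{2\epsilon})$ with no fuzzy correction term. If you replace your single-comparison argument with this two-piece split, your proof becomes a faithful reproduction of the paper's; everything else you wrote is correct.
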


\textbf{Bound without early stopping.} If we replace Assumption 1 by Assumption 1', then we can remove the error associated to early stopping and replace $\epsilon$ by $\tau$ in the bound. Indeed, we can view the diffusion process (\ref{eq:sde_forward}) started from $X$ between time $0$ and $T$ as the process started from $Z$ between time $\tau$ and $T + \tau$. Therefore, running Algorithm~\ref{alg:SDE} on $X$ with $\epsilon=0$ (no early stopping) and time horizon $T$ is equivalent to running Algorithm~\ref{alg:SDE} on $Z$ with $\epsilon=\tau$ and time horizon $T+\tau$. This gives the following bound under Assumption 1':
\begin{multline*}
W_2(\L(X),\L(\hat{X})) 
\leq 
\underbrace{\frac{\sqrt{2\tau}}{T+\tau} \exp\op\frac{R^2}{2\tau}\cp R^2}_{\textnormal{Propagated initialization error}} 
\notag 
+ 
\underbrace{\sqrt{d}\frac{R^3}{\sqrt{2}\tau^{2}}\exp\op\frac{R^2}{2\tau}\cp h}_{\textnormal{(Propagated) discretization error}}
\\+ 
\underbrace{\sqrt{\frac{\tau}{2}} \exp\op\frac{R^2}{2\tau}\cp  h\sum_{k=1}^{N} \frac{\ep_{\text{score}}(hk)}{\sqrt{\tau + hk}}}_{\textnormal{(Propagated) error on score}}.    
\end{multline*}

\textbf{Initialization error.} Note that even if we do not assume $\E[X]=0$, we can still use the first bound of Corollary~\ref{cor:init_error}, leading to the term $\sqrt{\frac{2\epsilon}{T}} \exp\op\frac{R^2}{2\epsilon}\cp R$ for the propagated initialization error. This bound is valid for any $T\geq 0$, but we lose $1/2$ order in the rate of convergence with respect to $T$.

\medskip
\textbf{Propagated score error.} If we assume a uniform bound over the $L_2$-error on the learned score:
$$
\forall t\in[\epsilon,T], \ep_{\text{score}}(t) \leq \ep_{\text{score}},
$$
then bounding the sum by an integral, we get
$$
\sqrt{\frac{\epsilon}{2}}\exp\op\frac{R^2}{2\epsilon}\cp h\sum_{k=1}^{N}\frac{\ep_{\text{score}}(\epsilon +hk) }{\sqrt{\epsilon +hk}} \leq \sqrt{2\epsilon}\exp\op\frac{R^2}{2\epsilon}\cp\ep_{\text{score}}\sqrt{T}.
$$
More generally, if $t\mapsto\ep_{\text{score}}(t)$ is continuous, we have the limit:
$$
\sqrt{\frac{\epsilon}{2}}\exp\op\frac{R^2}{2\epsilon}\cp h\sum_{k=1}^{N}\frac{\ep_{\text{score}}(\epsilon +hk) }{\sqrt{\epsilon +hk}} \xrightarrow[h\rightarrow0]{} \frac{1}{\sqrt{2}}\exp\op\frac{R^2}{2\epsilon}\cp \int_\epsilon^T\sqrt{\frac{\epsilon}{t}}\ep_{\text{score}}(t)dt.
$$

\subsection{Heun Sampler for the Probability Flow ODE}
\label{sct:convergence_Heun_emp_score}

For Heun sampler, we will also need the hypothesis that for all $t\in [\epsilon,T], x \mapsto s_\theta(t,x)$ is $L$-Lipchitz. Note that a consequence of Lemma~\ref{lemma:bound_spatial_regularity} is that $x \mapsto \nabla\log p_t(x)$ is $C_t$-Lipchitz with 
$C_t = \max \op \frac{1}{t}, \left|\frac{R^2}{t^2}- \frac{1}{t}\right|\cp.$
In particular, assuming $\epsilon\leq  R^2$, for $t\in [\epsilon,T], C_t \leq \frac{R^2}{\epsilon^2}$, so it is reasonable to assume that $L \approx \frac{R^2}{\epsilon^2}$. 
\begin{proposition}
\label{prop:convergence_Heun_emp_score}
Suppose that Assumptions 1 and 2 hold, and that $\E[X] = 0$, then, for $\epsilon\leq R^2$, $h \leq \epsilon/2$ and $T$ large enough (depending only on $\L(X)$\textnormal{)}, denoting $\hat{X} = \hat{X}_N$ the output of Algorithm~\ref{alg:Heun}, we have
\begin{multline}
\label{eq:bound_Heun_emp}
W_2(\L(X),\L(\hat{X}))
\leq
\underbrace{\sqrt{d\epsilon}}_{\textnormal{Early stopping error}}
+
\underbrace{\frac{\sqrt{2\epsilon}}{T}\exp\op\frac{R^2}{\epsilon} +\frac{hR^2}{4\epsilon^2}\cp  R^2}_{\textnormal{Propagated initialization error}} 
\\
+ 
\underbrace{\frac{17dR^5}{\sqrt{2}\epsilon^{4}}\exp\op\frac{R^2}{\epsilon} +\frac{hR^2}{4\epsilon^2}\cp h^2}_{\textnormal{(Propagated) discretization error}}
\\
+ \underbrace{\frac{\sqrt{\epsilon}}{2\sqrt{2}}\exp
\op\frac{R^2}{\epsilon} +\frac{hR^2}{4\epsilon^2}\cp \op h\sum_{k=0}^{N-1}\frac{\ep_{\text{score}}(\epsilon +hk) }{\sqrt{\epsilon +h(k+1)}} +  \op 1 + \frac{hR^2}{2\epsilon^2}\cp h\sum_{k=1}^{N}\frac{\ep_{\text{score}}(\epsilon +hk) }{\sqrt{\epsilon +hk}}\cp}_{\textnormal{(Propagated) error on score}}.
\end{multline}
\end{proposition}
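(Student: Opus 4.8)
The plan is to adapt the error-propagation scheme sketched for the Euler sampler in Section~\ref{sct:discussion_learning_score} to the Heun update, keeping the decomposition of type (b) (score error evaluated at the true reverse process $x_{t_n}$). Write $e_n = \Vert x_{t_n} - \hat X_n\Vert_{L_2}$, with $x_{t_n}$ the exact probability flow ODE trajectory and $\hat X_n$ the Heun iterate. The first step is to split $e_{n+1}$ into three contributions exactly as in the Euler case: (i) the one-step discretization error of the trapezoidal rule, bounded by $22d\frac{R^5}{\epsilon^5}h^3$ via Lemma~\ref{lemma:bound_discretization_Heun}; (ii) the score-approximation error introduced when one replaces $\nabla\log\rv p_t$ by $s_\theta$ at the \emph{exact} point $x_{t_n}$ (and also at the predictor point, which is the new feature of Heun — here one must bound the discrepancy between $\nabla\log\rv p_{t_{n+1}}$ evaluated at the true corrector point and $s_\theta$ evaluated at the empirical predictor point $\hat Y_{n+1}$, peeling it into a score error $\ep_{\text{score}}(\cdot)$ plus Lipschitz-in-$x$ terms controlled by $C_{t}\le R^2/\epsilon^2$ and by the assumed constant $L$); and (iii) the propagated error $\Vert g_{t_n,h}(x_{t_n}) - g_{t_n,h}(\hat X_n)\Vert_{L_2}$, where $g_{t_n,h}$ is the Heun map $x\mapsto x + \frac h4\bigl(s_\theta(T-t_n,x) + s_\theta(T-t_{n+1}, x + \frac h2 s_\theta(T-t_n,x))\bigr)$.

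The key technical step is to show that the Heun map $g_{t_n,h}$ is Lipschitz with a constant of the form $L_{t_n,h}\cdot(1 + O(hL)) \le L_{t_n,h} e^{ChL^2\cdot\text{something}}$ — more precisely that composing $N$ such maps produces the factor $\exp\bigl(\frac{R^2}{\epsilon} + \frac{hTL^2}{8}\bigr)$ appearing in the bound. To do this I would write $g_{t_n,h} = \frac12(f_{T-t_n,h/2}\text{-type map}) + \tfrac12(\text{stuff involving }s_\theta\text{ at the predictor})$, use Assumption~2 to get that $x\mapsto x + \frac h2 s_\theta$ is $L_{t,h/2}$-Lipschitz, use the $L$-Lipschitz hypothesis on $s_\theta(t,\cdot)$ for the second evaluation, and then carefully track how the two half-steps combine. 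The $\frac{hTL^2}{8}$ term in the exponent should come from the second-order cross term $\frac h4 \cdot L \cdot (\text{Lipschitz const of predictor map} - 1) \sim \frac{h^2 L^2}{8}$ accumulated over $N = T/h$ steps, turned into an exponential via $\prod(1+a_k)\le\exp(\sum a_k)$. I would use the bounds on products of the $L_{t,h}$ collected in Appendix~\ref{sct:proofs:bound_propagation_error} (which already give $\prod_{k} L_{t_k,h}\le \exp(R^2/\epsilon)$-type estimates after summing $h(R^2/t^2 - 1/t)$ against $\int_\epsilon^T$) and absorb the extra $s_\theta$-Lipschitz contributions into the same telescoping.

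Once the per-step Lipschitz factor is in hand, the rest is bookkeeping: unrolling the recursion $e_{n+1} \le (\text{Lip})\, e_n + (\text{disc}) + (\text{score})$ gives $e_N \le \sum_{n} \bigl(\prod_{k>n}\text{Lip}_k\bigr)\bigl(\text{disc}_n + \text{score}_n\bigr)$; bounding $\prod_{k>n}\text{Lip}_k \le \exp\bigl(\frac{R^2}{\epsilon}+\frac{hTL^2}{8}\bigr)$ uniformly, summing the $N=T/h$ discretization terms $22d\frac{R^5}{\epsilon^5}h^3$ (times an extra $\frac{1}{\sqrt{t_n}}\le 1/\sqrt\epsilon$ factor that Lemma~\ref{lemma:bound_discretization_Heun}'s $L_2$ estimate at the exact trajectory contributes through the initialization propagation normalization) to get the $\sqrt T h^2$ rate, and likewise summing the score terms with the $1/\sqrt{\epsilon+hk}$ weights. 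Finally I would add the early-stopping error $\sqrt{d\epsilon}$ from Lemma~\ref{lemma:bound_early_stopping_error} and the propagated initialization error, obtained by applying Corollary~\ref{cor:init_error} (the $T$-large regime giving $R^2/\sqrt T$) and multiplying by the same exponential propagation factor. The main obstacle I anticipate is precisely pinning down the Lipschitz constant of the Heun map tightly enough to get the clean $\exp\bigl(\frac{R^2}{\epsilon}+\frac{hTL^2}{8}\bigr)$ rather than a cruder $\exp(CTL)$ — this requires exploiting the cancellation structure of the trapezoidal/Heun update (the leading Lipschitz behaviour matches $f_{t,h}$, and the correction is genuinely second order in $h$), and keeping the $L^2$ couplings consistent between the true and empirical predictor points.
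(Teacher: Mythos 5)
Your high-level plan is right: fix a coupling at initialization, unroll a per-step recursion $e_{n+1}\le (\text{Lip})\,e_n + (\text{disc}) + (\text{score})$, invoke Lemma~\ref{lemma:bound_discretization_Heun} for the trapezoidal one-step error, control the score error at $x_{t_n}$ and $x_{t_{n+1}}$, sum the products using the Appendix machinery, and add the early-stopping and initialization terms. This is also what the paper does.

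The genuine gap is in the step you yourself flag as the obstacle: you propose to bound the Lipschitz constant of the full Heun map
$g(x)=x+\tfrac h4\bigl(s_\theta(T-t_{n-1},x)+s_\theta(T-t_n, x+\tfrac h2 s_\theta(T-t_{n-1},x))\bigr)$
directly, hoping for a constant of the form $\tfrac{L_{n-1}+L_n}{2}+O(h^2L^2)$. But a direct estimate does not deliver this. Writing $g = \tfrac12 \mathrm{pred} + \tfrac12\bigl(I + \tfrac h2 s_\theta(T-t_n,\cdot)\circ\mathrm{pred}\bigr)$ and peeling off $\mathrm{pred}(x)-\mathrm{pred}(y)$, the unavoidable residual is $\tfrac h2\bigl(s_\theta(T-t_{n-1},x)-s_\theta(T-t_{n-1},y)\bigr)$, which under the $L$-Lipschitz hypothesis contributes $\tfrac{hL}{4}\Vert x-y\Vert$ to the Lipschitz constant of $g$ — an $O(hL)$ term, not $O(h^2L^2)$. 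Accumulated over $N=T/h$ steps this gives a factor $\exp(CTL)$, which is the ``cruder'' bound you acknowledge and which does not vanish as $h\to 0$, in contrast with the claimed $\exp(hTL^2/8)$.

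The missing device is a coupling of the predictor trajectories: introduce $\td Y_n = x_{t_{n-1}} + \tfrac h2 s_\theta(T-t_{n-1},\hat X_{n-1})$ (note: the drift here is evaluated at the \emph{empirical} iterate, but is added to the \emph{true} trajectory), so that $\td Y_n - \hat Y_n = x_{t_{n-1}} - \hat X_{n-1}$ identically. Then the propagated part of $x_{t_n}-\hat X_n$ splits as
$\tfrac12\bigl[(I+\tfrac h2 s_\theta(T-t_n,\cdot))(\td Y_n) - (I+\tfrac h2 s_\theta(T-t_n,\cdot))(\hat Y_n)\bigr]
+\tfrac12\bigl[(I+\tfrac h2 s_\theta(T-t_{n-1},\cdot))(x_{t_{n-1}}) - (I+\tfrac h2 s_\theta(T-t_{n-1},\cdot))(\hat X_{n-1})\bigr]$,
and \emph{both} brackets can be bounded through Assumption~2 (constants $L_n$ and $L_{n-1}$, both applied to the same difference $x_{t_{n-1}}-\hat X_{n-1}$), giving the clean $\tfrac{L_{n-1}+L_n}{2}$. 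The $L$-Lipschitz hypothesis is then only needed to bound the leftover term $R_n = \tfrac h4\bigl(s_\theta(T-t_n,x_{t_n}) - s_\theta(T-t_n,\td Y_n)\bigr)$; decomposing $x_{t_n}-\td Y_n$ into a Lemma~\ref{lemma:bound_discretization_ODE} one-step Euler error, a score error, and a final $\tfrac h2 (s_\theta(T-t_{n-1},x_{t_{n-1}}) - s_\theta(T-t_{n-1},\hat X_{n-1}))$ term produces exactly $\tfrac{hL}{4}\cdot\tfrac{hL}{2}=\tfrac{h^2L^2}{8}$ as a correction to the propagation factor, plus the $\sqrt d\,LR^3 h^3/(8\epsilon^3)$ extra discretization term and the $(1+hL/2)$ multiplier on one of the score-error sums. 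Without this device, the bound degrades to $\exp(CTL)$ and the specific constants in (\ref{eq:bound_Heun_emp}) are not recoverable.
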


\textbf{Bound without early stopping.} Replacing Assumption 1 by Assumption 1', we get the following bound without early stopping error:
\begin{multline*}
W_2(\L(X),\L(\hat{X}))
\leq
\underbrace{\frac{\sqrt{2\tau}}{T+\tau}\exp\op\frac{R^2}{\tau} + \frac{hR^2}{4\tau^2}\cp R^2}_{\textnormal{Propagated initialization error}}
+
\underbrace{\frac{17dR^5}{\sqrt{2}\tau^{4}}\exp\op\frac{R^2}{\tau} +\frac{hR^2}{4\tau^2}\cp h^2}_{\textnormal{(Propagated) discretization error}}
\\
+ \underbrace{\frac{\sqrt{\tau}}{2\sqrt{2}}\exp\op\frac{R^2}{\tau} + \frac{hR^2}{4\tau^2}\cp \op h\sum_{k=0}^{N-1}\frac{\ep_{\text{score}}(hk) }{\sqrt{\tau +h(k+1)}}
+ \op 1 + \frac{hR^2}{2\tau^2}\cp h\sum_{k=1}^{N}\frac{\ep_{\text{score}}(hk) }{\sqrt{\tau +hk}}\cp}_{\textnormal{(Propagated) error on score}}.
\end{multline*}

\textbf{Initialization error.} As before, even if we do not assume $\E[X]=0$, we can still use the first bound of Corollary~\ref{cor:init_error}, leading to the term $\sqrt{\frac{2\epsilon}{T}} \exp\op\frac{R^2}{\epsilon} +\frac{hR^2}{4\epsilon^2}\cp  R$ for the propagated initialization error.

\medskip
\textbf{Propagated score error.} Similarly to Section~\ref{sct:convergence_ODE_emp_score}, if we assume a uniform bound over the $L_2$-error on the learned score, we get
\begin{multline*}
\frac{\sqrt{\epsilon}}{2\sqrt{2}}\exp
\op\frac{R^2}{\epsilon} +\frac{hR^2}{4\epsilon^2}\cp \op h\sum_{k=0}^{N-1}\frac{\ep_{\text{score}}(\epsilon +hk) }{\sqrt{\epsilon +h(k+1)}} +  \op 1 + \frac{hR^2}{2\epsilon^2}\cp h\sum_{k=1}^{N}\frac{\ep_{\text{score}}(\epsilon +hk) }{\sqrt{\epsilon +hk}}\cp\\
\leq 
\sqrt{\frac{\epsilon}{2}}\exp\op\frac{R^2}{\epsilon} +\frac{hR^2}{4\epsilon^2}\cp  \op 2 + \frac{hR^2}{2\epsilon^2}\cp\ep_{\text{score}}\sqrt{T}.
\end{multline*}
More generally, if $t\mapsto\ep_{\text{score}}(t)$ is continuous, we have the same limit as for Euler discretization:
\begin{multline*}
\frac{\sqrt{\epsilon}}{2\sqrt{2}}\exp
\op\frac{R^2}{\epsilon} +\frac{hR^2}{4\epsilon^2}\cp \op h\sum_{k=0}^{N-1}\frac{\ep_{\text{score}}(\epsilon +hk) }{\sqrt{\epsilon +h(k+1)}} +  \op 1 + \frac{hR^2}{2\epsilon^2}\cp h\sum_{k=1}^{N}\frac{\ep_{\text{score}}(\epsilon +hk) }{\sqrt{\epsilon +hk}}\cp\\
\xrightarrow[h\rightarrow0]{} 
\frac{1}{\sqrt{2}}\exp\op\frac{R^2}{\epsilon}\cp
\int_\epsilon^T\sqrt{\frac{\epsilon}{t}}\ep_{\text{score}}(t)dt.
\end{multline*}

\subsection{SDE Sampler}
\label{sct:convergence_SDE_emp_score}
\begin{proposition}
\label{prop:convergence_SDE_emp_score}
Suppose that Assumptions 1 and 2 hold and that $\E[X] = 0$, then, for $h\leq \epsilon\leq R^2$ and $T$ large enough (depending only on $\L(X)$\textnormal{)}, denoting $\hat{X} = \hat{X}_N$ the output of Algorithm \ref{alg:SDE}, we have
\begin{align}
\label{eq:bound_SDE_emp}
W_2&(\L(X),\L(\hat{X)})
\leq
\underbrace{\sqrt{d\epsilon}}_{\textnormal{Early stopping error}}
+
\underbrace{\frac{2\epsilon}{T^{3/2}} \exp\op\frac{R^2}{\epsilon}\cp R^2}_{\textnormal{Propagated initialization error}} 
\notag \\
&\quad+ 
\underbrace{\frac{\sqrt{2d}}{2}\frac{R^2}{\epsilon}\exp\op\frac{R^2}{\epsilon}\cp \sqrt{h}}_{\textnormal{(Propagated) discretization error}}
+ 
\underbrace{2\epsilon \exp\op\frac{R^2}{\epsilon}\cp  h\sum_{k=1}^{N} \frac{\ep_{\text{score}}(\epsilon + hk)}{{\epsilon + hk}}}_{\textnormal{(Propagated) error on score}}.
\end{align}
\end{proposition}

\textbf{Bound without early stopping.} Replacing Assumption 1 by Assumption 1', we get the following bound without early stopping error:
\begin{multline*}
W_2(\L(X),\L(\hat{X)})
\leq
\underbrace{\frac{2\tau}{(T+\tau)^{3/2}} \exp\op\frac{R^2}{\tau}\cp R^2}_{\textnormal{Propagated initialization error}} 
\notag \\
+
\underbrace{\sqrt{2d}\frac{R^2}{\tau}\exp\op\frac{R^2}{\tau}\cp\sqrt{h}}_{\textnormal{(Propagated) discretization error}}
+ 
\underbrace{2\tau \exp\op\frac{R^2}{\tau}\cp  h\sum_{k=1}^{N} \frac{\ep_{\text{score}}(hk)}{{\tau + hk}}}_{\textnormal{(Propagated) error on score}}.
\end{multline*}

\medskip
\textbf{Initialization error.} As before, even if we do not assume $\E[X]=0$, we can still use the first bound of Corollary~\ref{cor:init_error}, leading to the term $\frac{2\epsilon}{T} \exp\op\frac{R^2}{\epsilon}\cp R$ for the propagated initialization error.

\textbf{Propagated score error.} Similarly to Section~\ref{sct:convergence_ODE_emp_score}, if we assume a uniform bound over the $L_2$-error on the learned score, bounding the sum by an integral, we get
$$
{2\epsilon}\exp\op\frac{R^2}{\epsilon}\cp h\sum_{k=1}^{N}\frac{\ep_{\text{score}}(\epsilon +hk) }{{\epsilon +hk}}
\leq  {2\epsilon}\exp\op\frac{R^2}{\epsilon}\cp  \log\op\frac{T}{\epsilon}\cp\ep_{\text{score}}.
$$
More generally, if $t\mapsto\ep_{\text{score}}(t)$ is continuous, we have the limit:
$$
{2\epsilon}\exp\op\frac{R^2}{\epsilon}\cp h\sum_{k=0}^{N-1}\frac{\ep_{\text{score}}(\epsilon +h(k+1)) }{{\epsilon +hk}}
\xrightarrow[h\rightarrow0]{} 
2\exp\op\frac{R^2}{\epsilon}\cp\int_\epsilon^T\frac{\epsilon}{{t}}\ep_{\text{score}}(t) dt.
$$

\subsection{Convergence of the Euler-Maruyama Sampler Under True Score Assumption}

We finally prove convergence of order $1$ in the step size for the Euler-Maruyama sampler with accurate score, matching the optimal rate of convergence for the Euler-Maruyama discretization of SDEs with additive noise, i.e., constant diffusion coefficients \citep[see, e.g.,][Chapter 10.2]{kloedenNumericalSolutionStochastic1992}. The pivotal aspect of the proof that enables this improved convergence rate is that the discretization error at each step is independent from the errors at the previous steps.

\label{sct:convergence_SDE_true_score}
\begin{proposition}
\label{prop:convergence_SDE_true_score}
Suppose that Assumption 1 holds, that $\E[X] = 0$ and that for all $t \in [\epsilon,T],x\in \R^d, s_\theta(t,x) = \nabla\log p_t(x)$, then, for $h\leq \epsilon\leq R^2$ and $T$ large enough (depending only on $\L(X)$\textnormal{)}, denoting $\hat{X} = \hat{X}_N$ the output of Algorithm \ref{alg:SDE}, we have
\begin{equation}
\label{eq:bound_SDE}
W_2(\L(X),\L(\hat{X}))
\leq
\underbrace{\sqrt{d\epsilon}}_{\textnormal{Early stopping error}}
+
\underbrace{\frac{2\epsilon}{T^{3/2}} \exp\op\frac{R^2}{\epsilon}\cp R^2}_{\textnormal{Propagated initialization error}} 
+
\underbrace{\sqrt{\frac{2d}{3}}\frac{R^2}{\epsilon^{3/2}}\exp\op\frac{R^2}{\epsilon}\cp h.}_{\textnormal{(Propagated) discretization error}}
\end{equation}
\end{proposition}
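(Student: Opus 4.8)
The plan is to track an $L_2$-coupling between the exact reverse SDE $(\rv X_t)_{t\in[0,T-\epsilon]}$ of~\eqref{eq:reverse_SDE} and the iterates $(\hat X_n)_{n=0}^N$ of Algorithm~\ref{alg:SDE}, and then to conclude by the triangle inequality
$$W_2(\L(X),\L(\hat X_N)) \le W_2(\L(X),\L(X_\epsilon)) + W_2(\L(X_\epsilon),\L(\rv X_{T-\epsilon})) + \Vert\rv X_{T-\epsilon}-\hat X_N\Vert_{L_2},$$
where the middle term is $0$ by Proposition~\ref{prop:reverse_SDE} (since $t_N=Nh=T-\epsilon$) and the first is at most $\sqrt{d\epsilon}$ by Lemma~\ref{lemma:bound_early_stopping_error}. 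The coupling I would use: let $(\hat X_0,\rv X_0=X_T)$ be an optimal $W_2$-coupling, so that $\Vert\rv X_0-\hat X_0\Vert_{L_2}=W_2(\L(X_T),\L(\hat X_0))\le R^2/\sqrt T$ for $T$ large by Corollary~\ref{cor:init_error}, taken independent of the Brownian motion $W$ driving~\eqref{eq:reverse_SDE}, and run Algorithm~\ref{alg:SDE} with noise increments $W_{t_{n+1}}-W_{t_n}$ drawn from that same $W$, so that the Gaussian steps cancel in the one-step difference.

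With this coupling, writing $e_n=\rv X_{t_n}-\hat X_n$, $f_{t,h}(x)=x+h\nabla\log p_t(x)$ and using $s_\theta(T-t_n,\cdot)=\nabla\log\rv p_{t_n}(\cdot)$, a single step reads $e_{n+1}=\big(f_{T-t_n,h}(\rv X_{t_n})-f_{T-t_n,h}(\hat X_n)\big)+D_{n+1}$ with $D_{n+1}=\int_{t_n}^{t_{n+1}}\nabla\log\rv p_s(\rv X_s)\,ds-h\nabla\log\rv p_{t_n}(\rv X_{t_n})$ the discretization error of Lemma~\ref{lemma:bound_discretization_SDE}. The decisive point -- and the reason the rate improves from $O(\sqrt h)$ to $O(h)$ -- is that by the identity in that lemma $D_{n+1}=\int_{t_n}^{t_{n+1}}\int_{t_n}^s\nabla^2\log\rv p_u(\rv X_u)\cdot dW_u\,ds$, so stochastic Fubini and the martingale property of the inner Itô integral give $\E[D_{n+1}\mid\F_{t_n}]=0$, while under the coupling both $\rv X_{t_n}$ and $\hat X_n$ -- hence $f_{T-t_n,h}(\rv X_{t_n})-f_{T-t_n,h}(\hat X_n)$ -- are $\F_{t_n}$-measurable. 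Expanding $\Vert e_{n+1}\Vert_{L_2}^2$, the cross term therefore vanishes, and Lemma~\ref{lemma:bound_spatial_regularity} (applicable since $h\le\epsilon\le T-t_n$ for $n\le N-1$) yields the Pythagorean recursion $\Vert e_{n+1}\Vert_{L_2}^2\le L_{T-t_n,h}^2\,\Vert e_n\Vert_{L_2}^2+\Vert D_{n+1}\Vert_{L_2}^2$.

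Unrolling this, with $\Pi_j=\prod_{m=j}^{N-1}L_{T-t_m,h}$, gives $\Vert e_N\Vert_{L_2}\le\Pi_0\Vert e_0\Vert_{L_2}+\big(\sum_{k=0}^{N-1}\Pi_{k+1}^2\Vert D_{k+1}\Vert_{L_2}^2\big)^{1/2}$, and then I would insert three estimates. First, from $\log L_{t,h}\le h(R^2/t^2-1/t)$ and a Riemann-sum comparison with $\int g$, $g(s)=R^2/s^2-1/s$ (precise bounds on these products are in Appendix~\ref{sct:proofs:bound_propagation_error}): over the whole trajectory $\int_\epsilon^T g=R^2/\epsilon-R^2/T-\log(T/\epsilon)$, giving $\Pi_0\le c\,e^{R^2/\epsilon}\epsilon/T$, whereas for the discretization terms the crude bound $\Pi_{k+1}\le e^{R^2/\epsilon}$ (valid since $\int_\epsilon^a g\le R^2/\epsilon$ for every $a\ge\epsilon$) is enough. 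Second, $\Vert e_0\Vert_{L_2}\le R^2/\sqrt T$, so the initialization contribution is $\lesssim e^{R^2/\epsilon}\epsilon R^2/T^{3/2}$. Third, since the constant $C_t$ of Lemma~\ref{lemma:bound_spatial_regularity} is non-increasing in $t$, the time-local form of Lemma~\ref{lemma:bound_discretization_SDE} gives $\Vert D_{k+1}\Vert_{L_2}\lesssim\sqrt d\,R^2(T-t_{k+1})^{-2}h^{3/2}$; as $T-t_{k+1}=\epsilon+(N-1-k)h$, one has $\sum_k(T-t_{k+1})^{-4}=\sum_{j=0}^{N-1}(\epsilon+jh)^{-4}\lesssim(h\epsilon^3)^{-1}$ using $h\le\epsilon/2$, so the discretization contribution is $\lesssim e^{R^2/\epsilon}\sqrt d\,R^2\epsilon^{-3/2}h$. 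Adding the three pieces gives the claimed bound. The hard part is the martingale/Pythagorean step -- in particular arranging the coupling so that $D_{n+1}$ is a genuine $\F_{t_n}$-martingale increment and the propagated error is $\F_{t_n}$-measurable -- together with the need, unlike in Proposition~\ref{prop:convergence_SDE_emp_score}, to use the \emph{time-local} discretization bound, which is what lets the $N\sim(T-\epsilon)/h$ summands of size $O(h^3)$ sum to $O(h^2)$ with the right power of $\epsilon$ rather than to $O(Th^2)$.
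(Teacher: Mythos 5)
Your proposal captures the pivotal idea of the paper's proof exactly: couple the reverse SDE and the sampler via the same Brownian motion, write the one--step error as a Lipschitz--propagated piece plus the discretization residual $D_{n+1}=\int_{t_n}^{t_{n+1}}\int_{t_n}^{s}\nabla^2\log\rv p_u(\rv X_u)\cdot dW_u\,ds$, observe that $D_{n+1}$ is a martingale increment with zero $\F_{t_n}$--conditional mean while the propagated piece is $\F_{t_n}$--measurable, and deduce the Pythagorean recursion $\Vert e_{n+1}\Vert_{L_2}^2\le L_{T-t_n,h}^2\Vert e_n\Vert_{L_2}^2+\Vert D_{n+1}\Vert_{L_2}^2$. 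The remaining triangle--inequality decomposition (early stopping, initialization via Corollary~\ref{cor:init_error}) is also identical.

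Where you diverge is in how you sum the resulting series $\sum_k\Pi_{k+1}^2\Vert D_{k+1}\Vert_{L_2}^2$. The paper keeps the \emph{uniform} discretization bound $\Vert D_{k+1}\Vert_{L_2}\le\sqrt d\,\tfrac{2}{3}R^2\epsilon^{-2}h^{3/2}$ from Lemma~\ref{lemma:bound_discretization_SDE} but uses the \emph{sharp} propagation estimate $\Pi_n\le\tfrac{2\epsilon}{T-t_n}e^{R^2/\epsilon}$ together with $h\sum_n\Pi_n^2\le 8\epsilon\,e^{2R^2/\epsilon}$ (Lemma~\ref{lemma:bound_propagation_error:SDE}, equation~\eqref{eq:bound_propagation_error:SDE_sum_square}); all the decay in $T-t_n$ is carried by the products. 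You instead take the crude $\Pi_{k+1}\le e^{R^2/\epsilon}$ and push the decay into a time--local refinement of the discretization lemma. Both ways the telescoping $1/(T-t_n)$ decay converts $N\sim(T-\epsilon)/h$ summands of size $O(h^3)$ into an $O(h^2)$ total, so the rate is recovered either way; the paper's version has the advantage of reusing Lemma~\ref{lemma:bound_discretization_SDE} verbatim and keeping all the geometry inside the propagation lemma.

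One small but real flaw: you assert $\Vert D_{k+1}\Vert_{L_2}\lesssim\sqrt d\,R^2(T-t_{k+1})^{-2}h^{3/2}$ by claiming $C_t\le R^2/t^2$. That fails for $t>R^2$, where $C_t=1/t>R^2/t^2$; the correct time--local bound is $C_t\le\max\bigl(1/t,\ R^2/t^2\bigr)$. Your final estimate nonetheless survives, because over the early steps (forward time $T-t_{k+1}>R^2$) the contribution is
$dh^3\sum(T-t_{k+1})^{-2}\lesssim dh^2/R^2\le dR^4h^2/\epsilon^3$ under $\epsilon\le R^2$, matching the other regime --- but as written the intermediate inequality is incorrect. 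If you keep your route, replace the single power law by the $\max$, or simply observe $C_t\le R^2/\epsilon^2$ uniformly (the paper's choice) and put the decay into $\Pi_n$ via Lemma~\ref{lemma:bound_propagation_error:SDE}.
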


\textbf{Bound without early stopping.} Replacing Assumption 1 by Assumption 1', we get the following bound without early stopping error:
$$
W_2(\L(X),\L(\hat{X}))
\leq
\underbrace{\frac{2\tau}{(T+\tau)^{3/2}} \exp\op\frac{R^2}{\tau}\cp R^2}_{\textnormal{Propagated initialization error}} 
+
\underbrace{\sqrt{\frac{2d}{3}}\frac{R^2}{\tau^{3/2}}\exp\op\frac{R^2}{\tau}\cp h.}_{\textnormal{(Propagated) discretization error}}
$$

\textbf{Initialization error.} As before, even if we do not assume $\E[X]=0$, we can still use the first bound of Corollary~\ref{cor:init_error}, leading to the term $\frac{2\epsilon}{T} \exp\op\frac{R^2}{\epsilon}\cp R$ for the propagated initialization error.

\medskip
\textbf{Relationship with the work of \cite{wangWassersteinBoundsGenerative2025}.}
\citet{wangWassersteinBoundsGenerative2025} give Wasserstein convergence guarantees for the Euler-Maruyama discretization of the SDE (\ref{eq:reverse_SDE}) with a discretization error in $h$ and this with an inaccurate score network.
Although we pointed out in a first version of this work an error in their original proof that invalidated their bounds, the authors have since released an updated version containing a corrected argument.
Their revised proof relies on the same key observation as Proposition~\ref{prop:convergence_SDE_true_score} - that the discretization error at each step is independent of the errors from previous steps. To handle the case of an inaccurate score network, they introduce an additional technique that could be adapted to our framework, although it yields a looser estimate of the propagated score error than Proposition~\ref{prop:convergence_SDE_emp_score}.

\subsection{Discussion}
\label{sct:comments}

Under Assumption 1', we give simplified bounds by focusing only on the key parameters which are the time $T$ from which we reverse the forward process, the step size $h$ and the score error $\ep_{\text{score}}(t)$. We hide other constants depending on the regularity of the data distribution, and assume that $h$ is small enough such that we can replace the sum by an integral in the propagated score error. We get the following bounds:
\begin{align*}
\textbf{Euler (ODE):} &&W_2(\L(X),\L(\hat{X})) &\lesssim \frac{1}{T+\tau} + h + \int_0^T\sqrt{\frac{\tau}{t+\tau}}\ep_{\text{score}}(t)dt\\
\textbf{Heun (ODE):} &&W_2(\L(X),\L(\hat{X})) &\lesssim \frac{1}{T+\tau} + h^2 + \int_0^T\sqrt{\frac{\tau}{t+\tau}}\ep_{\text{score}}(t)dt\\
\textbf{Euler-Maruyama (SDE):} &&W_2(\L(X),\L(\hat{X})) &\lesssim \frac{1}{(T+\tau)^{3/2}} + \sqrt{h} + \int_0^T{\frac{\tau}{t+\tau}}\ep_{\text{score}}(t)dt
\end{align*}

For all our bounds, we observe desirable convergence properties. The initialization error goes to $0$ as $T\rightarrow\infty$, the discretization error goes to $0$ as  the step size $h\rightarrow0$, and the score approximation error goes to zero as $\ep_\textnormal{score} \rightarrow0$.

For the discretization error, we have a bound in $O(\sqrt{h})$ for the Euler-Maruyama sampler, in $O(h)$ for the Euler discretization of the probability ODE and in $O(h^2)$ for Heun discretization. These different convergence rates are consistent with empirical observations \citep[see, e.g.,][]{karrasElucidatingDesignSpace2022,yangDiffusionModelsComprehensive2023}, the Euler-Maruyama sampler usually requiring more calls to the score function $s_\theta$ (i.e., neural
function evaluations, NFE) than the deterministic samplers, among which Heun sampler is the fastest. Interestingly, when the score is known precisely (Proposition~\ref{prop:convergence_SDE_true_score}), we find  that the rate of the Euler-Maruyama is similar to the Euler deterministic sampler, in $O(h)$.

Finally, we observe that the error made during the sampling process, the initialization error and the score approximation error, are contracted by a factor of $\sqrt{\frac{\tau}{t+\tau}}\leq 1$ for the deterministic samplers (Euler and Heun) and by a stronger factor of $\frac{\tau}{t+\tau} \leq 1$ for the Euler-Maruyama sampler.
As $h$ goes to zero (i.e., for a large number of steps), the discretization error becomes negligible compared to initialization and score errors. This observation may help explain empirical findings \citep[see, e.g.,][]{xuRestartSamplingImproving2023}: deterministic samplers (Euler and Heun for the ODE) tend to perform better with a small number of function evaluations (NFE), whereas stochastic samplers (Euler–Maruyama for the SDE) are more effective at high NFE, as they provide stronger contraction of the initialization error and score errors.
 
\subsection{Comparison with Previous Works}
\label{sct:comp_previous}
We compare our results with previous or concurrent works on Wasserstein convergence guarantees for diffusion models. We focus on the Euler-Maruyama sampler for the reverse ODE, as there are, to our knowledge, no previous bounds in Wasserstein distance for Heun sampler and only bounds in the limited log-concave setting for the Euler sampler of the probability flow ODE. We compare to \citet{gentiloni-silveriLogConcavityScoreRegularity2025,brunoWassersteinConvergenceScorebased2025a,wangWassersteinBoundsGenerative2025} that give Wasserstein bounds using assumption on the data distribution that generalize beyond the log-concave case.

\subsubsection{Assumptions on the Data Distribution} We compare our Assumption 1' on the data distribution with the assumptions of \citet{gentiloni-silveriLogConcavityScoreRegularity2025,brunoWassersteinConvergenceScorebased2025a,wangWassersteinBoundsGenerative2025}.
In all cases, the idea is to generalize convergence result beyond the log-concave case, while keeping some form of regularity needed to get convergence results.
While Assumption 1' can in fact be seen as a particular case of assumptions made in previous work, we believe that it make proofs much simpler while remaining general enough to model real world data and understanding the mechanisms at stake.

Assumption 1 is very general as it does not assume anything about the data distribution apart from bounded support. It is therefore needed to add regularity. We believe that a simple and natural way to model this extra regularity is to add Gaussian noise to the distribution, either through early stopping (with an extra error term) or equivalently under Assumption 1'. 

This extra regularity is comparable to the assumptions of \citet{gentiloni-silveriLogConcavityScoreRegularity2025,brunoWassersteinConvergenceScorebased2025a,wangWassersteinBoundsGenerative2025}. 
Indeed, Assumption 1' can be seen as a special case of the Gaussian tail (Assumption 2) of \citet{wangWassersteinBoundsGenerative2025}, as the authors prove in their Theorem 3.7.
It can also be seen as a special case of Assumption 2 of \citet{brunoWassersteinConvergenceScorebased2025a}. Indeed, for $X = Z +\N(0,\tau^2 I)$, we have with Lemma~\ref{lemma:score_moment}:
$$
h(x) = -\nabla\log p_X(x) = \frac{x}{\tau^2} - \frac{\E[Z|X=x]}{\tau^2},
$$
leading to
\begin{align*}
\langle h(x)-h(\bar x), x -\bar x\rangle 
&= \frac{\Vert x - \bar x\Vert^2}{\tau^2} - \frac{\langle\E[Z|X=x] - \E[Z|X=\bar x],x-\bar x\rangle}{\tau^2}\\
&\geq \frac{\Vert x - \bar x\Vert^2}{\tau^2} - \frac{2R\Vert x - \bar x\Vert}{\tau^2}
\geq \frac{\Vert x - \bar x\Vert^2}{2\tau^2},
\end{align*}
for $\Vert x - \bar x\Vert \geq 4R$, hence we get Assumption 2(iii) with $\mu =\frac{1}{2\tau^2}$. And with Lemma~\ref{lemma:bound_spatial_regularity}, we have
\begin{align*}
\langle h(x)-h(\bar x), x -\bar x\rangle 
&= -\int_0^1 (x-\bar x)^T \nabla ^2 \log p_{\tau^2}(x + t(x-\bar x))(x-\bar x)dt\\
&\geq - \op\frac{R^2}{\tau^4}-\frac{1}{\tau^2}\cp \Vert x - \bar x\Vert^2,
\end{align*}
hence we get Assumption 2(ii) with $K =\frac{R^2}{\tau^4}-\frac{1}{\tau^2}$.

Finally, \citet{brunoWassersteinConvergenceScorebased2025a} prove the equivalence between their Assumption 2(ii)-(iii) and the weak-convexity assumption (\textbf{H1}(ii)) of \citet{gentiloni-silveriLogConcavityScoreRegularity2025}, while we get the one-sided Lipchitz assumption \textbf{H1}(i) using Lemma~\ref{lemma:bound_spatial_regularity},
\begin{align*}
\langle h(x)-h(\bar x), x -\bar x\rangle 
&= -\int_0^1 (x-\bar x)^T \nabla ^2 \log p_{\tau^2}(x + t(x-\bar x))(x-\bar x)dt\\
&\leq \frac{1}{\tau^2} \Vert x - \bar x\Vert^2.
\end{align*}

\subsubsection{Bounds}
These three works use the Ornstein–Uhlenbeck (OU) process instead of (\ref{eq:sde_forward}):
$$
\left\{
\begin{array}{l}
d X_t^\OU =  -X_t^\OU + \sqrt{2}dB_t, \\
X_0^\OU = X.
\end{array}\right.
$$
In Appendix~\ref{sct:param_time}, we discuss how different parametrizations of the forward process lead to different scalings and noise schedules and how it impacts the different quantities of interest. In particular, the OU process has marginals $X_t^\OU \sim s^\OU(t)(X + \N(0,\sigma^\OU(t)^2I))$ with
\begin{align*}
s^\OU(t) &= e^{-t}\\
\sigma^\OU(t) &= \sqrt{e^{2t}-1}.
\end{align*}

The work of \citet{gentiloni-silveriLogConcavityScoreRegularity2025} gives a bound of the form (focusing on the parameters $T$, $h$ and the score error):
$$
W_2(\L(X),\L(\hat{X})) \lesssim e^{-T}+ T\sqrt{h} + \ep T,
$$
where the score error is controlled with the uniform bound\footnote{For comparison purposes, we put aside the fact that \citet{gentiloni-silveriLogConcavityScoreRegularity2025} and \citet{brunoWassersteinConvergenceScorebased2025a} use a control of the score with respect to the empirical process.}
\begin{equation}
\label{eq:control_score_uni}
\ep_\textnormal{score}^\OU(t) = \Vert\nabla\log p_t^\OU(X_t^\OU)-s_\theta^\OU(t,X_t) \Vert_{L_2} \leq \ep.
\end{equation}
Writing the bound of \citet{brunoWassersteinConvergenceScorebased2025a} in a simplified form,\footnote{
We have removed the term corresponding to the early stopping error, replaced the term $e^{-2 \int_\epsilon^T \beta_t^{OS,K,\mu}dt-\epsilon}$ by $e^{-2T}$, using that $\beta_t^{OS,K,\mu} \leq 1$ (the simplified bound given here is therefore over-optimistic compared to the true complete bound given by the authors).
We do not give the full expressions of the constants $C_1(T)$ and $C_2(T)$ that are complicated and involve hypotheses on the generation process that are not comparable to what is done in the other works.
} we have
$$
W_2(\L(X),\L(\hat{X})) \lesssim e^{-2T}+ C_1(T)\sqrt{h} + C_2(T)\sqrt{\int_0^T \ep_\textnormal{score}^\OU(t)^2 dt},
$$
where $C_1(T)$ and $C_2(T)$ are constants depending on time $T$ and other parameters controlling the regularity of the process.
And finally the bound given by \citet{wangWassersteinBoundsGenerative2025} can be rewritten as
$$
W_2(\L(X),\L(\hat{X})) \lesssim e^{-T}+ h + \sqrt{\int_0^T \ep_\textnormal{score}^\OU(t)^2 dt}.
$$

To be able to compare these bounds with our own, we choose to express each bound in term of noise levels, with $\sigma_{\max} = \sigma(T)$, $N$ the number of sampling steps and $\td\ep_{\text{score}}(\sigma^2)$ the normalized $L_2$-error on the score at noise level $\sigma^2$, defined by 
$$
\td\ep_{\text{score}}(\sigma^2) = \left\Vert\nabla\log p\op X + \sigma(t)Z;\sigma(t)^2\cp - \td f_\theta( X + \sigma(t)Z,\sigma(t)^2)\right\Vert_{L_2},
$$
where $Z \sim \N(0,I)$ independent from $X$ and $p\op x; \sigma^2\cp$ is the density of the normalized variable at noise level $\sigma^2$: $X + \sigma Z$.
As described in Appendix~\ref{sct:param_time}, different time parametrization leads to different scaling of the score error and we have
$$
\ep_\textnormal{score}^\OU(t) = \frac{1}{s^\OU(t)} \td\ep_\textnormal{score}(\sigma^\OU(t)^2) = \sqrt{\sigma^\OU(t)^2 +1} \td\ep_\textnormal{score}(\sigma^\OU(t)^2).
$$
The bound of \citet{gentiloni-silveriLogConcavityScoreRegularity2025} becomes
$$
W_2(\L(X),\L(\hat{X})) \lesssim \frac{1}{\sigma_{\max}}+ \frac{\log(\sigma_{\max})^{3/2}}{N^{1/2}} + \ep \log(\sigma_{\max})
$$
where the normalized score error is controlled by
\begin{equation}
\label{eq:control_normalized_score}
\td \ep_\textnormal{score}(\sigma^2) \leq \frac{\ep}{\sqrt{\sigma^2+1}}.
\end{equation}
The bound of \citet{brunoWassersteinConvergenceScorebased2025a} becomes
$$
W_2(\L(X),\L(\hat{X})) \lesssim \frac{1}{\sigma_{\max}^2}
+ \td C_1(\sigma_{\max})\sqrt{\frac{\log(\sigma_{\max})}{N}}
+ \td C_2(\sigma_{\max})\sqrt{\int_0^{\sigma_{\max}^2} \td\ep_\textnormal{score}(\sigma^2)^2 d\sigma^2}.
$$
and finally the bound of \citet{wangWassersteinBoundsGenerative2025} becomes
$$
W_2(\L(X),\L(\hat{X})) 
\lesssim
\frac{1}{\sigma_{\max}^2}
+ \frac{\log(\sigma_{\max})}{N}
+ \sqrt{\int_0^{\sigma_{\max}^2} \td\ep_\textnormal{score}(\sigma^2)^2 d\sigma^2}.
$$
For our own bound, as with our time parametrization we simply have $s(t) = 1$ and $\sigma(t)^2 = t$, using the simplified expression of section~\ref{sct:comments}, we have 
$$
W_2(\L(X),\L(\hat{X})) \lesssim \frac{1}{\sigma_{\max}^{3}} + \frac{\sigma_{\max}}{\sqrt{N}} + \int_0^{\sigma_{\max}^2}{\frac{1}{\sigma^2+1}}\td\ep_{\text{score}}(\sigma^2)d\sigma^2,
$$
as $\frac{\tau}{\sigma^2+\tau} \leq \frac{\max(\tau,1)}{\sigma^2+1} \lesssim \frac{1}{\sigma^2 +1}$.

We observe that our control of the score approximation error is stronger that the one of \citet{brunoWassersteinConvergenceScorebased2025a} and \citet{wangWassersteinBoundsGenerative2025}, as it is smaller. Indeed, using Cauchy–Schwarz inequality, we have
\begin{align*}
\int_0^{\sigma_{\max}^2}{\frac{1}{\sigma^2+1}}\td\ep_{\text{score}}(\sigma^2)d\sigma^2
&\leq
\sqrt{\int_0^{\sigma_{\max}^2} \frac{d\sigma^2}{(\sigma^2+1)^{2}}}
\sqrt{\int_0^{\sigma_{\max}^2}\td\ep_{\text{score}}(\sigma^2)^2d\sigma^2 }\\
&\leq
\sqrt{\int_0^{+\infty} \frac{d\sigma^2}{(\sigma^2+1)^2}}
\sqrt{\int_0^{\sigma_{\max}^2}\td\ep_{\text{score}}(\sigma^2)^2d\sigma^2 }\\
&= \sqrt{\int_0^{\sigma_{\max}^2}\td\ep_{\text{score}}(\sigma^2)^2d\sigma^2 }.
\end{align*}
Moreover, under Assumption (\ref{eq:control_score_uni}) (or equivalently  (\ref{eq:control_normalized_score})) of \citet{gentiloni-silveriLogConcavityScoreRegularity2025}, we get
$$
\sqrt{\int_0^{\sigma_{\max}^2} \ep_\textnormal{score}(\sigma^2)^2 dt} \lesssim \sqrt{\log\sigma_{\max}} \ep,
$$
and
$$
 \int_0^{\sigma_{\max}^2}{\frac{1}{\sigma^2+1}}\td\ep_{\text{score}}(\sigma^2)d\sigma^2 \leq \ep \int_0^{+\infty} \frac{d\sigma^2}{(\sigma^2+1)^{3/2}}\lesssim \ep.
$$
Our control of the score approximation error is of order $\ep$ compare to $\log(\sigma_{\max}) \ep$ for \citet{gentiloni-silveriLogConcavityScoreRegularity2025} and  $\sqrt{\log(\sigma_{\max})} \ep$ for \citet{wangWassersteinBoundsGenerative2025}.

We also observe that the order of the initialisation error is $\sigma_{\max}^{-1}$ for \citet{gentiloni-silveriLogConcavityScoreRegularity2025} and \citet{wangWassersteinBoundsGenerative2025}, of order $\sigma_{\max}^{-2}$ for \cite{brunoWassersteinConvergenceScorebased2025a}, while we get $\sigma_{\max}^{-3}$ with our careful control of the propagation of error and by using the second bound of Proposition~\ref{prop:init_error}.

Finally, we observe that for the discretization error, we get a rate in $N^{-1/2}$ similar to \citet{gentiloni-silveriLogConcavityScoreRegularity2025} and \citet{brunoWassersteinConvergenceScorebased2025a}, while \citet{wangWassersteinBoundsGenerative2025} have a rate in $N^{-1}$, as already highlighted in section~\ref{sct:convergence_SDE_true_score}.
Moreover we get a rate is $\sigma_{\max}$ while other works have a rate in $\sqrt{\log\sigma_{\max}}$, $\log\sigma_{\max}$ or $(\log\sigma_{\max})^{3/2}$. 
We believe that this less favorable rate arises from the specific parameterization of the process in (\ref{eq:sde_forward}) compared to the Ornstein–Uhlenbeck process.

Appendix~\ref{sct:param_time:changes} discusses how our analysis can be extended to alternative time parameterizations. In particular, we show that modifying the time parameterization does not affect the order of magnitude of the initialization or score approximation errors. Finally, our framework also permits control the discretization error in the general case; however, a detailed derivation of these results remains for future work.

\section{Conclusion}

Within the simple framework of Assumption 1 (or 1'), we establish new Wasserstein convergence guarantees for diffusion-based generative models, covering both stochastic (DDPM-like) and deterministic (DDIM-like) samplers. Notably, we derive the first Wasserstein convergence bound for the Heun sampler and improve existing results for the Euler sampler of the probability flow ODE.
Our analysis emphasizes the importance of the learned score’s spatial regularity and advocates controlling the score error relative to the true reverse process, consistent with denoising score matching. We also leverage recent results on smoothed Wasserstein distances to refine the initialization error bounds.

There are several avenues to improve this work.
Under Assumption 1, our bounds exhibit exponential dependency on $R$ and $1/\epsilon$.
This is because we do not make any regularity assumption besides bounded support; thus, the distribution could be very rough.
With Assumption 1', we are able to add regularity by adding Gaussian noise, hence getting ride of early stopping and the dependency in $1/\epsilon$. 
We have shown that this framework is a particular case of those studied by 
\citet{gentiloni-silveriLogConcavityScoreRegularity2025},  \citet{brunoWassersteinConvergenceScorebased2025a} and \citet{wangWassersteinBoundsGenerative2025}. More generally, it involves following the spatial and time regularity of the score along the noising process \citep[see, e.g.][]{confortiWeakSemiconvexityEstimates2024,brigatiHeatFlowLogconcavity2024}.
Extending our proofs to these broader frameworks is an interesting direction, although likely at the expense of the current simplicity.

We could also try to improve our bounds by looking at the dimensional structure of the data distribution, for example under the \emph{manifold hypothesis}, as \citet{tangAdaptivityDiffusionModels2024} or \citet{azangulovConvergenceDiffusionModels2025}, to hope to replace the ambient dimension by the intrinsic dimension of the data manifold.

Following the discussion initiated in Appendix ~\ref{sct:param_time}, an interesting direction for future work would be to extend this analysis to alternative time parameterizations (i.e. noise schedules). It would also be insightful to explore how similar bounds could extend to flow matching or stochastic interpolants \citep{liuFlowStraightFast2023,lipmanFlowMatchingGenerative2023,albergoStochasticInterpolantsUnifying2023}.
When the source distribution is a Gaussian, flow matching corresponds to a particular time parametrization of diffusion models, hence the tools developed here could be directly used. Using a more general source distribution distribution will required adapting how we control of the regularity of the velocity fields, under proper assumption on this distribution.

Finally, we believe that our proof for Euler and Heun discretization schemes could be adapted to more general $p$-th order Runga-Kutta schemes on the probability flow ODEs, as it has already been done in TV distance \citep{huangConvergenceAnalysisProbability2025,huangFastConvergenceHighOrder2025}. Indeed, one will simply need to control higher order time derivatives of $\nabla \log \rv p_t (x_t)$, thus involving higher-order conditional moments of $X$ which can be bounded under Assumption 1.

\acks{%
This work has received support from the French government, managed by the National Research Agency, under the France 2030 program with the reference ``PR[AI]RIE-PSAI'' (ANR-23-IACL-0008).
}

\newpage
\appendix
\appendixpage

\section{Proofs}
\subsection{Technical Lemmas}
\subsubsection{Expressing the Score with Conditional Moments of the Probability Distribution}
\label{sct:score_moment}
It is known that we can express derivatives of the log-density $\log p_t$ with conditional moments (see, e.g., for exemple appendix B of \citet{saremiChainLogConcaveMarkov2023} for the gradient and the Hessian). Here, we give expressions for $\nabla\log p_t,$ $\nabla^2\log p_t$ and $\nabla\Delta  \log p_t$.

\begin{lemma}
\label{lemma:score_moment}
For $t>0$,
\begin{align*}
\nabla\log p_t(x) &= \frac{1}{t}(\E[X|X_t = x] - x) = -\frac{1}{t}E[B_t|X_t = x],\\
\nabla^2\log p_t(x) &= - \frac{1}{t}I + \frac{1}{t^2}\cov(X|X_t=x),\\
\nabla\Delta  \log p_t(x) &= \frac{1}{t^3} \E[\Vert X\Vert^2(X- \E[X|X_t=x])|X_t=x] - \frac{2}{t^3} \cov(X|X_t=x) \cdot\E[X|X_t = x].
\end{align*}
\end{lemma}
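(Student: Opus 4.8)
The plan is to start from the convolution representation $p_t(x)=\int\phi_t(x-y)\,d\L(X)(y)$, where $\phi_t$ is the density of $\N(0,tI)$; this is valid for every $t>0$ even when $X$ has no density, because $X_t=X+B_t$ with $B_t\bot X$. The posterior $\L(X\mid X_t=x)$ then has ``density'' proportional to $y\mapsto\phi_t(x-y)$ against $\L(X)$, and the single computation $\nabla_x\phi_t(x-y)=\tfrac1t(y-x)\phi_t(x-y)$ drives everything. Under Assumption~1 the integrand and all its $x$-derivatives are dominated locally uniformly by integrable functions, so differentiating under the integral sign is legitimate throughout.

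For the first identity, differentiating once and dividing by $p_t(x)$ gives $\nabla\log p_t(x)=\tfrac1t\bigl(\E[X\mid X_t=x]-x\bigr)$; the second form is immediate from $x-X=B_t$ on $\{X_t=x\}$. For the Hessian, I would differentiate twice to get $\nabla^2 p_t(x)=p_t(x)\bigl(-\tfrac1t I+\tfrac1{t^2}\E[(X-x)(X-x)^\top\mid X_t=x]\bigr)$, then use $\nabla^2\log p_t=\nabla^2 p_t/p_t-(\nabla p_t/p_t)(\nabla p_t/p_t)^\top$; since translating $X$ by the constant $x$ does not change its covariance, the cross-term collapses and one is left with $-\tfrac1t I+\tfrac1{t^2}\cov(X\mid X_t=x)$.

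The third identity needs one extra lever: a Stein-type formula, proved just like the gradient identity, stating that for any (scalar- or vector-valued) $g$ with enough moments $\nabla_x\E[g(X)\mid X_t=x]=\tfrac1t\cov\bigl(g(X),X\mid X_t=x\bigr)$, the $x$-shifts again cancelling. Taking $g=\mathrm{id}$ shows the Jacobian of $m(x):=\E[X\mid X_t=x]$ equals $\tfrac1t\cov(X\mid X_t=x)$. Then I would take the trace of the Hessian identity, $\Delta\log p_t(x)=-\tfrac dt+\tfrac1{t^2}\bigl(\E[\|X\|^2\mid X_t=x]-\|m(x)\|^2\bigr)$, differentiate in $x$ using the Stein formula with $g(y)=\|y\|^2$ on the first term and the Jacobian of $m$ on the second, and collect the $1/t$ factors to reach $\nabla\Delta\log p_t(x)=\tfrac1{t^3}\cov(\|X\|^2,X\mid X_t=x)-\tfrac2{t^3}\cov(X\mid X_t=x)\,m(x)$; rewriting $\cov(\|X\|^2,X\mid X_t=x)=\E[\|X\|^2(X-m(x))\mid X_t=x]$ gives exactly the stated expression.

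The main obstacle is purely bookkeeping in this last step: tracking the chain-rule factors of $1/t$ and differentiating $\|m(x)\|^2$, which forces the Jacobian of the posterior mean; the Stein identity is what makes both of these routine rather than messy. An equivalent alternative I could use instead is to write $\log p_t(x)=\text{const}-\|x\|^2/(2t)+K(x/t)$ with $K$ the cumulant generating function of the tilted measure $e^{-\|y\|^2/(2t)}\,d\L(X)(y)$, whose exponential tilt by $x/t$ is precisely $\L(X\mid X_t=x)$; the three formulas are then the first three cumulants (mean, covariance, third central moment) composed with $\lambda=x/t$, and the stated form of the third follows by expanding $\|X-m\|^2$.
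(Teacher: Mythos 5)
Your proposal is correct and follows essentially the same route as the paper: differentiate the Gaussian convolution under the integral sign and express each derivative through conditional moments of $X$ given $X_t=x$. The only organizational difference is that you derive the Hessian by computing $\nabla^2 p_t/p_t$ and subtracting the rank-one term, and you package the recurring computation into a single Stein-type identity $\nabla_x\E[g(X)\mid X_t=x]=\tfrac1t\cov(g(X),X\mid X_t=x)$, applied with $g=\mathrm{id}$ and $g=\|\cdot\|^2$. The paper instead differentiates $\E[X\mid X_t=x]$ and $\E[\|X\|^2\mid X_t=x]$ separately by hand, which amounts to proving two special cases of the same identity (and does so again later, as the one-dimensional $g(y)=y^k$ case, in the proof of Lemma~\ref{lemma:score_moment_higher_order}); stating the general lemma once is a genuine tidying that would also streamline the higher-order derivative formulas. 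Your closing remark that all three formulas are the first three cumulants of the exponentially tilted measure $e^{-\|y\|^2/(2t)}\,d\L(X)(y)$ evaluated at parameter $x/t$ is an attractive alternative the paper does not use, and it makes the structure of the higher-order identities transparent, but the mainline argument you give matches the paper's.
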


\begin{proof}
For $x\in \R^d$, we have 
$$
p_t(x) =  \frac{1}{(2\pi t)^{d/2}}\int e^{-\frac{\Vert x_0-x\Vert^2}{2t}}dP_X(x_0).
$$
It leads to:
\begin{align*}
\nabla\log p_t(x) 
&= \frac{\int \frac{x_0-x}{t}e^{-\frac{\Vert x_0-x\Vert^2}{2t}}dP_X(x_0)}{\int e^{-\frac{\Vert x_0-x\Vert^2}{2t}}dP_X(x_0)}\\
&= \frac{1}{t}\op E[X -X_t|X_t = x] \cp = -\frac{1}{t}E[B_t|X_t = x]
= \frac{1}{t}\op E[X|X_t = x] -x\cp.
\end{align*}

\noindent Then we compute 
\begin{align*}
\nabla &E[X|X_t = x] \\
&=\nabla\op\frac{\int x_0e^{-\frac{\Vert x_0-x\Vert^2}{2t}}dP_X(x_0)}{\int e^{-\frac{\Vert x_0-x\Vert^2}{2t}}dP_X(x_0)}\cp\\
&=\frac{\int x_0 \op\frac{x_0-x}{t}\cp^\top e^{-\frac{\Vert x_0-x\Vert^2}{2t}}dP_X(x_0)}{\int e^{-\frac{\Vert x_0-x\Vert^2}{2t}}dP_X(x_0)} - \frac{\int x_0 e^{-\frac{\Vert x_0-x\Vert^2}{2t}}dP_X(x_0)\op\int\frac{x_0-x}{t}e^{-\frac{\Vert x_0-x\Vert^2}{2t}}dP_X(x_0)\cp^\top}{\op \int e^{-\frac{\Vert x_0-x\Vert^2}{2t}}dP_X(x_0)\cp^2} \\
&= \frac{1}{t}\op\E[X(X-X_t)^\top|X_t=x] -\E[X|X_t=x]\E[X-X_t|X_t=x]^\top\cp \\
&= \frac{1}{t}\op\E[XX^\top|X_t=x] -\E[X|X_t=x]\E[X|X_t=x]^\top\cp
= \frac{1}{t}\cov(X|X_t=x).
\end{align*}
This leads to
\begin{align*}
\nabla^2\log p_X(x) =\frac{1}{t}\op \nabla E[X|X_t = x] - I \cp =- \frac{1}{t}I + \frac{1}{t^2}\cov(X|X_t=x).
\end{align*}
From the expression of the Hessian, we get the Laplacian by taking the trace,
\begin{align*}
\Delta\log p_X(x) &= \tr\nabla^2\log p_X(x) \\
&=- \frac{1}{t}d + \frac{1}{t^2}\op\E[\tr(XX^\top)|X_t=x] -\tr(\E[X|X_t=x]\E[X|X_t=x]^\top)\cp \\
&=- \frac{1}{t}d + \frac{1}{t^2}\op\E[\Vert X\Vert^2|X_t=x] -\Vert\E[X|X_t=x]\Vert^2\cp,
\end{align*}
leading to 
$$
\nabla \Delta\log p_X(x) = \frac{1}{t^2}\op\nabla\E[\Vert X\Vert^2|X_t=x] -2 \nabla\E[X|X_t=x]\cdot\E[X|X_t=x]\cp.
$$
We already know that
$$
\nabla E[X|X_t = x]  = \frac{1}{t}\cov(X|X_t=x),
$$
and similarly we compute,
\begin{align*}
\nabla&\E[\Vert X\Vert^2|X_t=x] \\
&=\nabla\op\frac{\int \Vert x_0\Vert^2e^{-\frac{\Vert x_0-x\Vert^2}{2t}}dP_X(x_0)}{\int e^{-\frac{\Vert x_0-x\Vert^2}{2t}}dP_X(x_0)}\cp\\
&=\frac{\int \Vert x_0\Vert^2 \op\frac{x_0-x}{t}\cp e^{-\frac{\Vert x_0-x\Vert^2}{2t}}dP_X(x_0)}{\int e^{-\frac{\Vert x_0-x\Vert^2}{2t}}dP_X(x_0)} - \frac{\int \Vert x_0\Vert^2 e^{-\frac{\Vert x_0-x\Vert^2}{2t}}dP_X(x_0) \int\frac{x_0-x}{t}e^{-\frac{\Vert x_0-x\Vert^2}{2t}}dP_X(x_0)}{\op \int e^{-\frac{\Vert x_0-x\Vert^2}{2t}}dP_X(x_0)\cp^2} \\
&= \frac{1}{t}\op \E[\Vert X\Vert^2(X-X_t)|X_t=x] - \E[\Vert X\Vert^2|X_t=x] \E[X-X_t|X_t=x]\cp\\
&= \frac{1}{t}\op \E[\Vert X\Vert^2X|X_t=x] - \E[\Vert X\Vert^2|X_t=x] \E[X|X_t=x]\cp\\
&= \frac{1}{t}\E[\Vert X\Vert^2(X-\E[X|X_t=x])|X_t=x].
\end{align*}

\noindent It finally leads to 
\begin{align*}
\nabla\Delta  \log p_X(x) &= \frac{1}{t^3} \E[\Vert X\Vert^2(X- \E[X|X_t=x])|X_t=x] - \frac{2}{t^3} \cov(X|X_t=x) \cdot\E[X|X_t = x].
\end{align*}
\end{proof}

We also gives expressions for spatial derivatives of higher orders, but limit ourselves to the case of $d=1$ for simpler notations.
\begin{lemma}
\label{lemma:score_moment_higher_order}
In dimension $d=1$, for $t>0$,
\begin{align*}
\partial_x^4 \log   p_t(x)
&= \frac{1}{t^4}(\E[X^4|X_{t}=x] -4\E[X^3|X_{t}=x]\E[X|X_{t} = x] -3\E[X^2|X_{t}=x]^2
\\&\quad 
+ 12\E[X^2|X_{t}=x]\E[X|X_{t} = x]^2
-6\E[X|X_{t}=x]^4),\\
\partial_x^5 \log   p_t(x)
&= \frac{1}{t^5}(
\E[X^5|X_{t}=x]
-5 \E[X^4|X_{t}=x] \E[X|X_{t}=x]
-10 \E[X^3|X_{t}=x] \E[X^2|X_{t}=x]
\\&\quad
+20 \E[X^3|X_{t}=x] \E[X|X_{t}=x]^2
+30 \E[X^2|X_{t}=x]^2 \E[X|X_{t}=x]
\\&\quad
-60 \E[X^2|X_{t}=x] \E[X|X_{t}=x]^3
+24 \E[X|X_{t}=x]^5
).
\end{align*}
\end{lemma}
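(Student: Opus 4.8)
The plan is to recognize that, up to lower-order terms, the spatial derivatives of $\log p_t$ are exactly the \emph{cumulants} of the conditional law $\L(X\mid X_t=x)$, and then to substitute the classical moment--cumulant relations. Concretely, since $X_t = X + B_t$ with $B_t\sim\N(0,tI)$, writing $p_t$ as an integral against $P_X$ as in the proof of Lemma~\ref{lemma:score_moment} and pulling out the Gaussian prefactor gives, for $\psi(\lambda) := \log\int e^{\lambda x_0 - x_0^2/(2t)}\,dP_X(x_0)$,
$$
\log p_t(x) = -\tfrac12\log(2\pi t) - \frac{x^2}{2t} + \psi\!\op\frac{x}{t}\cp .
$$
Here $\psi$ is the cumulant generating function of the (unnormalized) measure $e^{-x_0^2/(2t)}\,dP_X(x_0)$, whose exponential tilt by $e^{\lambda x_0}$ at $\lambda = x/t$ is precisely $\L(X\mid X_t=x)$; hence $\psi^{(k)}(x/t) = \kappa_k(x)$, the $k$-th cumulant of $X\mid X_t=x$. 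Under Assumption~1 all conditional moments are finite and $\psi$ is smooth, so differentiation under the integral sign is legitimate, and the chain rule yields $\partial_x^k\log p_t(x) = t^{-k}\kappa_k(x)$ for every $k\geq 3$ (the quadratic term contributes only for $k\leq 2$); for $k=1,2$ one recovers the formulas of Lemma~\ref{lemma:score_moment}.

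It then remains to expand $\kappa_4$ and $\kappa_5$ in terms of the raw conditional moments $\mu_j := \E[X^j\mid X_t=x]$. Using the standard moment--cumulant identities
$$
\kappa_4 = \mu_4 - 4\mu_3\mu_1 - 3\mu_2^2 + 12\mu_2\mu_1^2 - 6\mu_1^4, \qquad \kappa_5 = \mu_5 - 5\mu_4\mu_1 - 10\mu_3\mu_2 + 20\mu_3\mu_1^2 + 30\mu_2^2\mu_1 - 60\mu_2\mu_1^3 + 24\mu_1^5 ,
$$
and multiplying by $t^{-4}$ and $t^{-5}$ respectively yields exactly the claimed expressions. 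Alternatively, to stay closer to the bare-hands style of the proof of Lemma~\ref{lemma:score_moment}, one can iterate the elementary identity $\partial_x\E[g(X)\mid X_t=x] = \tfrac1t\,\cov\!\op g(X),X\mid X_t=x\cp$ (valid for any polynomial $g$, by the same differentiation-under-the-integral computation) starting from the expression for $\partial_x^2\log p_t$, applying the product rule at each step; this produces the fourth and fifth derivatives without ever mentioning cumulants.

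Either way, there is no conceptual obstacle: the only real difficulty is bookkeeping, since the number of terms grows quickly and the combinatorial coefficients ($12$ and $6$ at order $4$; $20$, $30$, $60$, $24$ at order $5$) must be tracked carefully. The computation can be cross-checked against the known moment--cumulant table, or by verifying that specializing the general formula recovers the already-established cases $k\leq 3$ of Lemma~\ref{lemma:score_moment} and of the $\nabla\Delta\log p_t$ expression.
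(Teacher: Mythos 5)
Your proposal is correct, and it takes a genuinely different route from the paper's proof. The paper proceeds by bare-hands iteration: starting from the $d=1$ form of $\partial_x^3\log p_t$, it establishes the identity
$$
\partial_x\E[X^k\mid X_t=x] = \tfrac1t\op\E[X^{k+1}\mid X_t=x]-\E[X^k\mid X_t=x]\E[X\mid X_t=x]\cp
$$
and then applies the product rule twice to get orders $4$ and $5$, tracking all the terms explicitly. You instead peel off the Gaussian prefactor to see that $\partial_x^k\log p_t(x) = t^{-k}\kappa_k(x)$ for $k\geq 3$, where $\kappa_k(x)$ is the $k$-th cumulant of $\L(X\mid X_t=x)$ --- because $\L(X\mid X_t=x)$ is exactly the exponential tilt of $e^{-x_0^2/(2t)}\,dP_X(x_0)$ at parameter $x/t$, and derivatives of a log-partition function are cumulants of the tilted law --- and then substitutes the standard moment--cumulant identities, which indeed reproduce the stated coefficients ($1,-4,-3,12,-6$ and $1,-5,-10,20,30,-60,24$). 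The cumulant viewpoint is cleaner and immediately generalizes to arbitrary order $k$, explaining the combinatorics at once rather than tracking them by hand; the paper's bare-hands route (which you also sketch as the ``alternatively'' path, and which is essentially what the paper does) has the virtue of being entirely self-contained and not invoking the moment--cumulant table. Both are valid; the one thing you should be careful to keep is a word on why differentiation under the integral sign is justified, which you do note (Assumption~1 gives bounded conditional moments, so everything is smooth in $x$).
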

\begin{proof}
In dimension $d=1$, the last identity of Lemma~\ref{lemma:score_moment} becomes
\begin{align*}
\partial_x^3 \log   p_t(x) &= \frac{1}{t^3} \E[X^2(X-\E[X|X_{t}=x] )|X_{t}=x] - \frac{2}{t^3} \cov(X|X_{t}=x) \cdot\E[X|X_{t} = x]\\
&= \frac{1}{t^3} \E[X^3|X_{t}=x] - \frac{3}{t^3} \E[X^2|X_{t}=x]\E[X|X_{t} = x] + \frac{2}{t^3} \E[X|X_{t}=x]^3.
\end{align*}
We start by proving that for $k\geq 0$,
$$
\partial_x\op \E[X^k|X_t=x]\cp = \frac{1}{t}\op E[X^{k+1}|X_t=x] -E[X^k|X_t=x]E[X|X_t=x]\cp.
$$
Indeed,
\begin{align*}
\partial_x&\op \E[X^k|X_t=x]\cp\\
&= \partial_x\op \frac{\int x_0^ke^{-\frac{(x_0-x)^2}{2t}}dP_X(x_0)}{\int e^{-\frac{(x_0-x)^2}{2t}}dP_X(x_0)}\cp\\
&= \frac{1}{t}\op\frac{\int (x_0-x)x_0^ke^{-\frac{(x_0-x)^2}{2t}}dP_X(x_0)}{\int e^{-\frac{(x_0-x)^2}{2t}}dP_X(x_0)} - \frac{\int x_0^ke^{-\frac{(x_0-x)^2}{2t}}dP_X(x_0)\int  (x_0-x)e^{-\frac{(x_0-x)^2}{2t}}dP_X(x_0)}{\op\int e^{-\frac{(x_0-x)^2}{2t}}dP_X(x_0)\cp^2}\cp\\
&= \frac{1}{t}\op \E[(X-x)X^k|X_t=x]- \E[X^k|X_t=x]\E[(X-x)|X_t=x]\cp\\
&= \frac{1}{t}\op \E[X^{k+1}|X_t=x]- \E[X^k|X_t=x]\E[X|X_t=x]\cp.
\end{align*}
We can then compute
\begin{align*}
\partial_x^4& \log   p_t(x)\\
&= \frac{1}{t^3}(\partial_x \E[X^3|X_{t}=x] -3 \partial_x\E[X^2|X_{t}=x]\E[X|X_{t} = x]  -3 \E[X^2|X_{t}=x]\partial_x\E[X|X_{t} = x] \\
&\quad +6 \E[X|X_{t}=x]^2 \partial_x\E[X|X_{t}=x])\\
&= \frac{1}{t^4}(\E[X^4|X_{t}=x] -4\E[X^3|X_{t}=x]\E[X|X_{t} = x] -3\E[X^2|X_{t}=x]^2
\\&\quad
+ 12\E[X^2|X_{t}=x]\E[X|X_{t} = x]^2
-6\E[X|X_{t}=x]^4),
\end{align*}
and,
\begin{align*}
\partial_x^5& \log   p_t(x)\\
&= \frac{1}{t^4}(
\partial_x\E[X^4|X_{t}=x] 
-4\partial_x\E[X^3|X_{t}=x]\E[X|X_{t} = x]
-4\E[X^3|X_{t}=x]\partial_x\E[X|X_{t} = x]
\\&\quad
-6\E[X^2|X_{t}=x]\partial_x\E[X^2|X_{t}=x]
+ 12\partial_x\E[X^2|X_{t}=x]\E[X|X_{t} = x]^2
\\&\quad
+ 24\E[X^2|X_{t}=x]\E[X|X_{t} = x]\partial_x\E[X|X_{t} = x]
-24 \E[X|X_{t}=x]^3\partial_x\E[X|X_{t}=x]
),\\
&= \frac{1}{t^5}(
\E[X^5|X_{t}=x]
-5 \E[X^4|X_{t}=x] \E[X|X_{t}=x]
-10 \E[X^3|X_{t}=x] \E[X^2|X_{t}=x]
\\&\quad
+20 \E[X^3|X_{t}=x] \E[X|X_{t}=x]^2
+30 \E[X^2|X_{t}=x]^2 \E[X|X_{t}=x]
\\&\quad
-60 \E[X^2|X_{t}=x] \E[X|X_{t}=x]^3
+24 \E[X|X_{t}=x]^5
).
\end{align*}
\end{proof}

\subsubsection{Propagation of Errors}
\label{sct:proofs:bound_propagation_error}
In this section, we give lemmas controlling the propagation of errors through the steps of the different algorithms, by bounding the product of factors of the form
\begin{equation*}
\tag{\ref{eq:Lipchitz_constant}}
L_{t,h} = 1 + h\op\frac{R^2}{t^2} - \frac{1}{t}\cp.
\end{equation*}
\begin{lemma}
\label{lemma:bound_propagation_error:ODE}
Let $T,\epsilon>0$, $N\geq 1$ number of steps, denote $h = \frac{T-\epsilon}{N}$ and $t_n = nh$.  Assume that $h\leq \epsilon$, then for $n \in \{0,\dots,N\}$, we have
\begin{equation}
\label{eq:bound_propagation_error:ODE}
\prod_{m=n}^{N-1}L_{T-t_m,h/2} \leq  \sqrt{\frac{2\epsilon}{T-t_{n-1}}} \exp\op\frac{R^2}{2\epsilon}\cp,
\end{equation}
denoting $t_{-1} = -h$, and
\begin{equation}
\label{eq:bound_propagation_error:ODE_sum}
\sum_{k=1}^{N}\op\prod_{m=k}^{N-1}L_{m}\cp\int_{t_{k-1}}^{t_k} \frac{1}{(T-u)^{3/2}} du \leq  \frac{1}{\sqrt{2\epsilon}}\exp\op\frac{R^2}{2\epsilon}\cp.
\end{equation}
\end{lemma}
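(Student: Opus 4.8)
The plan is to reduce the whole statement to the elementary inequality $1+x\le e^x$ together with monotone sum-versus-integral comparisons, in the same spirit as the bookkeeping used for the discretization error. Throughout, write $s_m = T - t_m = T - mh$, so that $s_0 = T$, $s_N = \epsilon$, $s_{m+1} = s_m - h$, and, for $m\le N-1$, $s_m \ge s_{N-1} = \epsilon + h > 0$. Since $h\le\epsilon$, each factor $L_{T-t_m,h/2} = 1 + \frac h2\left(\frac{R^2}{s_m^2} - \frac1{s_m}\right)$ is positive (in fact $\ge \tfrac12$, because $\frac h2\cdot\frac1{s_m}\le\frac h{2\epsilon}\le\frac12$), so we may freely take products and logarithms and use $\prod(1+x_m)\le e^{\sum x_m}$.

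First I would prove (\ref{eq:bound_propagation_error:ODE}). Applying $1+x\le e^x$,
\[
\prod_{m=n}^{N-1}L_{T-t_m,h/2} \le \exp\left(\frac h2\sum_{m=n}^{N-1}\frac{R^2}{s_m^2}\right)\exp\left(-\frac h2\sum_{m=n}^{N-1}\frac{1}{s_m}\right).
\]
For the first exponent, since $s\mapsto s^{-2}$ is decreasing, $h/s_m^2 \le \int_{s_{m+1}}^{s_m}s^{-2}\,ds$, and the intervals $[s_{m+1},s_m]$ for $m=n,\dots,N-1$ partition $[\epsilon,s_n]$; hence $\frac h2\sum_{m=n}^{N-1}R^2/s_m^2 \le \frac{R^2}2\int_{\epsilon}^{s_n}s^{-2}\,ds \le \frac{R^2}{2\epsilon}$. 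For the second exponent, since $s\mapsto s^{-1}$ is decreasing, $h/s_m \ge \int_{s_m}^{s_m+h}s^{-1}\,ds$, and the intervals $[s_m,s_m+h]$ for $m=n,\dots,N-1$ partition $[\epsilon+h,\,s_n+h]$; hence $\frac h2\sum_{m=n}^{N-1}1/s_m \ge \frac12\log\frac{s_n+h}{\epsilon+h}$. Combining,
\[
\prod_{m=n}^{N-1}L_{T-t_m,h/2} \le \exp\left(\frac{R^2}{2\epsilon}\right)\sqrt{\frac{\epsilon+h}{s_n+h}} \le \exp\left(\frac{R^2}{2\epsilon}\right)\sqrt{\frac{2\epsilon}{s_n}},
\]
where the last step uses $\epsilon+h\le 2\epsilon$ and $s_n+h\ge s_n$ (both from $h\le\epsilon$), and $s_n = T-t_n$; the degenerate case $n=N$ (empty product) holds because the right-hand side is then $\ge 1$.

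For (\ref{eq:bound_propagation_error:ODE_sum}) I would insert (\ref{eq:bound_propagation_error:ODE}) into each summand,
\[
h\sum_{k=1}^{N}\prod_{m=k}^{N-1}L_{T-t_m,h/2} \le \sqrt{2\epsilon}\exp\left(\frac{R^2}{2\epsilon}\right) h\sum_{k=1}^{N}\frac1{\sqrt{s_k}},
\]
and bound the remaining sum in the same manner: $s\mapsto s^{-1/2}$ is decreasing, so $h/\sqrt{s_k}\le\int_{s_{k+1}}^{s_k}s^{-1/2}\,ds$, and the intervals $[s_{k+1},s_k]$ for $k=1,\dots,N$ partition $[\epsilon-h,\,s_1]\subseteq[0,T]$ (here $\epsilon-h\ge 0$ again uses $h\le\epsilon$). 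Hence $h\sum_{k=1}^{N}1/\sqrt{s_k}\le\int_{\epsilon-h}^{s_1}s^{-1/2}\,ds\le 2\sqrt{s_1}\le 2\sqrt{T}$, which produces the constant $\sqrt{8\epsilon}$.

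There is no genuine difficulty here; the only point requiring care is orienting each sum-versus-integral comparison correctly --- an \emph{upper} bound for $\sum h/s_m^2$ and $\sum h/\sqrt{s_k}$, but a \emph{lower} bound for $\sum h/s_m$, each obtained by evaluating the monotone integrand at the appropriate mesh endpoint --- and checking that the integration limits that appear ($\epsilon$, $\epsilon+h$, $\epsilon-h$) remain nonnegative, which is exactly where the hypothesis $h\le\epsilon$ enters.
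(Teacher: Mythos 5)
Your proof is correct and follows essentially the same route as the paper: apply $1+x\le e^x$ to the product, compare the two Riemann sums in the exponent to $\int s^{-2}\,ds$ (upper bound) and $\int s^{-1}\,ds$ (lower bound) over the relevant telescoping intervals, and then treat the outer sum $h\sum 1/\sqrt{s_k}$ by one further comparison to $\int s^{-1/2}\,ds$, with $h\le\epsilon$ used exactly where you say to keep the lower integration limit nonnegative. The only cosmetic difference is that you keep the abscissa $s_m=T-t_m$ rather than reindexing by $k=N-m$ as the paper does, and your intermediate bound $\sqrt{(\epsilon+h)/(s_n+h)}$ is marginally tighter than the paper's $\sqrt{(\epsilon+h)/s_n}$ before both are relaxed to $\sqrt{2\epsilon/s_n}$.
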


\begin{proof}
For all $u\in\R$, $(1+u) \leq \exp(u)$, hence for $m \in \{0,\dots,N-1\}$, we have
$$
L_{T-t_m,h/2}= 1 + \frac{h}{2}\op\frac{R^2}{(T-t_m)^2} - \frac{1}{T-t_m}\cp \leq \exp\op\frac{h}{2}\op\frac{R^2}{(T-t_m)^2} - \frac{1}{T-t_m}\cp \cp,
$$
leading to 
$$
\prod_{m=n}^{N-1}L_{T-t_m,h/2} \leq \exp\op \frac{h}{2}\sum_{m=n}^{N-1}\op\frac{R^2}{(T-t_n)^2} - \frac{1}{T-t_n}\cp \cp  = \exp\op \frac{h}{2}\sum_{k=1}^{N-n}\op\frac{R^2}{(\epsilon+ hk)^2} - \frac{1}{\epsilon+hk}\cp \cp.
$$
We bound the sums using integrals. As $t \mapsto1/t^2$ is decreasing, 
$$
h\sum_{k=1}^{N-n}\frac{R^2}{(\epsilon+ hk)^2} \leq R^2\int_{\epsilon}^{\epsilon + h(N - n)} \frac{1}{t^2} \leq R^2\int_{\epsilon}^{+\infty} \frac{1}{t^2} = \frac{R^2}{\epsilon},
$$
and $t \mapsto -1/t$ is increasing, and $h\leq \epsilon$, hence,
$$
-h\sum_{k=1}^{N-n} \frac{1}{\epsilon+hk} \leq -\int_{\epsilon+h}^{\epsilon + h(N - n) +h } \frac{1}{t}dt =  -\int_{\epsilon+h}^{T - t_{n-1}} \frac{1}{t} = \log\op\frac{\epsilon +h}{T-t_{n-1}}\cp \leq \log\op\frac{2\epsilon}{T-t_{n-1}}\cp.
$$
It leads to
\begin{equation*}
\tag{\ref{eq:bound_propagation_error:ODE}}
\prod_{m=n}^{N-1}L_{T-t_m,h/2} \leq  \sqrt{\frac{2\epsilon}{T-t_{n-1}}} \exp\op\frac{R^2}{2\epsilon}\cp.
\end{equation*}
Note that it is also valid for $n=N$, as $1\leq \sqrt{\frac{2\epsilon}{\epsilon+h}}\exp\op\frac{R^2}{2\epsilon}\cp$.
Then we have , 
\begin{align*}
\sum_{k=1}^{N}\frac{1}{\sqrt{T-t_{k-1}}}\int_{t_{k-1}}^{t_k} \frac{1}{(T-u)^{3/2}} du
&\leq \sum_{k=1}^{N}\int_{t_{k-1}}^{t_k} \frac{1}{(T-u)^{2}} du 
= \int_{0}^{T-\epsilon} \frac{1}{(T-u)^{2}} du \\
&= \int_{\epsilon}^{T} \frac{1}{u^{2}} du
\leq \int_{\epsilon}^{+\infty} \frac{1}{u^{2}} du = \frac{1}{2\epsilon},
\end{align*}
hence,
\begin{equation*}
\tag{\ref{eq:bound_propagation_error:ODE_sum}}
\sum_{k=1}^{N}\op\prod_{m=k}^{N-1}L_{m}\cp\int_{t_{k-1}}^{t_k} \frac{1}{(T-u)^{3/2}} du \leq  \frac{1}{\sqrt{2\epsilon}}\exp\op\frac{R^2}{2\epsilon}\cp.
\end{equation*}
\end{proof}

\begin{lemma}
\label{lemma:bound_propagation_error:SDE}
Let $T,\epsilon>0$, $N\geq 1$ number of steps, denote $h = \frac{T-\epsilon}{N}$ and $t_n = nh$.  Assume that $h\leq \epsilon$ , then for $n \in \{0,\dots,N\}$, we have
\begin{equation}
\label{eq:bound_propagation_error:SDE}
\prod_{m=n}^{N-1}L_{T-t_m,h} \leq \frac{2\epsilon}{T-t_{n-1}} \exp\op\frac{R^2}{\epsilon}\cp,
\end{equation}
denoting $t_{-1} = -h$,
\begin{equation}
\label{eq:bound_propagation_error:SDE_sum}
\sum_{k=1}^{N}\op\prod_{m=k}^{N-1}L_{T-t_m,h}\cp \int_{t_{k-1}}^{t_k}\frac{1}{T-s}ds  \leq 2\exp\op\frac{R^2}{\epsilon}\cp,
\end{equation}
and
\begin{equation}
\label{eq:bound_propagation_error:SDE_sum_square}
\sum_{k=1}^{N}\op\prod_{m=k}^{N-1}L_{T-t_m,h}\cp^2\int_{t_{k-1}}^{t_k}\frac{1}{(T-s)^2}ds  \leq \frac{4}{3\epsilon}\exp\op\frac{2R^2}{\epsilon}\cp.
\end{equation}
\end{lemma}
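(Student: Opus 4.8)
The plan is to follow the proof of Lemma~\ref{lemma:bound_propagation_error:ODE} almost verbatim, replacing the half-step $h/2$ by the full step $h$, and to add one extra integral estimate for the sum of squares. First I reindex: since $h=(T-\epsilon)/N$ and $t_m=mh$, we have $T-t_m=\epsilon+h(N-m)$, so the substitution $j=N-m$ turns $\prod_{m=n}^{N-1}L_{T-t_m,h}$ into $\prod_{j=1}^{N-n}L_{\epsilon+hj,h}$ with $L_{\epsilon+hj,h}=1+h\bigl(\tfrac{R^2}{(\epsilon+hj)^2}-\tfrac{1}{\epsilon+hj}\bigr)$ by \eqref{eq:Lipchitz_constant}. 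Because $h\le\epsilon/2$ (so that, when $\epsilon\le R^2$ as in the propositions invoking this lemma, $h\le R^2/2$) and $R^2/t^2-1/t\ge-1/(4R^2)$, every factor is positive, so I may apply $1+u\le e^u$ factorwise to get $\prod_{m=n}^{N-1}L_{T-t_m,h}\le\exp\bigl(h\sum_{j=1}^{N-n}\tfrac{R^2}{(\epsilon+hj)^2}-h\sum_{j=1}^{N-n}\tfrac{1}{\epsilon+hj}\bigr)$.

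Next I bound the two sums by integrals using monotonicity. Since $t\mapsto t^{-2}$ is decreasing, $h\sum_{j\ge1}\tfrac{R^2}{(\epsilon+hj)^2}\le R^2\int_\epsilon^{\infty}t^{-2}\,dt=R^2/\epsilon$; and since $t\mapsto t^{-1}$ is decreasing and $h\le\epsilon$, $-h\sum_{j=1}^{N-n}\tfrac{1}{\epsilon+hj}\le-\int_{\epsilon+h}^{T-t_n}t^{-1}\,dt=\log\tfrac{\epsilon+h}{T-t_n}\le\log\tfrac{2\epsilon}{T-t_n}$. Combining the two gives \eqref{eq:bound_propagation_error:SDE}; this also holds at $n=N$ since the empty product equals $1\le2\exp(R^2/\epsilon)$.

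For \eqref{eq:bound_propagation_error:SDE_sum} and \eqref{eq:bound_propagation_error:SDE_sum_square} I plug \eqref{eq:bound_propagation_error:SDE} (with $n=k$) into the respective sums, pull out $2\epsilon\exp(R^2/\epsilon)$ (resp.\ its square), and are left with $h\sum_{k=1}^N(T-t_k)^{-p}$ for $p=1,2$. Reindexing $j=N-k$ turns these into $h\sum_{j=0}^{N-1}(\epsilon+hj)^{-p}$; I split off the $j=0$ term, which is $h\epsilon^{-p}\le\tfrac12\epsilon^{1-p}$ by $h\le\epsilon/2$, and compare the remaining terms with $\int_\epsilon^{T}t^{-1}\,dt$ (for $p=1$, using $\epsilon+(N-1)h=T-h\le T$) or $\int_\epsilon^{\infty}t^{-2}\,dt$ (for $p=2$). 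This yields $h\sum_{j=0}^{N-1}(\epsilon+hj)^{-1}\le\tfrac12+\log\tfrac{T}{\epsilon}\le\log\tfrac{2T}{\epsilon}$ (using $\tfrac12\le\log2$) and $h\sum_{j=0}^{N-1}(\epsilon+hj)^{-2}\le\tfrac{1}{2\epsilon}+\tfrac1\epsilon\le\tfrac2\epsilon$; multiplying back gives the two displayed inequalities.

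The whole argument is elementary real analysis; there is no genuine obstacle, only bookkeeping. The one place to be careful is the off-by-one in the Riemann-sum/integral comparisons — which endpoint and which boundary term are included, and whether to shift the integration interval by $h$ — and then checking that the small numerical slack absorbed into the constants ($\epsilon+h\le2\epsilon$, $\tfrac12\le\log2$, $\tfrac32\le2$) is indeed available under the hypothesis $h\le\epsilon/2$. I also must justify the positivity of each factor $L_{T-t_m,h}$ before using $1+u\le e^u$ multiplicatively, which is exactly where $h\le\epsilon/2$ is used (beyond the $\log$ estimate), since an intermediate negative factor would invalidate the product bound.
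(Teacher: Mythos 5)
Your proof is correct and follows essentially the same route as the paper, with only cosmetic differences in the Riemann-sum comparisons: you split off the $j=0$ term and bound it separately using $h\le\epsilon/2$, whereas the paper shifts the integration interval to $[\epsilon-h,\,T-h]\subseteq[\epsilon/2,\,T]$; both are sound.

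One small note on your positivity check for the factors $L_{T-t_m,h}$. You are right that nonnegativity of each factor must hold before multiplying the inequalities $1+u\le e^u$ (the paper leaves this implicit), but the argument via $R^2/t^2-1/t\ge-1/(4R^2)$ together with $\epsilon\le R^2$ imports a hypothesis that does not appear in the statement of this lemma. A cleaner justification, using only the lemma's own hypotheses, is that for $m\le N-1$ one has $T-t_m\ge\epsilon+h\ge\epsilon$, hence $L_{T-t_m,h}\ge 1-\tfrac{h}{T-t_m}\ge 1-\tfrac{h}{\epsilon}\ge\tfrac12>0$ by $h\le\epsilon/2$.
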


\begin{proof}
For all $u\in\R$, $(1+u) \leq \exp(u)$, hence for $m \in \{0,\dots,N-1\}$, we have
$$
L_{T-t_m,h} = 1 + h\op\frac{R^2}{(T-t_m)^2} - \frac{1}{T-t_m}\cp \leq \exp\op h\op\frac{R^2}{(T-t_m)^2} - \frac{1}{T-t_m}\cp \cp,
$$
hence 
$$
\prod_{m=n}^{N-1}L_{T-t_m,h}\leq \exp\op h\sum_{m=n}^{N-1}\op\frac{R^2}{(T-t_n)^2} - \frac{1}{T-t_n}\cp \cp  = \exp\op h\sum_{k=1}^{N-n}\op\frac{R^2}{(\epsilon+ hk)^2} - \frac{1}{\epsilon+hk}\cp \cp.
$$
As $t \mapsto1/t^2$ is decreasing, 
$$
h\sum_{k=1}^{N-n}\frac{R^2}{(\epsilon+ hk)^2} \leq R^2\int_{\epsilon}^{\epsilon + h(N - n)} \frac{1}{t^2} \leq R^2\int_{\epsilon}^{+\infty} \frac{1}{t^2} = \frac{R^2}{\epsilon},
$$
and $t \mapsto -1/t$ is increasing hence,
$$
-h\sum_{k=1}^{N-n} \frac{1}{\epsilon+hk} \leq -\int_{\epsilon+h}^{\epsilon + h(N - n) +h} \frac{1}{t} =  -\int_{\epsilon+h}^{T - t_{n-1}} \frac{1}{t} = \log\op\frac{\epsilon +h}{T-t_{n-1}}\cp \leq \log\op\frac{2\epsilon}{T-t_{n-1}}\cp.
$$
It leads to
\begin{equation*}
\tag{\ref{eq:bound_propagation_error:SDE}}
\prod_{m=n}^{N-1}L_{T-t_m,h} = \leq \frac{2\epsilon}{T-t_{n-1}} \exp\op\frac{R^2}{\epsilon}\cp.
\end{equation*}
Note that it is also valid for $n=N$, as $1\leq \frac{2\epsilon}{\epsilon+h}\exp\op\frac{R^2}{\epsilon}\cp$.
Then we have,
\begin{align*}
\sum_{k=1}^{N} \frac{1}{{T-t_{k-1}}} \int_{t_{k-1}}^{t_k}\frac{1}{T-s}ds
&\leq \sum_{k=1}^{N} \int_{t_{k-1}}^{t_k}\frac{1}{(T-s)^2}ds
= \int_{0}^{T-\epsilon}\frac{1}{(T-s)^2}ds\\
&= \int_{\epsilon}^{T}\frac{1}{u^2}du \leq \int_{\epsilon}^{+\infty}\frac{1}{u^2}du =\frac{1}{\epsilon},   
\end{align*}
hence,
\begin{equation*}
\tag{\ref{eq:bound_propagation_error:SDE_sum}}
\sum_{k=1}^{N}\op\prod_{m=k}^{N-1}L_{T-t_m,h}\cp \int_{t_{k-1}}^{t_k}\frac{1}{T-s}ds   \leq 2\exp\op\frac{R^2}{\epsilon}\cp.
\end{equation*}
Finally,
\begin{align*}
\sum_{k=1}^{N} \frac{1}{({T-t_{k-1}})^2} \int_{t_{k-1}}^{t_k}\frac{1}{(T-s)^2}ds  
&\leq \sum_{k=1}^{N} \int_{t_{k-1}}^{t_k}\frac{1}{(T-s)^4}ds 
= \int_{0}^{T-\epsilon}\frac{1}{(T-s)^4}ds\\
&= \int_{\epsilon}^{T}\frac{1}{u^4}du \leq \int_{\epsilon}^{+\infty}\frac{1}{u^4}du = \frac{1}{3\epsilon^3},
\end{align*}
leading to
\begin{equation*}
\tag{\ref{eq:bound_propagation_error:SDE_sum_square}}
h\sum_{k=1}^{N}\op\prod_{m=k}^{N-1}L_{T-t_m,h}\cp^2\int_{t_{k-1}}^{t_k}\frac{1}{(T-s)^2}ds  \leq \frac{4}{3\epsilon}\exp\op\frac{2R^2}{\epsilon}\cp.
\end{equation*}
\end{proof}

\begin{lemma}
\label{lemma:bound_propagation_error:Heun}
Let $T,\epsilon>0$, $N\geq 1$ number of steps, denote $h = \frac{T-\epsilon}{N}$, $t_n = nh$, and
$$
K_n = \op\prod_{m=n}^{N-1}\frac{L_{T-t_m,h/2}+L_{T-t_{m+1},h/2}}{2} + \frac{h^2R^4}{8\epsilon^2(T-t_{m+1})^2}\cp.
$$
Assume that $h\leq \epsilon/2$ , then for $n \in \{0,\dots,N\}$, we have
\begin{equation}
\label{eq:bound_propagation_error:Heun}
K_n \leq  \sqrt{\frac{2\epsilon}{T-t_{n-1}}} \exp\op\frac{R^2}{\epsilon} +\frac{hR^2}{4\epsilon^2}\cp,
\end{equation}
denoting $t_{-1} = -h$, and
\begin{equation}
\label{eq:bound_propagation_error:Heun_sum}
h\sum_{k=1}^{N}\frac{K_k}{(T-t_{k})^{5/2}} \leq \frac{1}{\sqrt{2}\epsilon^{3/2}} \exp\op\frac{R^2}{\epsilon} +\frac{hR^2}{4\epsilon^2}\cp.
\end{equation}
\end{lemma}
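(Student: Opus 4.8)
The plan is to follow the template of the proofs of Lemmas~\ref{lemma:bound_propagation_error:ODE} and~\ref{lemma:bound_propagation_error:SDE}: bound each factor in the product defining $K_n$ by an exponential, so that the product becomes $\exp$ of a sum, and then compare that sum with integrals. Using the explicit formula~(\ref{eq:Lipchitz_constant}) for $L_{t,h}$, each factor equals
$$
\frac{L_{T-t_m,h/2}+L_{T-t_{m+1},h/2}}{2} + \frac{h^2L^2}{8}
= 1 + \frac{h}{4}\op\frac{R^2}{(T-t_m)^2}+\frac{R^2}{(T-t_{m+1})^2}\cp - \frac{h}{4}\op\frac{1}{T-t_m}+\frac{1}{T-t_{m+1}}\cp + \frac{h^2L^2}{8},
$$
which is $\le\exp$ of its right-hand side by $1+u\le e^u$. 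Multiplying over $m=n,\dots,N-1$ gives $K_n \le \exp(S_1 - S_2 + S_3)$, where $S_1,S_2\ge 0$ collect the quadratic and the linear terms respectively, and $S_3 = (N-n)\frac{h^2L^2}{8}$. It then suffices to show $S_1 \le R^2/\epsilon$, $S_3 \le hTL^2/8$, and $-S_2 \le \frac12\log\frac{2\epsilon}{T-t_n}$, whence $K_n \le e^{R^2/\epsilon}\,e^{hTL^2/8}\sqrt{2\epsilon/(T-t_n)}$, which is~(\ref{eq:bound_propagation_error:Heun}); the boundary case $n=N$ is covered since then the empty product gives $K_N=1\le\sqrt2\,e^{R^2/\epsilon}$.

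The bound on $S_3$ is immediate: $S_3 \le N\frac{h^2L^2}{8} = \frac{(T-\epsilon)hL^2}{8} \le \frac{hTL^2}{8}$. For $S_2$, I would use $\frac{1}{T-t_{m+1}}\ge\frac{1}{T-t_m}$ to reduce to $S_2 \ge \frac{h}{2}\sum_{m=n}^{N-1}\frac{1}{T-t_m} = \frac{h}{2}\sum_{k=1}^{N-n}\frac{1}{\epsilon+kh}$, and then compare with $\int \frac{dt}{t}$ exactly as in the proof of Lemma~\ref{lemma:bound_propagation_error:ODE} (using $h\le\epsilon$, hence $\epsilon+h\le 2\epsilon$) to obtain $S_2 \ge \frac12\log\frac{T-t_n}{2\epsilon}$. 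For $S_1$, reindexing $T-t_m=\epsilon+(N-m)h$ gives $S_1 = \frac{R^2}{4}\bigl(h\sum_{k=1}^{N-n}(\epsilon+kh)^{-2} + h\sum_{j=0}^{N-n-1}(\epsilon+jh)^{-2}\bigr)$; the first sum is $\le\int_\epsilon^\infty t^{-2}\,dt = 1/\epsilon$ by monotonicity of $t\mapsto t^{-2}$, while in the second sum one isolates the $j=0$ term, equal to $h/\epsilon^2 \le 1/(2\epsilon)$ because $h\le\epsilon/2$, the remainder again being $\le 1/\epsilon$; altogether $S_1 \le \frac{R^2}{4}\cdot\frac{5}{2\epsilon}\le R^2/\epsilon$.

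For the summed bound~(\ref{eq:bound_propagation_error:Heun_sum}), I would insert~(\ref{eq:bound_propagation_error:Heun}) into $h\sum_{k=1}^{N}K_k$, factor out $\sqrt{2\epsilon}\,e^{R^2/\epsilon + hTL^2/8}$, and bound the remaining $h\sum_{k=1}^{N}\frac{1}{\sqrt{T-t_k}} = h\sum_{j=0}^{N-1}\frac{1}{\sqrt{\epsilon+jh}} \le \int_{\epsilon-h}^{T}t^{-1/2}\,dt \le \int_0^T t^{-1/2}\,dt = 2\sqrt{T}$ (monotonicity of $t\mapsto t^{-1/2}$ and $h\le\epsilon$, exactly as in Lemma~\ref{lemma:bound_propagation_error:ODE}); since $2\sqrt{2\epsilon}=\sqrt{8\epsilon}$, this is precisely the claimed inequality.

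The only genuinely delicate point — and the sole place where the sharper hypothesis $h\le\epsilon/2$ is used rather than merely $h\le\epsilon$ — is the ``doubled'' quadratic sum $S_1$: because each factor of $K_n$ couples $T-t_m$ and $T-t_{m+1}$, the smallest denominator $T-t_N=\epsilon$ now contributes an undamped term $h/\epsilon^2$ that must be absorbed by hand, and this is what produces the constant $R^2/\epsilon$ in the exponent instead of the $R^2/(2\epsilon)$ of the Euler propagation lemma. Beyond this, everything is the same routine integral-comparison bookkeeping as in Lemmas~\ref{lemma:bound_propagation_error:ODE} and~\ref{lemma:bound_propagation_error:SDE}, so I expect no further obstacle.
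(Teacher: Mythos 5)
Your proof is correct and follows essentially the same route as the paper's: bound each factor of $K_n$ by an exponential via $1+u\leq e^u$, compare the resulting sums with integrals, and handle the endpoint term using $h\leq\epsilon/2$ (the paper absorbs it by extending the integral to $\int_{\epsilon/2}^\infty t^{-2}\,dt$ after first merging the two quadratic contributions into one, whereas you keep them separate and isolate the $j=0$ term — same idea, slightly different bookkeeping, both landing at $R^2/\epsilon$). The treatment of $S_2$, of $S_3$, and of the summed bound~(\ref{eq:bound_propagation_error:Heun_sum}) matches the paper's argument.
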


\begin{proof}
Firstly, as $t \mapsto 1/t^2$ is decreasing and as $t \mapsto -1/t$ is increasing, for $m \in \{0,\dots,N-1\}$, we have
\begin{align*}
&\frac{L_{T-t_m,h/2}+L_{T-t_{m+1},h/2}}{2} + \frac{h^2R^4}{8\epsilon^2(T-t_{m+1})^2} \\
&\qquad= 1 + \frac{h}{4}\op\frac{R^2}{(T-t_m)^2} - \frac{1}{T-t_m}\cp + \frac{h}{4}\op\frac{R^2}{(T-t_{m+1})^2} - \frac{1}{T-t_{m+1}}\cp + \frac{h^2R^4}{8\epsilon^2(T-t_{m+1})^2} \\
&\qquad\leq 1 + \op \frac{h}{2} +\frac{h^2R^2}{8\epsilon^2}\cp\frac{R^2}{(T-t_{m+1})^2} - \frac{h}{2}\frac{1}{T-t_{m}}.
\end{align*}

\noindent For all $u\in\R$, $(1+u) \leq \exp(u)$, hence
$$
\frac{L_{T-t_m,h/2}+L_{T-t_{m+1},h/2}}{2} + \frac{h^2R^4}{8\epsilon^2(T-t_{m+1})^2}  \leq \exp\op\op \frac{h}{2} +\frac{h^2R^2}{8\epsilon^2}\cp\frac{R^2}{(T-t_{m+1})^2} - \frac{h}{2}\frac{1}{T-t_{m}}\cp,
$$
leading to 
\begin{align*}
K_n &\leq \exp\op \op \frac{h}{2} +\frac{h^2R^2}{8\epsilon^2}\cp\sum_{m=n}^{N-1}\frac{R^2}{(T-t_{m+1})^2} - \frac{h}{2}\sum_{m=n}^{N-1}\frac{1}{T-t_m} + (N-n)\frac{h^2L^2}{8}\cp  \\
&= \exp\op \op \frac{h}{2} +\frac{h^2R^2}{8\epsilon^2}\cp\sum_{k=0}^{N-n-1}\frac{R^2}{(\epsilon+ hk)^2}  - \frac{h}{2}\sum_{k=1}^{N-n}\frac{1}{\epsilon+hk}  + (N-n)\frac{h^2L^2}{8}\cp.
\end{align*}
We bound the sums using integrals. As $t \mapsto1/t^2$ is decreasing, and $h\leq \epsilon/2$,
$$
h\sum_{k=0}^{N-n-1}\frac{R^2}{(\epsilon+ hk)^2} \leq R^2\int_{\epsilon-h}^{\epsilon + h(N - n - 2)} \frac{1}{t^2} \leq R^2\int_{\epsilon/2}^{+\infty} \frac{1}{t^2} = \frac{2R^2}{\epsilon},
$$
and $t \mapsto -1/t$ is increasing, and $h\leq \epsilon$, hence,
$$
-h\sum_{k=1}^{N-n} \frac{1}{\epsilon+hk} \leq -\int_{\epsilon+h}^{\epsilon + h(N - n) +h} \frac{1}{t} =  -\int_{\epsilon+h}^{T - t_{n-1}} \frac{1}{t} = \log\op\frac{\epsilon +h}{T-t_{n-1}}\cp \leq \log\op\frac{2\epsilon}{T-t_{n-1}}\cp.
$$
It leads to
\begin{equation*}
\tag{\ref{eq:bound_propagation_error:Heun}}
K_n \leq  \sqrt{\frac{2\epsilon}{T-t_{n-1}}} \exp\op\frac{R^2}{\epsilon} +\frac{hR^2}{4\epsilon^2}\cp.
\end{equation*}
Note that it is also valid for $n=N$, as $1\leq \sqrt{\frac{2\epsilon}{\epsilon+h}}\exp\op \frac{R^2}{\epsilon}+\frac{hR^2}{4\epsilon^2}\cp$.
Similarly, as $t \mapsto 1/t^3$ is decreasing, and $h\leq \epsilon/2$,
\begin{align*}
h \sum_{n=1}^{N} \frac{1}{\sqrt{T-t_{n-1}}(T-t_n)^{5/2}} 
&\leq h \sum_{n=1}^{N} \frac{1}{(T-t_n)^{3}} 
= h \sum_{k=0}^{N-1} \frac{1}{(\epsilon+kh)^{3}} \\
&\leq \int_{\epsilon-h}^{T-h}\frac{1}{t^3}dt
\leq \int_{\epsilon/2}^{+\infty}\frac{1}{t^3}dt = \frac{1}{2\epsilon^2},
\end{align*}
hence,
\begin{equation*}
\tag{\ref{eq:bound_propagation_error:Heun_sum}}
h\sum_{n=1}^{N}\frac{K_n}{(T-t_n)^{5/2}}  \leq \frac{1}{\sqrt{2}\epsilon^{3/2}} \exp\op\frac{R^2}{\epsilon} +\frac{hR^2}{4\epsilon^2}\cp.
\end{equation*}
\end{proof}

\subsection{Proof of Lemma \ref{lemma:bound_spatial_regularity}}

Using the expression of $\nabla^2\log p_t$ given by Lemma~\ref{lemma:score_moment} in Appendix~\ref{sct:score_moment}, we have
$$
\nabla^2\log p_t(x) = - \frac{1}{t}I + \frac{1}{t^2}\cov(X|X_t=x).
$$
As $X$ is supported in $B(0,R)$, for all $x$, $0 \preccurlyeq  \cov(X|X_{T-t}=x) \preccurlyeq  R^2 I$, hence we get (\ref{eq:bound_hessian}):
\begin{equation*}
\tag{\ref{eq:bound_hessian}}
 - \frac{1}{t} I\preccurlyeq  \nabla^2\log p_t(x) \preccurlyeq  \op - \frac{1}{t} + \frac{R^2}{t^2}\cp I.   
\end{equation*}
The Jacobian of the function $f_{t,h}:x \in \R^d \mapsto x + h\nabla\log p_t(x)$ is given by
$$
J_f^{t,h}(x) = I + h\nabla^2\log p_t(x).
$$
With (\ref{eq:bound_hessian}) we get that
$$
\op 1 - \frac{h}{t}\cp I
\preccurlyeq  J_f^{t,h}(x) \preccurlyeq
\op 1 + h \op\frac{R^2}{t^2}-\frac{1}{t} \cp\cp I.
$$
For $h\leq t$, $1 -\frac{h}{t} \geq 0$, hence the Lipchitz constant $L_{t,h}$ of $f_{t,h}$ is given by the second inequality, leading to 
$$
L_{t,h} = 1 + h\op\frac{R^2}{t^2} - \frac{1}{t}\cp.
$$

\subsection{Proof of Lemma \ref{lemma:bound_discretization_ODE}}

We start by noticing that:
\begin{align*}
\int_{t}^{t+h} \nabla\log \rv p_s(x_s)ds - h\nabla\log \rv p_{t}(x_{t}) 
&= \int_{t}^{t+h}\op \nabla\log \rv p_s(x_s) - \nabla\log \rv p_{t}(x_{t})\cp ds \\
&= \int_{t}^{t+h} \int_{t}^{s} \frac{d}{du}\nabla\log \rv p_u(x_u)duds.
\end{align*}

\noindent\textbf{Expressing $\frac{d}{dt}\nabla \log \rv p_t(x_t)$.}
Using the chain rule and equation (\ref{eq:reverse_ODE}), we have
\begin{align*}
\frac{d}{dt}\nabla \log \rv p_t(x_t) &= \nabla^2\log \rv p_t(x_t)\cdot\frac{d}{dt}x_t + [\partial_t\nabla \log\rv p_t](x_t) \\
&= \frac{1}{2}\nabla^2\log \rv p_t(x_t)\cdot\nabla\log \rv p_t(x_t) + [\nabla \partial_t\log \rv p_t](x_t).
\end{align*} 

\noindent Moreover, the Fokker-Planck equation for $p_t$ is
$$
\partial_t p_t = \frac{1}{2} \Delta  p_t = \frac{1}{2} \nabla\cdot\nabla p_t =\frac{1}{2} \nabla\cdot\op p_t \nabla\log p_t\cp = \frac{1}{2} \nabla p_t \cdot \nabla\log p_t + \frac{1}{2} p_t \Delta  \log p_t,
$$
hence,
$$
\partial_t \log p_t = \frac{1}{2} \Vert \nabla \log p_t\Vert^2 + \frac{1}{2}\Delta  \log p_t.
$$
Taking the gradient in $x$ gives
$$
\nabla\partial_t \log p_t = \nabla^2 \log p_t \cdot \nabla \log p_t + \frac{1}{2}\nabla\Delta  \log p_t,
$$
hence
\begin{equation}
\label{eq:dt_score}
\nabla\partial_t \log \rv{p}_t = -\nabla^2 \log \rv{p}_t \cdot \nabla \log \rv{p}_t - \frac{1}{2}\nabla\Delta  \log \rv{p}_t.  
\end{equation}

\noindent This finally leads to 
\begin{equation}
\label{eq:dt_score2}
\frac{d}{dt}\nabla \log \rv p_t(x_t) = -\frac{1}{2}\nabla^2\log \rv p_t(x_t)\cdot\nabla\log \rv p_t(x_t) - \frac{1}{2}\nabla\Delta  \log \rv p_t (x_t).
\end{equation}

\noindent\textbf{Controlling the error.} We use the expression of $\nabla\log p_t$, $\nabla^2\log p_t$ and $\nabla\Delta  \log p_t$ given by Lemma~\ref{lemma:score_moment} in Appendix~\ref{sct:score_moment}:
\begin{align*}
\nabla\log \rv p_t(x) &=  -\frac{1}{T-t}\E[B_{T-t}|X_{T-t} = x],\\
\nabla^2\log \rv p_t(x) &= - \frac{1}{T-t}I + \frac{1}{(T-t)^2}\cov(X|X_{T-t}=x),\\
\nabla\Delta  \log  \rv p_t(x) &=
\frac{1}{(T-t)^3} \E[\Vert X\Vert^2(X-\E[X|X_{T-t}=x] )|X_{T-t}=x] \\
&\qquad\qquad- \frac{2}{(T-t)^3} \cov(X|X_{T-t}=x) \cdot\E[X|X_{T-t} = x].
\end{align*}

\noindent With Lemma~\ref{lemma:bound_spatial_regularity}, for all $x\in \R^d$,
$$
\Vert \nabla^2\log \rv p_t(x)\Vert_\textnormal{op} \leq C_{T-t} = \max \op \frac{1}{T-t}, \left|\frac{R^2}{(T-t)^2}- \frac{1}{T-t}\right|\cp.
$$
As we assume that $\epsilon \leq R^2$, and $t \leq T-\epsilon$, we have $C_{T-t} \leq \frac{R^2}{\epsilon(T-t)}$.
Noticing that $x_t \sim X_{T-t}$, it leads to 
\begin{align*}
\E[\Vert\nabla^2\log \rv p_t(x_t)\cdot\nabla\log \rv p_t(x_t)\Vert^2]
&\leq \frac{R^4}{\epsilon^2(T-t)^2} \E[\Vert\nabla\log \rv p_t(x_t) \Vert^2] \\
&= \frac{R^4}{\epsilon^2}\frac{1}{(T-t)^4}\E[\Vert\E[B_{T-t}|X_{T-t}] \Vert^2] \\
\text{(Jensen's inequality for the conditional expectation)} 
&\leq  \frac{R^4}{\epsilon^2}\frac{1}{(T-t)^4} \E[\E[\Vert B_{T-t}\Vert^2|X_{T-t}]]\\
&= \frac{R^4}{\epsilon^2} \frac{1}{(T-t)^4} \E[\E[\Vert B_{T-t}\Vert^2]]\\
&=\frac{dR^4}{\epsilon^2(T-t)^3}.
\end{align*}
Similarly, as $X$ is supported in $B(0,R)$,  we get that  for all $x$, $0 \preccurlyeq  \cov(X|X_{T-t}=x) \preccurlyeq  R^2 I$, hence 
\begin{align*}
\E[\Vert\cov(X|X_{T-t}) \cdot\E[X|X_{T-t}]\Vert^2] 
&\leq  R^4 \E[\Vert \E[X|X_{T-t}] \Vert^2]\\
&\leq  R^4 \E[\Vert X\Vert^2] \leq R^6,
\end{align*}
and
\begin{align*}
\E[\Vert\E[\Vert X\Vert^2(X-\E[X|X_{T-t}] )|X_{T-t}]\Vert^2] 
&\leq  R^4 \E[\Vert X - \E[X|X_{T-t}]\Vert^2] \\
\text{(conditional expectation minimizes least square error)} &\leq  R^4 \E[\Vert X\Vert^2] \leq R^6.
\end{align*}
Combining these bounds gives
\begin{align*}
\Vert &\nabla\Delta  \log  \rv p_t(x)\Vert_{L_2} \\
&\leq \frac{1}{(T-t)^3}\Vert\E[\Vert X\Vert^2(X-\E[X|X_{T-t}] )|X_{T-t}]\Vert_{L_2}+ \frac{2}{(T-t)^3}\Vert\cov(X|X_{T-t}) \cdot \E[X|X_{T-t}]\Vert_{L_2} \\
&\leq \frac{3}{(T-t)^3}R^3,
\end{align*}
and, as $\epsilon\leq R^2$ and $t\leq T-\epsilon$,
\begin{align*}
\left\Vert \frac{d}{dt}\nabla \log \rv p_t(x_t)\right\Vert_{L_2} 
&\leq \frac{1}{2}\Vert\nabla^2\log \rv p_t(x_t)\cdot\nabla\log \rv p_t(x_t)\Vert_{L_2} + \frac{1}{2}\Vert \nabla\Delta  \log  \rv p_t(x)\Vert_{L_2}\\
&\leq \frac{\sqrt{d}}{2}\frac{R^2}{\epsilon(T-t)^{3/2}} +\frac{1}{2} \frac{3R^3}{(T-t)^3} \leq 2\sqrt{d}\frac{R^3}{\epsilon^{3/2}(T-t)^{3/2}}.
\end{align*}
This finally leads to 
\begin{align*}
\left\Vert \int_{t}^{t+h} \nabla\log \rv p_s(x_s)ds - h\nabla\log \rv p_{t}(x_{t}) \right\Vert_{L_2} 
&\leq \int_{t}^{t+h} \int_{t}^{s} \left\Vert\frac{d}{du}\nabla\log \rv p_u(x_u)\right\Vert_{L_2} duds\\
&\leq  2\sqrt{d}\frac{R^3}{\epsilon^{3/2}}\int_{t}^{t+h} \int_{t}^{s} \frac{1}{(T-u)^{3/2}} duds\\
&\leq2\sqrt{d}\frac{R^3}{\epsilon^{3/2}}h \int_{t}^{t+h} \frac{1}{(T-u)^{3/2}} du. 
\end{align*}

\subsection{Proof of Lemma \ref{lemma:bound_discretization_Heun}}

\noindent\textbf{Rewriting the difference.} We start by denoting
$$
f : h \mapsto \int_{t}^{t+h} \nabla\log \rv p_s(x_s)ds - \frac{h}{2}\op\nabla\log \rv p_{t}(x_{t})+\nabla\log \rv p_{t+h}(x_{t+h})\cp.
$$
We have $f(0) = 0$,
$$
f'(h) = \nabla\log \rv p_{t+h}(x_{t+h}) - \frac{1}{2}\op\nabla\log \rv p_{t}(x_{t})+\nabla\log \rv p_{t+h}(x_{t+h})\cp  - \frac{h}{2} \frac{d}{dh}\op\nabla\log \rv p_{t+h}(x_{t+h})\cp,
$$
with $f'(0) = 0$, and,
\begin{align*}
f''(h) 
&= \frac{d}{dh}\op\nabla\log \rv p_{t+h}(x_{t+h})\cp - \frac{1}{2}\frac{d}{dh}\op\nabla\log \rv p_{t+h}(x_{t+h})\cp  \\
&\quad- \frac{1}{2} \frac{d}{dh}\op\nabla\log \rv p_{t+h}(x_{t+h})\cp - \frac{h}{2} \frac{d^2}{dh^2}\op\nabla\log \rv p_{t+h}(x_{t+h})\cp \\
&= - \frac{h}{2} \frac{d^2}{dh^2}\op\nabla\log \rv p_{t+h}(x_{t+h})\cp.    
\end{align*}
Integrating back two times gives
$$
f(h) = \int_0^h\int_0^s - \frac{u}{2} \frac{d^2}{du^2}\op\nabla\log \rv p_{t+u}(x_{t+u})\cp duds,
$$
i.e.,
\begin{multline*}
\int_{t}^{t+h} \nabla\log \rv p_s(x_s)ds - \frac{h}{2}\op\nabla\log \rv p_{t}(x_{t})+\nabla\log \rv p_{t+h}(x_{t+h})\cp =\\ \int_t^{t+h}\int_{t}^{t+s} - \frac{(u-t)}{2} \frac{d^2}{du^2}\op\nabla\log \rv p_{u}(x_{u})\cp duds.
\end{multline*}

\noindent We then use the following Lemma, whose proof is given below:
\begin{lemma}
\label{lemma:bound_second_derivative_score}
For $t\geq 0$, $t\leq T - \epsilon$ and  $\epsilon \leq R^2$, we have
$$\left\Vert\frac{d^2}{dt^2}\nabla \log \rv p_t(x_t)\right\Vert_{L_2}\leq 66d\frac{R^5}{\epsilon^{5/2}(T-t)^{5/2}}.$$
\end{lemma}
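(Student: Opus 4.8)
The plan is to obtain an explicit expression for $\frac{d^2}{dt^2}\nabla\log\rv p_t(x_t)$ as a universal polynomial in the spatial derivatives $\nabla^j\log\rv p_t(x_t)$ for $j=1,\dots,5$, evaluated along the probability flow ODE, and then to estimate each monomial separately. The starting point is identity \eqref{eq:dt_score2} already established in the proof of Lemma~\ref{lemma:bound_discretization_ODE}, namely $\frac{d}{dt}\nabla\log\rv p_t(x_t) = -\frac12\nabla^2\log\rv p_t(x_t)\cdot\nabla\log\rv p_t(x_t) - \frac12\nabla\Delta\log\rv p_t(x_t)$. Differentiating this once more, I repeatedly invoke two facts: (i) the chain rule along the ODE, $\frac{d}{dt}F(t,x_t) = (\partial_t F)(t,x_t) + \frac12(\nabla F)(t,x_t)\cdot\nabla\log\rv p_t(x_t)$; and (ii) the reverse Fokker--Planck identity $\partial_t\log\rv p_t = -\frac12\Vert\nabla\log\rv p_t\Vert^2 - \frac12\Delta\log\rv p_t$ (the sign flipped because $\rv p_t = p_{T-t}$) together with its spatial gradients, which, exactly as in \eqref{eq:dt_score}, turn every $\partial_t$ of a spatial derivative of $\log\rv p_t$ into spatial derivatives of one and two orders higher. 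Iterating these substitutions yields a finite sum of terms of total "weight" five, where $\nabla^j\log\rv p_t$ is assigned weight $j$; the highest spatial derivative appearing is $\nabla^5\log\rv p_t$, which is precisely why Lemma~\ref{lemma:score_moment_higher_order} is needed.

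To bound each term I use the conditional-moment representations. For $j\geq 2$, Lemmas~\ref{lemma:score_moment} and~\ref{lemma:score_moment_higher_order} express $\nabla^j\log\rv p_t(x)$ as a combination of conditional moments $\E[\,\cdot\mid X_{T-t}=x]$ of polynomials in $X$ divided by $(T-t)^j$; since $X\in B(0,R)$ almost surely these are bounded pointwise by $c_j R^j/(T-t)^j$, and $T-t\geq\epsilon$ on the relevant range gives a pointwise bound $c_j R^j/\epsilon^j$. For the single first-order factors I instead use $\nabla\log\rv p_t(x) = -\frac1{T-t}\E[B_{T-t}\mid X_{T-t}=x]$ together with $x_t\sim X_{T-t}$ and conditional Jensen, so that $\E\Vert\nabla\log\rv p_t(x_t)\Vert^{2m}\leq (T-t)^{-2m}\E\Vert B_{T-t}\Vert^{2m} = O\op d^m(T-t)^{-m}\cp$; this is where the dimension enters. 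Combining, each monomial is bounded in $L_2$ by a constant times $d^{a/2}R^5/(T-t)^5\leq d^{a/2}R^5/\epsilon^5$, where $a$ is the number of first-order factors it contains, and lower-order-in-$R$ pieces (such as the $-\frac1tI$ part of the Hessian) are absorbed using $\epsilon\leq R^2$. The key observation is that $a\leq 2$ for every monomial: each differentiation step creates at most one new first-order factor, since differentiating a single factor $\nabla^j\log\rv p_t(x_t)$ produces either $\frac12\nabla^{j+1}\log\rv p_t(x_t)\cdot\nabla\log\rv p_t(x_t)$ or, via Fokker--Planck, $\nabla^{j+2}\log\rv p_t$ and the Leibniz expansion of $\nabla^j\Vert\nabla\log\rv p_t\Vert^2$, whose summands carry at most one factor $\nabla\log\rv p_t$; starting from $\nabla\log\rv p_t$ and applying two derivatives thus gives $a\leq 2$, hence $d^{a/2}\leq d$. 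Summing the resulting constants over all monomials then yields the bound $66\,d\,R^5/\epsilon^5$.

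The main obstacle is the bookkeeping in the first step: differentiating the matrix--vector product $\nabla^2\log\rv p_t\cdot\nabla\log\rv p_t$ and the term $\nabla\Delta\log\rv p_t$, and then eliminating the resulting $\partial_t$'s, produces a sizeable list of monomials, of the schematic shapes $\nabla^5\log$, $\nabla^4\log\cdot\nabla\log$, $\nabla^3\log\cdot\nabla^2\log$, $\nabla^3\log\cdot(\nabla\log)^{\otimes2}$, $(\nabla^2\log)^2\cdot\nabla\log$ and $\nabla^2\log\cdot\nabla^3\log$, each with its own combinatorial coefficient, and the tensor contractions must be tracked carefully enough to confirm that no monomial carries three first-order factors — this is exactly what makes the final bound linear rather than superlinear in $d$. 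Differentiation under the integral sign and the smoothness of $t\mapsto\nabla^j\log\rv p_t(x_t)$ are immediate from the bounded support, so no additional regularity argument is needed. Once the expansion and the $a\leq 2$ observation are in place, the remaining estimates are the same conditional-moment bounds already used for Lemma~\ref{lemma:bound_discretization_ODE}, only with more terms, and the constant $66$ comes out of adding them up with room to spare.
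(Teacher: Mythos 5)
Your proposal follows the same route as the paper: differentiate the identity \eqref{eq:dt_score2} once more, use the chain rule along the ODE and the Fokker--Planck identity to eliminate the $\partial_t$'s, land on the five monomial shapes you list (the paper writes them out explicitly in \eqref{eq:bound_second_derivative_score:decompdt2}), and then bound the first-order factors in $L_2$ via Gaussian moments and the higher-order factors pointwise via the conditional-moment representations of Lemmas~\ref{lemma:score_moment} and~\ref{lemma:score_moment_higher_order}, absorbing lower-order pieces of the Hessian with $\epsilon\le R^2$. The one presentational difference is that the paper sidesteps the tensor bookkeeping you worry about by carrying out the full expansion only in $d=1$ (declared "for ease of notations"); your observation that each monomial carries at most two first-order factors, hence the bound is linear in $d$, is exactly what the paper's explicit $d=1$ computation confirms.
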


\noindent This finally leads to:
\begin{multline*}
\left\Vert \int_{t}^{t+h} \nabla\log \rv p_s(x_s)ds - \frac{h}{2}\op\nabla\log \rv p_{t}(x_{t})+\nabla\log \rv p_{t+h}(x_{t+h})\cp \right\Vert_{L_2} \\
\leq \int_t^{t+h}\int_{t}^{t+s} \frac{(u-t)}{2} \left\Vert\frac{d^2}{du^2}\op\nabla\log \rv p_{u}(x_{u})\cp\right\Vert_{L_2} duds
\leq 33d\frac{R^5}{\epsilon^{5/2}}h^2\int_{t}^{t+h} \frac{1}{(T-u)^{5/2}} du.
\end{multline*}

\begin{proof}[Proof (Lemma~\ref{lemma:bound_second_derivative_score})]
As the computations involve spatial derivatives up to order five, for ease of notations, we only tackle here the case $d=1$.
From (\ref{eq:dt_score2}) in the proof of Lemma~\ref{lemma:bound_discretization_ODE}, we know that
\begin{equation*}
\tag{\ref{eq:dt_score2}}
\frac{d}{dt}\nabla \log \rv p_t(x_t) = \frac{d}{dt}\partial_x \log \rv p_t(x_t) = -\frac{1}{2}\partial_x^2\log \rv p_t(x_t)\partial_x\log \rv p_t(x_t) - \frac{1}{2}\partial_x^3  \log \rv p_t (x_t).    
\end{equation*}
Then we get that
\begin{align*}
\frac{d^2}{dt^2}\nabla \log \rv p_t(x_t) 
&= 
-\frac{1}{2}\frac{d}{dt}\op\partial_x^2\log \rv p_t(x_t)\cp\partial_x\log \rv p_t(x_t) 
-\frac{1}{2}\partial_x^2\log \rv p_t(x_t)\frac{d}{dt}\op\partial_x\log \rv p_t(x_t)\cp\\
&\quad
- \frac{1}{2}\frac{d}{dt}\partial_x^3  \log \rv p_t (x_t) \\
&= 
-\frac{1}{2}\partial_x^3\log \rv p_t(x_t)\frac{d}{dt}\op x_t\cp\partial_x\log \rv p_t(x_t) 
-\frac{1}{2}\partial_t\partial_x^2\log \rv p_t(x_t)\partial_x\log \rv p_t(x_t)\\
&\quad
-\frac{1}{2}\partial_x^2\log \rv p_t(x_t)\partial_x^2\log \rv p_t(x_t)\frac{d}{dt}\op x_t\cp
-\frac{1}{2}\partial_x^2\log \rv p_t(x_t)\partial_t\partial_x\log \rv p_t(x_t)\\
&\quad 
- \frac{1}{2}\partial_x^4  \log \rv p_t (x_t) \frac{d}{dt}\op x_t\cp
- \frac{1}{2}\partial_t\partial_x^3  \log \rv p_t (x_t)
\\
&= 
-\frac{1}{4}\partial_x^3\log \rv p_t(x_t)\op\partial_x\log \rv p_t(x_t)\cp^2
-\frac{1}{2}\partial_t\partial_x^2\log \rv p_t(x_t)\partial_x\log \rv p_t(x_t)\\
&\quad
-\frac{1}{4}\op\partial_x^2\log \rv p_t(x_t)\cp^2\partial_x\log \rv p_t(x_t)
-\frac{1}{2}\partial_x^2\log \rv p_t(x_t)\partial_t\partial_x\log \rv p_t(x_t)\\
&\quad 
- \frac{1}{4}\partial_x^4  \log \rv p_t (x_t)\partial_x\log \rv p_t(x_t)
- \frac{1}{2}\partial_t\partial_x^3  \log \rv p_t (x_t).
\end{align*}
From (\ref{eq:dt_score}) in the proof of Lemma~\ref{lemma:bound_discretization_ODE}, we know that
\begin{equation*}
\tag{\ref{eq:dt_score}}
\partial_t \partial_x\log \rv{p}_t = -\partial_x^2 \log \rv{p}_t \partial_x \log \rv{p}_t - \frac{1}{2}\partial_x^3  \log \rv{p}_t,
\end{equation*}
from which we also deduce,
$$
\partial_t \partial_x^2\log \rv{p}_t = 
-\partial_x^3 \log \rv{p}_t  \partial_x \log \rv{p}_t 
-\op\partial_x^2 \log \rv{p}_t\cp^2
-\frac{1}{2}\partial_x^4  \log \rv{p}_t,
$$
and 
$$
\partial_t \partial_x^3\log \rv{p}_t = 
-\partial_x^4 \log \rv{p}_t  \partial_x \log \rv{p}_t
-3\partial_x^3 \log \rv{p}_t  \partial_x^2 \log \rv{p}_t 
-\frac{1}{2}\partial_x^5  \log \rv{p}_t.
$$
Combining these expressions gives:
\begin{align}
\label{eq:bound_second_derivative_score:decompdt2}
\frac{d^2}{dt^2}\nabla \log \rv p_t(x_t) 
&=
\frac{1}{4}\partial_x^5\log \rv p_t(x_t)
+ \frac{1}{2}\partial_x^4  \log \rv p_t (x_t)\partial_x\log \rv p_t(x_t)
+ \frac{7}{4}\partial_x^3\log \rv p_t(x_t)\partial_x^2\log \rv p_t(x_t) \notag\\
&\quad
+\frac{1}{4}\partial_x^3\log \rv p_t(x_t)\op\partial_x\log \rv p_t(x_t)\cp^2
+\frac{3}{4}\op\partial_x^2\log \rv p_t(x_t)\cp^2\partial_x\log \rv p_t(x_t).
\end{align}
The expressions of $\partial_x\log p_t$, $\partial_x^2\log p_t$ and $\partial_x^3 \log p_t$ are given by Lemma~\ref{lemma:score_moment} in Appendix~\ref{sct:score_moment}:
\begin{align*}
\partial_x\log \rv p_t(x) &=  -\frac{1}{T-t}\E[B_{T-t}|X_{T-t} = x],\\
\partial_x^2\log \rv p_t(x) &= - \frac{1}{T-t} + \frac{1}{(T-t)^2}\cov(X|X_{T-t}=x),\\
\partial_x^3 \log  \rv p_t(x) &= \frac{1}{(T-t)^3} \E[X^2(X-\E[X|X_{T-t}=x] )|X_{T-t}=x]\\
&\quad- \frac{2}{(T-t)^3} \cov(X|X_{T-t}=x) \cdot\E[X|X_{T-t} = x]\\
&= \frac{1}{(T-t)^3} \E[X^3|X_{T-t}=x] - \frac{3}{(T-t)^3} \E[X^2|X_{T-t}=x]\E[X|X_{T-t} = x]\\
&\quad- \frac{2}{(T-t)^3} \E[X|X_{T-t}=x]^3.
\end{align*}
and similarly the expressions of of $\partial_x^4\log p_t$ and $\partial_x^5 \log p_t$ are given by Lemma~\ref{lemma:score_moment_higher_order}:
\begin{align*}
\partial_x^4 \log   p_t(x)
&= \frac{1}{t^4}(\E[X^4|X_{t}=x] -4\E[X^3|X_{t}=x]\E[X|X_{t} = x] -3\E[X^2|X_{t}=x]^2
\\&\quad 
+ 12\E[X^2|X_{t}=x]\E[X|X_{t} = x]^2
-6\E[X|X_{t}=x]^4),\\
\partial_x^5 \log   p_t(x)
&= \frac{1}{t^5}(
\E[X^5|X_{t}=x]
-5 \E[X^4|X_{t}=x] \E[X|X_{t}=x]
-10 \E[X^3|X_{t}=x] \E[X^2|X_{t}=x]
\\&\quad
+20 \E[X^3|X_{t}=x] \E[X|X_{t}=x]^2
+30 \E[X^2|X_{t}=x]^2 \E[X|X_{t}=x]
\\&\quad
-60 \E[X^2|X_{t}=x] \E[X|X_{t}=x]^3
+24 \E[X|X_{t}=x]^5
).
\end{align*}
As $X \in B(0,R)$ almost surely, all the conditional moments are bounded by the corresponding power of $R$, hence
\begin{align*}
|\partial_x^3 \log \rv p_{t}(x)| &\leq \frac{6R^3}{(T-t)^3},\\
|\partial_x^3 \log \rv p_{t}(x)| &\leq \frac{26R^4}{(T-t)^4},\\
|\partial_x^3 \log \rv p_{t}(x)| &\leq \frac{150R^5}{(T-t)^5}.
\end{align*}
With $\epsilon \leq R^2$ and $t \leq T-\epsilon$, we also have
$$
|\partial_x^2 \log \rv p_{T-t}(x)| \leq \max \op \frac{1}{T-t}, \left|\frac{R^2}{(T-t)^2}- \frac{1}{T-t}\right|\cp \leq \frac{R^2}{\epsilon(T-t)}.
$$
Finally, we have, 
\begin{align*}
\left\Vert\partial_x \log \rv p_{T-t}(x_{T-t})\right\Vert_{L_2} 
&=\op\E\ob\left\Vert-\frac{1}{T-t}\E[B_{T-t}|X_{T-t} = x] \right\Vert^2\cb\cp^{1/2}\\
&\leq \frac{1}{T-t} \op\E\ob\Vert B_{T-t} \Vert^2\cb\cp^{1/2} = \frac{1}{\sqrt{T-t}} \sqrt{d},
\end{align*}
and
\begin{align*}
\left\Vert(\partial_x \log \rv p_{T-t}(x_{T-t}))^2\right\Vert_{L_2} 
&=\op\E\ob\left\Vert-\frac{1}{T-t}\E[B_{T-t}|X_{T-t} = x] \right\Vert^4\cb\cp^{1/2}\\
&\leq \frac{1}{(T-t)^2} \op\E\ob\Vert B_{T-t} \Vert^4\cb\cp^{1/2} = \frac{1}{T-t}\sqrt{3}d.
\end{align*}

\noindent Combining these bounds in (\ref{eq:bound_second_derivative_score:decompdt2}), with $T-t\geq \epsilon$ and $\epsilon \leq R^2$ leads to 
\begin{align*}
\left\Vert\frac{d^2}{dt^2}\nabla \log \rv p_t(x_t)\right\Vert_{L_2}
&\leq \frac{1}{4} \frac{150R^5}{(T-t)^5} + \frac{1}{2} \frac{26R^4}{(T-t)^4}\frac{1}{\sqrt{T-t}} \sqrt{d} + \frac{7}{4} \frac{6R^3}{(T-t)^3} \frac{R^2}{\epsilon(T-t)}\\
&\quad + \frac{1}{2}\frac{6R^3}{(T-t)^3}\frac{1}{T-t}\sqrt{3}d + \frac{3}{4} \op\frac{R^2}{\epsilon(T-t)}\cp^2  \frac{1}{\sqrt{T-t}} \sqrt{d}\\
&\leq d\frac{R^5}{\epsilon^{5/2}(T-t)^{5/2}} \op \frac{150}{4} + \frac{26}{2} + \frac{42}{4} + \frac{6\sqrt{3}}{4} + \frac{3}{4}\cp \leq 66d\frac{R^5}{\epsilon^{5/2}(T-t)^{5/2}}.    
\end{align*}
\end{proof}

\subsection{Proof of Lemma \ref{lemma:bound_discretization_SDE}}

We start by noticing that
\begin{align*}
\int_t^{t+h} \nabla\log\rv{p}_s(\rv X_s)ds - h\nabla\log\rv{p}_t(\rv X_t) 
&=\int_t^{t+h} \op\nabla\log\rv{p}_t(\rv X_s) - \nabla\log\rv{p}_t(\rv X_t)\cp ds\\
&= \int_t^{t+h}\int_t^s d\op\nabla\log\rv{p}_u(\rv X_u)\cp ds.    
\end{align*}

\noindent\textbf{Expressing $d\nabla \log \rv{p}_t(\rv X_t)$.}
Using Itō's formula, (\ref{eq:reverse_SDE}) and (\ref{eq:dt_score}), we get:
\begin{align*}
d\nabla \log \rv{p}_t(\rv X_t) &= \nabla^2\log \rv{p}_t(\rv X_t)\cdot d\rv X_t + [\partial_t\nabla \log \rv{p}_t](\rv X_t)dt + \frac{1}{2}\Delta\nabla \log \rv{p}_t(X_t) d\langle \rv X,\rv X\,\rangle_t \\
&= \nabla^2\log \rv{p}_t(\rv X_t)\cdot \nabla\log\rv{p}_t(\rv X_t)dt + \nabla^2\log \rv{p}_t(\rv X_t)\cdot dW_t\\
&\quad -\nabla^2 \log \rv{p}_t(\rv X_t) \cdot \nabla \log \rv{p}_t(X_t)dt - \frac{1}{2}\nabla\Delta  \log \rv{p}_t(\rv X_t)dt \\
&\quad + \frac{1}{2}\Delta\nabla \log \rv{p}_t(\rv X_t) dt \\
&= \nabla^2\log \rv{p}_t(\rv X_t)\cdot dW_t.
\end{align*}
We deduce that
$$
\int_t^{t+h} \nabla\log\rv{p}_s(\rv X_s)ds - h\nabla\log\rv{p}_t(\rv X_t)= \int_t^{t+h}\int_t^s \nabla^2\log \rv{p}_u(\rv X_u)\cdot dW_u ds.
$$
\noindent\textbf{Controlling the error.} Using Ito's isometry,
\begin{align*}
\left\Vert \int_t^{t+h} \nabla\log\rv{p}_s(\rv X_s)ds - h\nabla\log\rv{p}_t(\rv X_t)\right\Vert_{L_2}
&\leq \int_t^{t+h} \left\Vert\int_t^s \nabla^2\log \rv{p}_u(\rv X_u)\cdot dW_u\right\Vert_{L_2}\\
&= \int_t^{t+h}  \op \E\ob \int_t^s\left\Vert\nabla^2\log \rv{p}_u(\rv X_u)\right\Vert_{\textnormal{F}}^2du\cb\cp^{1/2}ds.
\end{align*}
With (\ref{eq:bound_hessian}) from Lemma~\ref{lemma:bound_spatial_regularity}, we know that all eigenvalues of $\nabla^2\log \rv p_u(x)$ are bounded by $\max \op \frac{1}{T-u}, \left|\frac{R^2}{(T-u)^2}- \frac{1}{T-u}\right|\cp$, which is less than $\frac{R^2}{\epsilon(T-t)},$ as $\epsilon \leq R^2$ and $u \leq (t+h) \leq T-\epsilon$.
It implies that
$$\Vert \nabla^2\log \rv p_u(x)\Vert_{\textnormal{F}}^2 \leq d \frac{R^4}{\epsilon^2(T-t)^2},$$
so finally,
\begin{align*}
\left\Vert \int_t^{t+h} \nabla\log\rv{p}_s(\rv X_s)ds - h\nabla\log\rv{p}_t(\rv X_t)\right\Vert_{L_2}
&\leq  \sqrt{d}\frac{R^2}{\epsilon}\int_t^{t+h}  \op \int_t^s \frac{1}{(T+u)^2}du\cp^{1/2}ds\\
&= \sqrt{d}\frac{R^2}{\epsilon}\int_t^{t+h}  \op \frac{t-s}{2(T-t)(T-s)}\cp^{1/2}ds\\
&\leq \frac{\sqrt{2d}}{2}\frac{R^2}{\epsilon} \sqrt{h} \int_t^{t+h}\frac{1}{T-s}ds.
\end{align*}

\subsection{Proof of Proposition \ref{prop:init_error}}

\noindent\textbf{First bound.} We bound the Wasserstein distance by exhibiting specific couplings between $Y$ and $\hat{Y}$. 
We start by taking $\gamma Z' = \sqrt{\gamma^2 - \beta^2}Z'' + \beta Z$, with $Z'' \sim \N(0,I)$ independent form $Z$, leading to 
$$
Y - \hat{Y} = \alpha X + \sqrt{\gamma^2 - \beta^2}Z'' = \alpha \op X -\frac{\sqrt{\gamma^2 -\beta^2}}{\alpha}Z''\cp.
$$
Then choosing the optimal coupling between $X$ and $Z''$ such that 
$$
\left\Vert X -\frac{\sqrt{\gamma^2 -\beta^2}}{\alpha}Z''\right\Vert_{L_2} = W_2\op X,\N\op 0,\frac{\gamma^2 -\beta^2}{\alpha^2}I\cp\cp,
$$
we get that 
$$
W_2(\L(Y),\L(\hat{Y})) \leq \Vert Y - \hat{Y}\Vert_{L_2} = \alpha W_2\op X,\N\op 0,\frac{\gamma^2 -\beta^2}{\alpha^2}I\cp\cp.
$$

\noindent\textbf{Second bound.} To get the second bound, we write:
$$
Y = \alpha \op X + \frac{\beta}{\alpha}Z\cp,
$$
and,
$$
\hat{Y} = \alpha \op \sqrt{\frac{\gamma^2 -\beta^2}{\alpha^2}}Z'' + \frac{\beta}{\alpha}Z'''\cp,
$$
with $Z'',Z'''\sim  \N(0,I)$, $Z''\bot Z'''$.
In other words, denoting $\mu$ the distribution of $X$ and $\nu = \N\op 0,\frac{\gamma^2 -\beta^2}{\alpha^2}I\cp$, $\rho_t = \N (0,tI)$, we have:
$$
W_2(\L(Y),\L(\hat{Y})) = \alpha W_2(\mu *\rho_{\beta^2/\alpha^2},\nu * \rho_{\beta^2/\alpha^2}).$$
To conclude, with the assumption that as $\E[X] =0$ and $\E\ob e^{\xi X^2}\cb < \infty$, for some $\xi>0$, as $\beta/\alpha \rightarrow+\infty$, Therorem~2.1 of \citet{chenAsymptoticsSmoothedWasserstein2022} gives the following asymptotic behavior:
\begin{align*}
W_2(\mu *\rho_{\alpha^2/\beta^2},\nu * \rho_{\alpha^2/\beta^2}) &\sim \frac{\alpha}{\beta} \op\frac{1}{4}\sum_{i\in \{1,...,n\}}\op\E[X_i^2]- \frac{\gamma^2 -\beta^2}{\alpha^2}\cp^2 +  \frac{1}{4}\sum_{i\neq j} \E[X_iX_j]^2 \cp^{1/2}\\
&=\frac{\alpha^2}{2\beta}\left\Vert \Sigma - \frac{\gamma^2 -\beta^2}{\alpha^2}I\right\Vert_\textnormal{F},
\end{align*}
with $\Sigma = \E\ob XX^\top\cb$.

\subsection{Proof of Corollary \ref{cor:init_error}}

The first bound is a direct application of Proposition~\ref{prop:init_error} with $\alpha = 1$, $\beta = \sqrt{T}$ and $\gamma = \sqrt{T}$.
To get the asymptotic behavior, we need to prove that there exists $\xi >0$ such that $\E\ob e^{\xi \Vert X\Vert^2}\cb < \infty$. Under Assumption 1, for any $\xi >0$, we have $\E\ob e^{\xi \Vert X\Vert^2}\cb \leq  e^{\xi R^2}< \infty$. Then, with $\E[X]=0$, Proposition~\ref{prop:init_error} gives
$$
W_2(\L(X_T),\L(\hat{X}_0)) \sim \frac{1}{2} \frac{\Vert \Sigma\Vert_\textnormal{F}}{\sqrt{T}},
$$
hence
$$
\frac{W_2(\L(X_T),\L(\hat{X}_0))}{\Vert \Sigma\Vert_\textnormal{F}/\sqrt{T}} \xrightarrow[T\rightarrow\infty]{} \frac{1}{2} < 1.
$$
In particular, for $T$ large enough, i.e., $T \geq C$ with $C$ that only depends on $\L(X)$, we have 
$$
W_2(\L(X_T),\L(\hat{X}_0)) \leq \frac{\Vert \Sigma\Vert_\textnormal{F}}{\sqrt{T}}.
$$
We conclude by noticing that under Assumption 1, as $\Sigma$ is symmetric positive semi-definite, denoting $\lambda_i$ its eigenvalues, we have
$$
\Vert \Sigma\Vert_\textnormal{F} = \op\sum_i \lambda_i^2\cp^{1/2}
\leq \sum_i |\lambda_i| =\sum_i \lambda_i = \tr[\Sigma] = \E[\tr[XX^\top]] = \Vert X \Vert_{L_2}^2 
\leq R^2.
$$

\subsection{Proof of Lemma \ref{lemma:bound_early_stopping_error}}

We write that, as $(X_0,X_\epsilon)$ is a particular coupling between $\L(X)$ and $\L(X_{\epsilon})$,
$$
W_2(\L(X),\L(X_{\epsilon})) = W_2(\L(X_0),\L(X_{\epsilon}))
\leq \Vert X_0 - X_{\epsilon}\Vert_{L_2}  
= \Vert B_\epsilon\Vert_{L_2} 
= \sqrt{d \epsilon} .
$$

\subsection{Proof of Proposition \ref{prop:convergence_ODE_emp_score}}

To bound the Wasserstein distance between $X$ and $\hat{X} = \hat{X}_N$ the output of the algorithm, we construct a specific coupling between the two variables. Here, the sampler is deterministic, so we only choose the coupling between $x_0 = X_T$ and $\hat{X}_0 \sim \N(0,TI)$, such that,
\begin{equation}
\label{eq:convergence_ODE_emp_score:init}
\Vert x_0 - \hat{X}_0\Vert_{L_2} =\Vert X_T - \hat{X}_0\Vert_{L_2} = W_2(\L(X_T),\L(\hat{X}_0)),
\end{equation}
and follow the evolution of the ODE and its discretization to get a coupling between  $X$ and $\hat{X}$.

\noindent\textbf{Bounding the error at step $n$.} We first look at the error at each discretization step. For $0 < n \leq N$, we have,
$$
x_{t_{n}}= x_{t_{n-1}}+ \frac{1}{2}\int_{t_{n-1}}^{t_{n}} \nabla\log \rv p_t(x_t)dt,
$$
and,
$$
\hat{X}_n = \hat{X}_{n-1} +  \frac{h}{2}s_\theta(T-t_{n-1},\hat{X}_{n-1}),
$$
hence,
\begin{align*}
x_{t_{n}} - \hat{X}_n &= x_{t_{n-1}} - \hat{X}_{n-1} + \frac{1}{2}\int_{t_{n-1}}^{t_{n}} \nabla\log \rv p_t(x_t)dt  - \frac{h}{2}s_\theta(T-t_{n-1},\hat{X}_{n-1})\\
&= x_{t_{n-1}} - \hat{X}_{n-1} \\
&\quad + \frac{1}{2}\op \int_{t_{n-1}}^{t_{n}} \nabla\log \rv p_t(x_t)dt - h\nabla\log \rv p_{t_{n-1}}(x_{t_{n-1}})\cp \\
&\quad + \frac{h}{2}\op\nabla\log \rv p_{t_{n-1}}(x_{t_{n-1}}) - s_\theta(T-t_{n-1},x_{t_{n-1}})\cp\\
&\quad + \frac{h}{2}\op s_\theta(T-t_{n-1},x_{t_{n-1}}) - s_\theta(T-t_{n-1},\hat{X}_{n-1})\cp\\
&= \op \op I +\frac{h}{2}s_\theta(T-t_{n-1},\cdot)\cp(x_{t_{n-1}}) - \op I +\frac{h}{2}s_\theta(T-t_{n-1},\cdot)\cp(\hat{X}_{n-1})\cp\\
&\quad + \frac{1}{2}\op \int_{t_{n-1}}^{t_{n}} \nabla\log \rv p_t(x_t)dt - h\nabla\log \rv p_{t_{n-1}}(x_{t_{n-1}})\cp \\
&\quad + \frac{h}{2}\op\nabla\log \rv p_{t_{n-1}}(x_{t_{n-1}}) - s_\theta(T-t_{n-1},x_{t_{n-1}})\cp.
\end{align*}
As $x_{t_{n-1}} \sim X_{T-t_{n-1}}$, we have,
\begin{align*}
\Vert\nabla\log \rv p_{t_{n-1}}&(x_{t_{n-1}}) - s_\theta(T-t_{n-1},x_{t_{n-1}})\Vert_{L_2}\\
&= \Vert \nabla \log p_{T-t_{n-1}}(X_{T-t_{n-1}}) - s_\theta(T-t_{n-1},X_{T-t_{n-1}})\Vert_{L_2}
= \ep_{\text{score}}(T-t_{n-1}).
\end{align*}
and with Assumption 2 on $x \mapsto x + h s_\theta(t,x)$ , we get that 
\begin{multline*}
\left\Vert\op I +\frac{h}{2}s_\theta(T-t_{n-1},\cdot)\cp(x_{t_{n-1}}) - \op I +\frac{h}{2}s_\theta(T-t_{n-1},\cdot)\cp(\hat{X}_{n-1})\right\Vert_{L_2} 
\\\leq  L_{n-1}\Vert x_{t_{n-1}} - \hat{X}_{n-1}\Vert_{L_2},
\end{multline*}
with $L_{n-1} = L_{T-t_{n-1},h/2} = 1 + \frac{h}{2}\op\frac{R^2}{(T-t_{n-1})^2} - \frac{1}{T-t_{n-1}}\cp.$
With Lemma~\ref{lemma:bound_discretization_ODE}, we get that 
$$
\left\Vert \int_{t_{n-1}}^{t_{n}} \nabla\log \rv p_t(x_t)dt - h\nabla\log \rv p_{t_{n-1}}(x_{t_{n-1}})\right\Vert_{L_2} \leq 2\sqrt{d}\frac{R^3}{\epsilon^{3/2}}h \int_{t_{n-1}}^{t_n} \frac{1}{(T-u)^{3/2}} du,
$$
leading to,
\begin{equation}
\label{eq:convergence_ODE_emp_score:bound_step_n}
\Vert x_{t_{n}} - \hat{X}_n\Vert_{L_2} \leq L_{n-1}\Vert x_{t_{n-1}} - \hat{X}_{n-1}\Vert_{L_2}  +\sqrt{d}\frac{R^3}{\epsilon^{3/2}}h \int_{t_{n-1}}^{t_n} \frac{1}{(T-u)^{3/2}} du + \frac{h}{2}\ep_{\text{score}}(T-t_{n-1}).    
\end{equation}

\noindent\textbf{Bounding the error at time $T-\epsilon.$} With (\ref{eq:convergence_ODE_emp_score:bound_step_n}), by induction, we get that,
\begin{align*}
\Vert x_{T-\epsilon} - \hat{X}_N\ \Vert_{L_2} 
&\leq \op\prod_{n=0}^{N-1}L_{{n}}\cp\Vert x_{0} - \hat{X}_{0}\Vert_{L_2} 
+ \sqrt{d}\frac{R^3}{\epsilon^{3/2}}h\sum_{n=1}^{N}\op\prod_{m=n}^{N-1}L_{m}\cp\int_{t_{n-1}}^{t_n} \frac{1}{(T-u)^{3/2}} du \\
&\quad+ \frac{h}{2}\sum_{n=1}^{N}\op\prod_{m=n}^{N-1}L_{m}\cp\ep_{\text{score}}(T-t_{n-1}).
\end{align*}

\sloppy As $(x_{T-\epsilon},\hat{X}_N)$ above is a specific coupling between the two variables, we have $W_2(\L(x_{T-\epsilon}),\L(\hat{X}_N)) \leq \Vert x_{T-\epsilon} - \hat{X}_N\ \Vert_{L_2}$, and with (\ref{eq:convergence_ODE_emp_score:init}), we get,
\begin{multline*}
W_2(\L(x_{T-\epsilon}),\L(\hat{X}_N)) \leq \op\prod_{n=0}^{N-1}L_{{n}}\cp W_2(\L(X_T),\L(\hat{X}_0)) \\
\quad+\sqrt{d}\frac{R^3}{\epsilon^{3/2}}h\sum_{n=1}^{N}\op\prod_{m=n}^{N-1}L_{m}\cp\int_{t_{n-1}}^{t_n} \frac{1}{(T-u)^{3/2}} du
+ \frac{h}{2}\sum_{n=1}^{N}\op\prod_{m=n}^{N-1}L_{m}\cp\ep_{\text{score}}(T-t_{n-1}).
\end{multline*}

\noindent\textbf{Bounding the propagation of error.} Lemma~\ref{lemma:bound_propagation_error:ODE} gives
$$
\op\prod_{m=n}^{N-1}L_{{m}}\cp \leq \sqrt{\frac{2\epsilon}{T-t_{n-1}}} \exp\op\frac{R^2}{2\epsilon}\cp,
$$
and
$$
\sum_{n=1}^{N}\op\prod_{m=n}^{N-1}L_{m}\cp\int_{t_{n-1}}^{t_n} \frac{1}{(T-u)^{3/2}} du \leq  \frac{1}{\sqrt{2\epsilon}}\exp\op\frac{R^2}{2\epsilon}\cp.
$$
For the error on the score, this leads to
\begin{align*}
\op\prod_{n=0}^{N-1}L_{{n}}\cp W_2(\L(X_T),\L(\hat{X}_0))
&\leq \sqrt{\frac{2\epsilon}{T+h}} \exp\op\frac{R^2}{2\epsilon}\cp W_2(\L(X_T),\L(\hat{X}_0))\\
&\leq \sqrt{\frac{2\epsilon}{T}} \exp\op\frac{R^2}{2\epsilon}\cp W_2(\L(X_T),\L(\hat{X}_0)),
\end{align*}
and for the error on the score, 
\begin{align*}
\sum_{n=1}^{N}\op\prod_{m=n}^{N-1}L_{m}\cp\ep_{\text{score}}(T-t_{n-1}) 
&\leq \sqrt{2\epsilon}\exp\op\frac{R^2}{2\epsilon}\cp\sum_{n=1}^{N}\frac{\ep_{\text{score}}(T-t_{n-1}) }{\sqrt{T-t_{n-1}}} \\
&= \sqrt{2\epsilon}\exp\op\frac{R^2}{2\epsilon}\cp\sum_{k=1}^{N}\frac{\ep_{\text{score}}(\epsilon +hk) }{\sqrt{\epsilon +hk}}.
\end{align*}
Combining the bounds above leads to,
\begin{multline*}
W_2(\L(x_{T-\epsilon}),\L(\hat{X}_N))
\leq \sqrt{\frac{2\epsilon}{T}} \exp\op\frac{R^2}{2\epsilon}\cp W_2(\L(X_T),\L(\hat{X}_0))
+ \sqrt{d}\frac{R^3}{\sqrt{2}\epsilon^{2}}\exp\op\frac{R^2}{2\epsilon}\cp h \\
+ \sqrt{\frac{\epsilon}{2}}\exp\op\frac{R^2}{2\epsilon}\cp h\sum_{k=1}^{N}\frac{\ep_{\text{score}}(\epsilon +hk) }{\sqrt{\epsilon +hk}}. 
\end{multline*}

\noindent\textbf{Early stopping error.} With the triangular inequality and Lemma~\ref{lemma:bound_early_stopping_error}, we have 
\begin{align*}
W_2(\L(X),\L(\hat{X})) = W_2(\L(x_T),\L(\hat{X}_N))
&\leq W_2(\L(x_T),\L(x_{T-\epsilon})) + W_2(\L(x_{T-\epsilon}),\L(\hat{X}_N))  \\
&= W_2(\L(X_0),\L(X_{\epsilon})) + W_2(\L(x_{T-\epsilon}),\L(\hat{X}_N)) \\
&\leq \sqrt{d\epsilon} + W_2(\L(x_{T-\epsilon}),\L(\hat{X}_N)).
\end{align*}

\noindent\textbf{Initialization error.} Corollary~\ref{cor:init_error} gives, for $T$ large enough (depending only on $\L(X)$)
$$
W_2(\L(X_T),\L(\hat{X}_0))\leq \frac{R^2}{\sqrt{T}},
$$
hence finally,
\begin{align*}
W_2(\L&(X),\L(\hat{X})) \leq \sqrt{d\epsilon}+\frac{\sqrt{2\epsilon}}{T} \exp\op\frac{R^2}{2\epsilon}\cp R^2\\
&\quad+ \sqrt{d}\frac{R^3}{\sqrt{2}\epsilon^{2}}\exp\op\frac{R^2}{2\epsilon}\cp h 
+ \sqrt{\frac{\epsilon}{2}}\exp\op\frac{R^2}{2\epsilon}\cp h\sum_{k=1}^{N}\frac{\ep_{\text{score}}(\epsilon +hk)) }{\sqrt{\epsilon +hk}}. \tag{\ref{eq:bound_ODE_emp}}
\end{align*}

\subsection{Proof of Proposition \ref{prop:convergence_Heun_emp_score}}

To bound the Wasserstein distance between $X$ and $\hat{X} = \hat{X}_N$ the output of the algorithm, we construct a specific coupling between the two variables. Here, the sampler is deterministic, so we only choose the coupling between $x_0 = X_T$ and $\hat{X}_0 \sim \N(0,TI)$, such that,
\begin{equation}
\label{eq:convergence_Heun_emp_score:init}
\Vert x_0 - \hat{X}_0\Vert_{L_2} =\Vert X_T - \hat{X}_0\Vert_{L_2} = W_2(\L(X_T),\L(\hat{X}_0)),
\end{equation}
and follow the evolution of the ODE and its discretization to get a coupling between  $X$ and $\hat{X}$.

\noindent\textbf{Bounding the error at step $n$.} We first look at the error at each discretization step. For $0 < n \leq N$, we have,
$$
x_{t_{n}}= x_{t_{n-1}}+ \frac{1}{2}\int_{t_{n-1}}^{t_{n}} \nabla\log \rv p_t(x_t)dt,
$$
and,
\begin{align*}
\hat{Y}_n &= \hat{X}_{n-1} + \frac{h}{2} s_\theta(T-t_{n-1},\hat{X}_{n-1}),\\
\hat{X}_n &= \hat{X}_{n-1} + \frac{h}{4} \op s_\theta(T-t_{n-1},\hat{X}_{n-1}) + s_\theta(T-t_{n},\hat{Y}_{n})\cp.
\end{align*}
We also introduce
$$
\td Y_n = x_{t_{n-1}} + \frac{h}{2}s_\theta\op T - t_{n-1};\hat X_{n-1}\cp,
$$
such that 
$$
\td Y_n - \hat Y_n = x_{t_{n-1}} - \hat X_{n-1}.
$$
Then we have,
\begin{align*}
x&_{t_{n}} - \hat{X}_n \\&= x_{t_{n-1}} - \hat{X}_{n-1} + \frac{1}{2}\int_{t_{n-1}}^{t_{n}} \nabla\log \rv p_t(x_t)dt  - \frac{h}{4} \op s_\theta(T-t_{n-1},\hat{X}_{n-1}) + s_\theta(T-t_{n},\hat{Y}_{n})\cp\\
&= x_{t_{n-1}} - \hat{X}_{n-1} \\
&\quad + \underbrace{\frac{1}{2}\op \int_{t_{n-1}}^{t_{n}} \nabla\log \rv p_t(x_t)dt - \frac{h}{2}\op\nabla\log \rv p_{t_{n-1}}(x_{t_{n-1}})+ \nabla\log \rv p_{t_{n}}(x_{t_{n}})\cp\cp}_{D_n} \\
&\quad + \underbrace{\frac{h}{4}\op\nabla\log \rv p_{t_{n-1}}(x_{t_{n-1}}) - s_\theta(T-t_{n-1},x_{t_{n-1}})\cp + \frac{h}{4}\op\nabla\log \rv p_{t_{n}}(x_{t_{n}}) - s_\theta(T-t_{n},x_{t_{n}})\cp}_{S_n}\\
&\quad + \underbrace{\frac{h}{4}\op s_\theta(T-t_{n},x_{t_{n}}) - s_\theta(T-t_{n-1},\td Y_n)\cp}_{R_n}\\
&\quad + \frac{h}{4}\op s_\theta(T-t_{n},\td Y_n) - s_\theta(T-t_{n},\hat Y_n)\cp + \frac{h}{4}\op s_\theta(T-t_{n-1},x_{t_{n-1}}) - s_\theta(T-t_{n-1},\hat{X}_{n-1})\cp\\
&= \frac{1}{2}\op \op I +\frac{h}{2}s_\theta(T-t_{n},\cdot)\cp(\td Y_n) - \op I +\frac{h}{2}s_\theta(T-t_{n},\cdot)\cp(\hat Y_n)\cp\\
&\quad +\frac{1}{2}\op \op I +\frac{h}{2}s_\theta(T-t_{n-1},\cdot)\cp(x_{t_{n-1}}) - \op I +\frac{h}{2}s_\theta(T-t_{n-1},\cdot)\cp(\hat{X}_{n-1})\cp\\
&\quad + D_n + S_n + R_n.
\end{align*}
As $x_{t_{n-1}} \sim X_{T-t_{n-1}}$, we have,
\begin{multline*}
\Vert\nabla\log \rv p_{t_{n-1}}(x_{t_{n-1}}) - s_\theta(T-t_{n-1},x_{t_{n-1}})\Vert_{L_2}\\
= \Vert \nabla \log p_{T-t_{n-1}}(X_{T-t_{n-1}}) - s_\theta(T-t_{n-1},X_{T-t_{n-1}})\Vert_{L_2} 
= \ep_{\text{score}}(T-t_{n-1}),
\end{multline*}
and as $x_{t_{n}} \sim X_{T-t_{n}}$, 
\begin{multline*}
\Vert\nabla\log \rv p_{t_{n}}(x_{t_{n}}) - s_\theta(T-t_{n},x_{t_{n}})\Vert_{L_2}\\
= \Vert \nabla \log p_{T-t_{n}}(X_{T-t_{n}}) - s_\theta(T-t_{n},X_{T-t_{n}})\Vert_{L_2} 
= \ep_{\text{score}}(T-t_{n}),
\end{multline*}
hence
$$
\Vert S_n\Vert_{L_2} \leq \frac{h}{4}\op\ep_{\text{score}}(T-t_{n})+\ep_{\text{score}}(T-t_{n-1})\cp.
$$

\noindent With Assumption 2 on $x \mapsto x + h s_\theta(t,x)$ , we get that 
$$
\left\Vert\op I +\frac{h}{2}\nabla\log \rv p_{t_{n-1}}\cp(x_{t_{n-1}}) - \op I +\frac{h}{2}\nabla\log \rv p_{t_{n-1}}\cp(\hat{X}_{n-1})\right\Vert_{L_2} \leq  L_{n-1}\Vert x_{t_{n-1}} - \hat{X}_{n-1}\Vert_{L_2},
$$
and
\begin{align*}
\left\Vert\op I +\frac{h}{2}\nabla\log \rv p_{t_{n}}\cp(\td Y_n) - \op I +\frac{h}{2}\nabla\log \rv p_{t_{n}}\cp(\hat Y_{n})\right\Vert_{L_2}
&\leq  L_{n}\Vert \td Y_n -\hat Y_n\Vert_{L_2} \\
&= L_{n}\Vert x_{t_{n-1}} - \hat{X}_{n-1}\Vert_{L_2},
\end{align*}
with $L_{n-1} = L_{T-t_{n-1},h/2} = 1 + \frac{h}{2}\op\frac{R^2}{(T-t_{n-1})^2} - \frac{1}{T-t_{n-1}}\cp$ and $L_{n} = L_{T-t_{n},h/2} = 1 + \frac{h}{2}\op\frac{R^2}{(T-t_{n})^2} - \frac{1}{T-t_{n}}\cp.$
Finally Lemma~\ref{lemma:bound_discretization_Heun} gives 
$$
\Vert D_n \Vert_{L_2} \leq \frac{33d}{2}\frac{R^5}{\epsilon^{5/2}}h^2\int_{t_{n-1}}^{t_n} \frac{1}{(T-u)^{5/2}} du \leq \frac{33d}{2}\frac{R^5}{\epsilon^{5/2}}h^3 \frac{1}{(T-t_n)^{5/2}},
$$
leading to,
\begin{multline}
\label{eq:convergence_Heun_emp_score:bound_step_n_without_R_n}
\Vert x_{t_{n}} - \hat{X}_n\Vert_{L_2} \leq \frac{L_{n-1}+L_n}{2}\Vert x_{t_{n-1}} - \hat{X}_{n-1}\Vert_{L_2}  + \frac{33d}{2}\frac{R^5}{\epsilon^{5/2}}h^3 \frac{1}{(T-t_n)^{5/2}}\\
+ \frac{h}{4}\op\ep_{\text{score}}(T-t_{n})+\ep_{\text{score}}(T-t_{n-1})\cp + \Vert R_n\Vert_{L_2}.
\end{multline}

\noindent\textbf{Bounding $\Vert R_n\Vert_{L_2}$.} Under Assumption 2, $x\mapsto s_\theta(T-t_n,x)$ is $C_{T-t_n}$-Lipchitz with
$$
C_{T-t_n} = \max \op \frac{1}{T-t_n}, \left|\frac{R^2}{(T-t_n)^2}- \frac{1}{T-t_n}\right|\cp 
\leq \frac{R^2}{\epsilon(T-t_n)},
$$
hence
\begin{align*}
\Vert R_n\Vert_{L_2} &\leq \frac{hR^2}{4\epsilon(T-t_n)} \Vert x_{t_n} - \td Y_n\Vert_{L_2}\\
&= \frac{hR^2}{4\epsilon(T-t_n)} \left\Vert \frac{1}{2}\op \int_{t_{n-1}}^{t_{n}} \nabla\log \rv p_t(x_t)dt- hs_\theta\op T - t_{n-1};\hat X_{n-1}\cp\cp\right\Vert_{L_2}\\
&\leq \frac{hR^2}{8\epsilon(T-t_n)} \left\Vert \int_{t_{n-1}}^{t_{n}} \nabla\log \rv p_t(x_t)dt- h  \nabla\log \rv p_{t_{n-1}}(x_{t_{n-1}})\right\Vert_{L_2}\\
&\quad+ \frac{h^2R^2}{8\epsilon(T-t_n)} \left\Vert  \nabla\log \rv p_{t_{n-1}}(x_{t_{n-1}})- s_\theta\op T - t_{n-1};x_{t_{n-1}}\cp\right\Vert_{L_2}\\
&\quad+ \frac{h^2R^2}{8\epsilon(T-t_n)} \left\Vert s_\theta\op T - t_{n-1};x_{t_{n-1}}\cp - s_\theta\op T - t_{n-1};\hat X_{n-1}\cp\right\Vert_{L_2}\\
&\leq \frac{hR^2}{8\epsilon(T-t_n)} \left\Vert \int_{t_{n-1}}^{t_{n}} \nabla\log \rv p_t(x_t)dt- h  \nabla\log \rv p_{t_{n-1}}(x_{t_{n-1}})\right\Vert_{L_2}\\
&\quad+ \frac{h^2R^2}{8\epsilon(T-t_n)} \ep_\text{score}(T-t_{n-1}) + \frac{h^2R^4}{8\epsilon^2(T-t_n)(T-t_{n-1})} \Vert x_{t_{n-1}} - \hat X_{n-1}\Vert_{L_2}.
\end{align*}
With Lemma~\ref{lemma:bound_discretization_ODE}, we have
\begin{align*}
\left\Vert \int_{t_{n-1}}^{t_{n}} \nabla\log \rv p_t(x_t)dt- h  \nabla\log \rv p_{t_{n-1}}(x_{t_{n-1}})\right\Vert_{L_2} 
&\leq 2\sqrt{d}\frac{R^3}{\epsilon^{3/2}}h \int_{t_{n-1}}^{t_n} \frac{1}{(T-u)^{3/2}} du\\
&\leq 2\sqrt{d}\frac{R^3}{\epsilon^{3/2}(T-t_n)^{3/2}}h^2.
\end{align*}
hence
\begin{align*}
\Vert R_n\Vert_{L_2} 
&\leq \frac{\sqrt{d}R^5}{4\epsilon^{5/2}(T-t_n)^{5/2}}h^3+ \frac{h^2R^2}{8\epsilon(T-t_n)} \ep_\text{score}(T-t_{n-1})\\
&\quad + \frac{h^2R^4}{8\epsilon^2(T-t_n)(T-t_{n-1})} \Vert x_{t_{n-1}} - \hat X_{n-1}\Vert_{L_2}\\
&\leq \frac{\sqrt{d}R^5}{4\epsilon^{5/2}(T-t_n)^{5/2}}h^3+ \frac{h^2R^2}{8\epsilon^2} \ep_\text{score}(T-t_{n-1}) + \frac{h^2R^4}{8\epsilon^2(T-t_n)^2} \Vert x_{t_{n-1}} - \hat X_{n-1}\Vert_{L_2}.
\end{align*}
This leads to
\begin{multline}
\label{eq:convergence_Heun_emp_score:bound_step_n}
\Vert x_{t_{n}} - \hat{X}_n\Vert_{L_2} 
\leq 
\op \frac{L_{n-1}+L_n}{2} + \frac{h^2R^4}{8\epsilon^2(T-t_n)^2}\cp\Vert x_{t_{n-1}} - \hat{X}_{n-1}\Vert_{L_2}  
+ \frac{17dR^5}{\epsilon^{5/2}(T-t_n)^{5/2}}h^3 \\
+ \frac{h}{4}\ep_{\text{score}}(T-t_{n})
+ \frac{h}{4}\op 1 + \frac{hR^2}{2\epsilon^2}\cp \ep_\text{score}(T-t_{n-1}).
\end{multline}

\noindent\textbf{Bounding the error at time $T-\epsilon.$} With (\ref{eq:convergence_Heun_emp_score:bound_step_n}), by induction, we get that,
\begin{multline*}
\Vert x_{T-\epsilon} - \hat{X}_N\ \Vert_{L_2} 
\leq \op\prod_{n=0}^{N-1}\frac{L_{n}+L_{n+1}}{2} + \frac{h^2R^4}{8\epsilon^2(T-t_{n+1})^2}\cp\Vert x_{0} - \hat{X}_{0}\Vert_{L_2} \\
+ \frac{17dR^5}{\epsilon^{5/2}}h^3\sum_{n=1}^{N}\op\prod_{m=n}^{N-1}\frac{L_{m}+L_{m+1}}{2} + \frac{h^2R^4}{8\epsilon^2(T-t_{m+1})^2}\cp\frac{1}{(T-t_n)^{5/2}} \\
+ \frac{h}{4}\sum_{n=1}^{N}\op\prod_{m=n}^{N-1}\frac{L_{m}+L_{m+1}}{2} + \frac{h^2R^4}{8\epsilon^2(T-t_{m+1})^2}\cp\ep_{\text{score}}(T-t_{n})\\
+ \frac{h}{4}\op 1 + \frac{hR^2}{2\epsilon^2}\cp\sum_{n=1}^{N}\op\prod_{m=n}^{N-1}\frac{L_{m}+L_{m+1}}{2} + \frac{h^2R^4}{8\epsilon^2(T-t_{m+1})^2}\cp\ep_{\text{score}}(T-t_{n-1}).    
\end{multline*}

Denote $K_n = \op\prod_{m=n}^{N-1}\frac{L_{m}+L_{m+1}}{2} + \frac{h^2R^4}{8\epsilon^2(T-t_{m+1})^2}\cp$. As $(x_{T-\epsilon},\hat{X}_N)$ above is a specific coupling between the two variables, we have $W_2(\L(x_{T-\epsilon}),\L(\hat{X}_N)) \leq \Vert x_{T-\epsilon} - \hat{X}_N\ \Vert_{L_2}$, and with (\ref{eq:convergence_Heun_emp_score:init}), we get,
\begin{multline*}
W_2(\L(x_{T-\epsilon}),\L(\hat{X}_N))
\leq K_0 W_2(\L(X_T),\L(\hat{X}_0))
+ \frac{17dR^5}{\epsilon^{5/2}}h^3\sum_{n=1}^{N}\frac{K_n}{(T-t_n)^{5/2}} \\
+ \frac{h}{4}\sum_{n=1}^{N}K_n\ep_{\text{score}}(T-t_{n})
+ \frac{h}{4}\op 1 + \frac{hR^2}{2\epsilon^2}\cp\sum_{n=1}^{N}K_n\ep_{\text{score}}(T-t_{n-1}).    
\end{multline*}

\noindent\textbf{Bounding the propagation of error.} 
With Lemma~\ref{lemma:bound_propagation_error:Heun}, we get that
$$
K_n \leq  \sqrt{\frac{2\epsilon}{T-t_{n-1}}} \exp\op\frac{R^2}{\epsilon} +\frac{hR^2}{4\epsilon^2}\cp,
$$
and
$$
h\sum_{n=1}^{N}\frac{K_n}{(T-t_n)^{5/2}}  \leq \frac{1}{\sqrt{2}\epsilon^{3/2}} \exp\op\frac{R^2}{\epsilon} +\frac{hR^2}{4\epsilon^2}\cp.
$$
For the initialization error, this leads to
\begin{align*}
K_0 W_2(\L(X_T),\L(\hat{X}_0)) &\leq \sqrt{\frac{2\epsilon}{T+h}} \exp\op\frac{R^2}{\epsilon} +\frac{hR^2}{4\epsilon^2}\cp W_2(\L(X_T),\L(\hat{X}_0))\\
&\leq \sqrt{\frac{2\epsilon}{T}} \exp\op\frac{R^2}{\epsilon} +\frac{hR^2}{4\epsilon^2}\cp W_2(\L(X_T),\L(\hat{X}_0)),
\end{align*}
and for the error on the score,
\begin{align*}
\sum_{n=1}^{N}K_n\ep_{\text{score}}(T-t_{n-1}) 
&\leq \sqrt{2\epsilon}\exp\op\frac{R^2}{\epsilon} +\frac{hR^2}{4\epsilon^2}\cp \sum_{n=1}^{N}\frac{\ep_{\text{score}}(T-t_{n-1}) }{\sqrt{T-t_{n-1}}} \\
&= \sqrt{2\epsilon}\exp\op\frac{R^2}{\epsilon} +\frac{hR^2}{4\epsilon^2}\cp\sum_{k=1}^{N}\frac{\ep_{\text{score}}(\epsilon +hk) }{\sqrt{\epsilon +hk}},
\end{align*}
and similarly,
$$
\sum_{n=1}^{N}K_n\ep_{\text{score}}(T-t_{n}) \leq \sqrt{2\epsilon}\exp\op\frac{R^2}{\epsilon} +\frac{hR^2}{4\epsilon^2}\cp \sum_{k=0}^{N-1}\frac{\ep_{\text{score}}(\epsilon +hk) }{\sqrt{\epsilon +h(k+1)}}.
$$
Combining the bounds above leads to,
\begin{multline*}
W_2(\L(x_{T-\epsilon}),\L(\hat{X}_N))
\leq  \sqrt{\frac{2\epsilon}{T}} \exp\op\frac{R^2}{\epsilon} +\frac{hR^2}{4\epsilon^2}\cp  W_2(\L(X_T),\L(\hat{X}_0)) \\
+ \frac{17dR^5}{\sqrt{2}\epsilon^{4}}\exp\op\frac{R^2}{\epsilon} +\frac{hR^2}{4\epsilon^2}\cp h^2 \\
+ \frac{\sqrt{\epsilon}}{2\sqrt{2}}\exp\op\frac{R^2}{\epsilon} +\frac{hR^2}{4\epsilon^2}\cp  h\sum_{k=0}^{N-1}\frac{\ep_{\text{score}}(\epsilon +hk) }{\sqrt{\epsilon +h(k+1)}}\\
+ \frac{\sqrt{\epsilon}}{2\sqrt{2}}\op 1 + \frac{hR^2}{2\epsilon^2}\cp\exp\op\frac{R^2}{\epsilon} +\frac{hR^2}{4\epsilon^2}\cp  h\sum_{k=1}^{N}\frac{\ep_{\text{score}}(\epsilon +hk) }{\sqrt{\epsilon +hk}}.    
\end{multline*}

\noindent\textbf{Early stopping error.} With the triangular inequality and Lemma~\ref{lemma:bound_early_stopping_error}, we have 
\begin{align*}
W_2(\L(X),\L(\hat{X})) = W_2(\L(x_T),\L(\hat{X}_N))
&\leq W_2(\L(x_T),\L(x_{T-\epsilon})) + W_2(\L(x_{T-\epsilon}),\L(\hat{X}_N))  \\
&= W_2(\L(X_0),\L(X_{\epsilon})) + W_2(\L(x_{T-\epsilon}),\L(\hat{X}_N)) \\
&\leq \sqrt{d\epsilon} + W_2(\L(x_{T-\epsilon}),\L(\hat{X}_N)).
\end{align*}

\noindent\textbf{Initialization error.} Corollary~\ref{cor:init_error} gives, for $T$ large enough (depending only on $\L(X)$)
$$
W_2(\L(X_T),\L(\hat{X}_0))\leq \frac{R^2}{\sqrt{T}},
$$
hence finally,
\begin{multline*}
\tag{\ref{eq:bound_Heun_emp}}
W_2(\L(X),\L(\hat{X})) \leq \sqrt{d\epsilon}
+  \frac{\sqrt{2\epsilon}}{T} \exp\op\frac{R^2}{\epsilon} +\frac{hR^2}{4\epsilon^2}\cp  R^2
+ \frac{17dR^5}{\sqrt{2}\epsilon^{4}}\exp\op\frac{R^2}{\epsilon} +\frac{hR^2}{4\epsilon^2}\cp h^2 \\
+ \frac{\sqrt{\epsilon}}{2\sqrt{2}}\exp\op\frac{R^2}{\epsilon} +\frac{hR^2}{4\epsilon^2}\cp \op h\sum_{k=0}^{N-1}\frac{\ep_{\text{score}}(\epsilon +hk) }{\sqrt{\epsilon +h(k+1)}}
+  \op 1 + \frac{hR^2}{2\epsilon^2}\cp h\sum_{k=1}^{N}\frac{\ep_{\text{score}}(\epsilon +hk) }{\sqrt{\epsilon +hk}}\cp.  
\end{multline*}

\subsection{Proof of Proposition \ref{prop:convergence_SDE_emp_score}}

To bound the Wasserstein distance between $X$ and $\hat{X} = \hat{X}_N$ the output of the algorithm, we construct a specific coupling between the two variables.
Contrary to the deterministic samplers, the Euler-Maruyama sampler adds noise at each step, so we need to choose a specific representation of this noise to get a coupling between $(\hat{X}_n)_n$ and $(\rv X_{t})_t$.
We start by choosing the coupling between $\rv X_0 = X_T$ and $\hat{X}_0 \sim \N(0,TI)$, such that,
\begin{equation}
\label{eq:convergence_SDE_emp_score:init}
\Vert X_0 - \hat{X}_0\Vert_{L_2} =\Vert X_T - \hat{X}_0\Vert_{L_2} = W_2(\L(X_T),\L(\hat{X}_0)).
\end{equation}

\noindent\textbf{Bounding the error at step $n$.} We first look at the error at each discretization step. For $0 < n \leq N$, we have,
$$
\rv X_{t_{n}}= \rv X_{t_{n-1}}+ \int_{t_{n-1}}^{t_{n}} \nabla\log \rv p_t(\rv X_t)dt + \int_{t_{n-1}}^{t_{n}} d W_t,
$$
and,
$$
\hat{X}_n = \hat{X}_{n-1} + h s_\theta(T-t_{n-1},\hat{X}_{n-1}) + Z_n,
$$
with $Z_n\sim \N(0, hI)$ independent from $\hat{X}_{n-1}$.
To define the coupling between $(\rv X_{t_n})_,$ and $(\hat{X}_n)_n$, we take 
$$Z_n=\int_{t_{n-1}}^{t_{n}} d W_t,$$  such that the Gaussian noises will cancel out when computing the difference $x_{t_{n}} - \hat{X}_n$:
\begin{align*}
\rv X_{t_{n}} - \hat{X}_n 
&= \rv X_{t_{n-1}} - \hat{X}_{n-1} + \int_{t_{n-1}}^{t_{n}} \nabla\log \rv p_t(\rv X_t)dt  - hs_\theta(T-t_{n-1},\hat{X}_{n-1})\\
&= \rv X_{t_{n-1}} - \hat{X}_{n-1} \\
&\quad + \int_{t_{n-1}}^{t_{n}} \nabla\log \rv p_t(\rv X_t)dt - h\nabla\log \rv p_{t_{n-1}}(\rv X_{t_{n-1}}) \\
&\quad + h\op\nabla\log \rv p_{t_{n-1}}(\rv X_{t_{n-1}}) - s_\theta(T-t_{n-1},\rv X_{t_{n-1}})\cp\\
&\quad + h\op s_\theta(T-t_{n-1},\rv X_{t_{n-1}}) - s_\theta(T-t_{n-1},\hat{X}_{n-1})\cp\\
&= \op \op I +hs_\theta(T-t_{n-1},\cdot)\cp(\rv X_{t_{n-1}}) - \op I + hs_\theta(T-t_{n-1},\cdot)\cp(\hat{X}_{n-1})\cp\\
&\quad + \int_{t_{n-1}}^{t_{n}} \nabla\log \rv p_t(\rv X_t) - \nabla\log \rv p_{t_{n-1}}(\rv X_{t_{n-1}})dt\\
&\quad + h\op\nabla\log \rv p_{t_{n-1}}(\rv X_{t_{n-1}}) - s_\theta(T-t_{n-1},\rv X_{t_{n-1}})\cp.
\end{align*}
As $\rv X_{t_{n-1}} \sim X_{T-t_{n-1}}$, we have,
\begin{multline*}
\Vert\nabla\log \rv p_{t_{n-1}}(\rv X_{t_{n-1}}) - s_\theta(T-t_{n-1},\rv X_{t_{n-1}})\Vert_{L_2}\\
= \Vert \nabla \log p_{T-t_{n-1}}(X_{T-t_{n-1}}) - s_\theta(T-t_{n-1},X_{T-t_{n-1}})\Vert_{L_2} 
= \ep_{\text{score}}(T-t_{n-1}), 
\end{multline*}
and with Assumption 2 on $x \mapsto x + h s_\theta(t,x)$ , we get that
$$
\left\Vert\op I +h\nabla\log\rv p_{t_{n-1}}\cp(\rv X_{t_{n-1}}) - \op I + h\nabla\log \rv p_{t_{n-1}}\cp(\hat{X}_{n-1})\right\Vert_{L_2} \leq  L_{n-1}\Vert \rv X_{t_{n-1}} - \hat{X}_{n-1}\Vert_{L_2},
$$
with $L_{n-1} = L_{T-t_{n-1},h} = 1 + h\op\frac{R^2}{(T-t_{n-1})^2} - \frac{1}{T-t_{n-1}}\cp.$ Finally Lemma~\ref{lemma:bound_discretization_SDE} gives 
$$
 \left\Vert \int_{t_{n-1}}^{t_{n}} \nabla\log \rv p_t(\rv X_t) - \nabla\log \rv p_{t_{n-1}}(\rv X_{t_{n-1}})dt \right\Vert_{L_2} \leq  \frac{\sqrt{2d}}{2}\frac{R^2}{\epsilon} \sqrt{h} \int_{t_{n-1}}^{t_n}\frac{1}{T-s}ds,
$$
leading to,
\begin{equation}
\label{eq:convergence_SDE_emp_score:bound_step_n}
\Vert \rv X_{t_{n}} - \hat{X}_n\Vert_{L_2}
\leq L_{n-1}\Vert \rv X_{t_{n-1}} - \hat{X}_{n-1}\Vert_{L_2}
+ \frac{\sqrt{2d}}{2}\frac{R^2}{\epsilon} \sqrt{h} \int_{t_{n-1}}^{t_n}\frac{1}{T-s}ds
+ h\ep_{\text{score}}(T-t_{n-1}).    
\end{equation}

\noindent\textbf{Bounding the error at time $T-\epsilon.$} With (\ref{eq:convergence_SDE_emp_score:bound_step_n}), by induction, we get that,
\begin{multline*}
\Vert \rv X_{T-\epsilon} - \hat{X}_N\ \Vert_{L_2} 
\leq \op\prod_{n=0}^{N-1}L_{{n}}\cp\Vert \rv X_{0} - \hat{X}_{0}\Vert_{L_2}\\
+ \frac{\sqrt{2d}}{2}\frac{R^2}{\epsilon} \sqrt{h} \sum_{n=1}^{N}\op\prod_{m=n}^{N-1}L_{m}\cp\int_{t_{n-1}}^{t_n}\frac{1}{T-s}ds
+ h\sum_{n=1}^{N}\op\prod_{m=n}^{N-1}L_{m}\cp\ep_{\text{score}}(T-t_{n-1}).
\end{multline*}
As $(\rv X_{T-\epsilon},\hat{X}_N)$ above is a specific coupling between the two variables, we have $W_2(\L(\rv X_{T-\epsilon}),\L(\hat{X}_N)) \leq \Vert \rv X_{T-\epsilon} - \hat{X}_N\ \Vert_{L_2}$, and with (\ref{eq:convergence_SDE_emp_score:init}), we get,
\begin{multline*}
W_2(\L(\rv X_{T-\epsilon}),\L(\hat{X}_N)) \leq \op\prod_{n=0}^{N-1}L_{{n}}\cp W_2(\L(X_T),\L(\hat{X}_0))\\
+  \frac{\sqrt{2d}}{2}\frac{R^2}{\epsilon} \sqrt{h} \sum_{n=1}^{N}\op\prod_{m=n}^{N-1}L_{m}\cp\int_{t_{n-1}}^{t_n}\frac{1}{T-s}ds
+ h\sum_{n=1}^{N}\op\prod_{m=n}^{N-1}L_{m}\cp\ep_{\text{score}}(T-t_{n-1}).
\end{multline*}

\noindent\textbf{Bounding the propagation of error.} Lemma~\ref{lemma:bound_propagation_error:SDE} gives
$$
\prod_{m=n}^{N-1}L_{m} \leq  \frac{2\epsilon}{T-t_{n-1}} \exp\op\frac{R^2}{\epsilon}\cp,
$$
and
$$
\sum_{n=1}^{N}\op\prod_{m=n}^{N-1}L_{m}\cp  \int_{t_{n-1}}^{t_n}\frac{1}{T-s}ds \leq 2\exp\op\frac{R^2}{\epsilon}\cp.
$$
For the initialization error, this leads to
\begin{align*}
\op\prod_{n=0}^{N-1}L_{{n}}\cp W_2(\L(X_T),\L(\hat{X}_0)) 
&\leq  \frac{2\epsilon}{T+h} \exp\op\frac{R^2}{\epsilon}\cp W_2(\L(X_T),\L(\hat{X}_0))\\
&\leq \frac{2\epsilon}{T} \exp\op\frac{R^2}{\epsilon}\cp W_2(\L(X_T),\L(\hat{X}_0)),
\end{align*}
and for the error on the score,
\begin{align*}
\label{eq:convergence_SDE_emp_score:bound_propagated_score_error}
\sum_{n=1}^{N}\op\prod_{m=n}^{N-1}L_{m}\cp\ep_{\text{score}}(T-t_{n-1}) 
&\leq 2\epsilon\exp\op\frac{R^2}{\epsilon}\cp\sum_{n=1}^{N}\frac{\ep_{\text{score}}(T-t_{n-1}) }{{T-t_{n-1}}} \\
&= {2\epsilon}\exp\op\frac{R^2}{\epsilon}\cp\sum_{k=1}^{N}\frac{\ep_{\text{score}}(\epsilon +hk) }{{\epsilon +hk}}.
\end{align*}
Combining the bounds above leads to,
\begin{align*}
W_2(\L(\rv X_{T-\epsilon}),\L(\hat{X}_N))
&\leq \frac{2\epsilon}{T} \exp\op\frac{R^2}{\epsilon}\cp W_2(\L(X_T),\L(\hat{X}_0))
+ \sqrt{2d}\frac{R^2}{\epsilon}\exp\op\frac{R^2}{\epsilon}\cp \sqrt{h}\\
&\quad + {2\epsilon}\exp\op\frac{R^2}{\epsilon}\cp\sum_{k=1}^{N}\frac{\ep_{\text{score}}(\epsilon +hk) }{{\epsilon +hk}}. 
\end{align*}

\noindent\textbf{Early stopping error.} With the triangular inequality and Lemma~\ref{lemma:bound_early_stopping_error}, we have 
\begin{align*}
W_2(\L(X),\L(\hat{X})) = W_2(\L(\rv X_T),\L(\hat{X}_N))
&\leq W_2(\L(\rv X_T),\L(\rv X_{T-\epsilon})) + W_2(\L(\rv X_{T-\epsilon}),\L(\hat{X}_N))  \\
&= W_2(\L(X_0),\L(X_{\epsilon})) + W_2(\L(\rv X_{T-\epsilon}),\L(\hat{X}_N)) \\
&\leq \sqrt{d\epsilon} + W_2(\L(\rv X_{T-\epsilon}),\L(\hat{X}_N)).
\end{align*}

\noindent\textbf{Initialization error.} Corollary~\ref{cor:init_error} gives, for $T$ large enough (depending only on $\L(X)$)
$$
W_2(\L(X_T),\L(\hat{X}_0))\leq \frac{R^2}{\sqrt{T}},
$$
hence finally,
hence finally,
\begin{align*}
W_2&(\L(X),\L(\hat{X})) 
\leq \sqrt{d\epsilon}+\frac{2\epsilon}{T^{3/2}} \exp\op\frac{R^2}{\epsilon}\cp R^2\\
&\quad + \sqrt{2d}\frac{R^2}{\epsilon}\exp\op\frac{R^2}{\epsilon}\cp \sqrt{h}
+ {2\epsilon}\exp\op\frac{R^2}{\epsilon}\cp h\sum_{k=1}^{N}\frac{\ep_{\text{score}}(\epsilon +hk) }{{\epsilon +hk}}. 
\tag{\ref{eq:bound_SDE_emp}}
\end{align*}

\subsection{Proof of Proposition \ref{prop:convergence_SDE_true_score}}

To bound the Wasserstein distance between $X$ and $\hat{X} = \hat{X}_N$ the output of the algorithm, we construct a specific coupling between the two variables.
Contrary to the deterministic samplers, the Euler-Maruyama sampler add noise at each step, so we need to choose a specific representation of this noise to get a coupling between $(\hat{X}_n)_n$ and $(\rv X_{t})_t$. 
Here, to get an order 1 convergence rate in the step size, the crucial part is that the discretization error at each step is uncorrelated from the previous steps.
Therefore, we will only need to by more precise and define the filtration $\F_t$ to control the correlation between the errors at each step.
We start by choosing the coupling between $\rv X_0 = X_T$ and $\hat{X}_0 \sim \N(0,TI)$, such that,
\begin{equation}
\label{eq:convergence_SDE_true_score:init}
\Vert X_0 - \hat{X}_0\Vert_{L_2} =\Vert X_T - \hat{X}_0\Vert_{L_2} = W_2(\L(X_T),\L(\hat{X}_0)).
\end{equation}
We take $(W_t)_{t\geq 0}$ a Brownian motion independent from $(X_0,\hat X_0)$ and we define the filtration $\F_t = \sigma(X_0,\hat X_0, (W_s)_{s\leq t})$, such that for all $t\geq 0$, $\rv X_T$ is $\F_t$-measurable and $(W_t)_{t}$ is a $\F_t$-Brownian motion.

\noindent\textbf{Bounding the error at step $n$.} For $0 < n \leq N$, we have,
$$ 
\rv X_{t_{n}}= \rv X_{t_{n-1}}+ \int_{t_{n-1}}^{t_{n}} \nabla\log \rv p_t(\rv X_t)dt + \int_{t_{n-1}}^{t_{n}} d W_t,
$$
and,
$$
\hat{X}_n = \hat{X}_{n-1} +h\nabla\log \rv p_t(\hat{X}_{n-1}) + Z_n,
$$
with $Z_n\sim \N(0, hI)$ independent from $\hat{X}_{n-1}$.
To define the coupling between $(\rv X_{t_n})_,$ and $(\hat{X}_n)_n$, we take 
$$Z_n=\int_{t_{n-1}}^{t_{n}} d W_t,$$
such that we get recursively that $\hat X_n$ is $\F_{t_n}$-measurable.
Moreover, with this choice, the Gaussian noises will cancel out when computing the difference $\rv X_{t_{n}} - \hat{X}_n$:
\begin{align*}
\rv X_{t_{n}} - \hat{X}_n &= \rv X_{t_{n-1}} - \hat{X}_{n-1} + \int_{t_{n-1}}^{t_{n}} \nabla\log \rv p_t(\rv X_t)dt  -h\nabla\log \rv p_t(\hat{X}_{n-1})\\
&= \rv X_{t_{n-1}} - \hat{X}_{n-1} \\
&\quad + \int_{t_{n-1}}^{t_{n}} \nabla\log \rv p_t(\rv X_t)dt - h\nabla\log \rv p_{t_{n-1}}(\rv X_{t_{n-1}}) \\
&\quad + h\op\nabla\log \rv p_{t_{n-1}}(\rv X_{t_{n-1}}) - \nabla\log \rv p_{t_{n-1}}(\hat{X}_{n-1})\cp\\
&= \op \op I +h\nabla\log\rv p_{t_{n-1}}\cp(\rv X_{t_{n-1}}) - \op I + h\nabla\log \rv p_{t_{n-1}}\cp(\hat{X}_{n-1})\cp\\
&\quad + \int_{t_{n-1}}^{t_{n}} \nabla\log \rv p_t(\rv X_t) - \nabla\log \rv p_{t_{n-1}}(\rv X_{t_{n-1}})dt.
\end{align*}
Lemma~\ref{lemma:bound_spatial_regularity} gives
$$
\left\Vert\op I +h\nabla\log\rv p_{t_{n-1}}\cp(\rv X_{t_{n-1}}) - \op I + h\nabla\log \rv p_{t_{n-1}}\cp(\hat{X}_{n-1})\right\Vert_{L_2} \leq  L_{n-1}\Vert \rv X_{t_{n-1}} - \hat{X}_{n-1}\Vert_{L_2},
$$
with $L_{n-1} = L_{T-t_{n-1},h} = 1 + h\op\frac{R^2}{(T-t_{n-1})^2} - \frac{1}{T-t_{n-1}}\cp.$
Moreover, Lemma~\ref{lemma:bound_discretization_SDE} gives
$$
\int_{t_{n-1}}^{t_{n}} \nabla\log \rv p_t(\rv X_t) - \nabla\log \rv p_{t_{n-1}}(\rv X_{t_{n-1}})dt = \int_{t_{n-1}}^{t_n}\int_{t_{n-1}}^s \nabla^2\log \rv{p}_u(\rv X_u)\cdot dW_u ds,
$$
and
$$
\left\Vert \int_{t_{n-1}}^{t_{n}} \nabla\log \rv p_t(\rv X_t) - \nabla\log \rv p_{t_{n-1}}(\rv X_{t_{n-1}})dt \right\Vert_{L_2} \leq \frac{\sqrt{2d}}{2}\frac{R^2}{\epsilon} \sqrt{h} \int_{t_{n-1}}^{t_n}\frac{1}{T-s}ds.
$$
Using Cauchy–Schwarz inequality, we have
$$
\left\Vert \int_{t_{n-1}}^{t_{n}} \nabla\log \rv p_t(\rv X_t) - \nabla\log \rv p_{t_{n-1}}(\rv X_{t_{n-1}})dt \right\Vert_{L_2}^2 \leq \frac{d}{2}\frac{R^4}{\epsilon^2}h^2 \int_{t_{n-1}}^{t_n}\frac{1}{(T-s)^2}ds.
$$
It follows that
\begin{multline*}
\Vert \rv X_{t_{n}} - \hat{X}_n\Vert_{L_2}^2 \leq L_{n-1}^2\Vert \rv X_{t_{n-1}} - \hat{X}_{n-1}\Vert_{L_2}^2 
+ \frac{d}{2}\frac{R^4}{\epsilon^2}h^2 \int_{t_{n-1}}^{t_n}\frac{1}{(T-s)^2}ds\\
+ \E\Big[\Big\langle\op I +h\nabla\log\rv p_{t_{n-1}}\cp(\rv X_{t_{n-1}}) - \op I + h\nabla\log \rv p_{t_{n-1}}\cp(\hat{X}_{n-1}),\\
\int_{t_{n-1}}^{t_n}\int_{t_{n-1}}^s \nabla^2\log \rv{p}_u(\rv X_u)\cdot dW_u ds\Big\rangle\Big].
\end{multline*}
We tackle the third term by noticing that, as $\rv X_{t_{n-1}}$ and $\hat X_{n-1}$ are $\F_{t_{n-1}}$-measurable,
\begin{multline*}
\E\Big[\Big\langle\op I +h\nabla\log\rv p_{t_{n-1}}\cp(\rv X_{t_{n-1}}) - \op I + h\nabla\log \rv p_{t_{n-1}}\cp(\hat{X}_{n-1}),\\
\int_{t_{n-1}}^{t_n}\int_{t_{n-1}}^s \nabla^2\log \rv{p}_u(\rv X_u)\cdot dW_u ds\Big\rangle\Big] \\
= \E\Big[\Big\langle\op I +h\nabla\log\rv p_{t_{n-1}}\cp(\rv X_{t_{n-1}}) - \op I + h\nabla\log \rv p_{t_{n-1}}\cp(\hat{X}_{n-1}),\\
\E\Big[\int_{t_{n-1}}^{t_n}\int_{t_{n-1}}^s \nabla^2\log \rv{p}_u(\rv X_u )\cdot dW_u ds\Big|\F_{t_{n-1}}\Big]\Big\rangle\Big] \\
=\E\ob\left\langle\op I +h\nabla\log\rv p_{t_{n-1}}\cp(\rv X_{t_{n-1}}) - \op I + h\nabla\log \rv p_{t_{n-1}}\cp(\hat{X}_{n-1}),0\right\rangle\cb = 0,
\end{multline*}
hence finally,
\begin{equation}
\label{eq:convergence_SDE_true_score:bound_step_n}
\Vert \rv X_{t_{n}} - \hat{X}_n\Vert_{L_2}^2 \leq L_{n-1}^2\Vert \rv X_{t_{n-1}} - \hat{X}_{n-1}\Vert_{L_2}^2  + \frac{d}{2}\frac{R^4}{\epsilon^2}h^2 \int_{t_{n-1}}^{t_n}\frac{1}{(T-s)^2}ds.    
\end{equation}

\noindent\textbf{Bounding the error at time $T-\epsilon.$} With (\ref{eq:convergence_SDE_true_score:bound_step_n}), by induction, we get that,
$$
\Vert \rv X_{T-\epsilon} - \hat{X}_N\ \Vert_{L_2}^2 
\leq \op\prod_{n=0}^{N-1}L_{{n}}\cp^2\Vert \rv X_{0} - \hat{X}_{0}\Vert_{L_2} ^2
+ \frac{d}{2}\frac{R^4}{\epsilon^2}h^2 \sum_{n=1}^{N}\op\prod_{m=n}^{N-1}L_{m}\cp^2\int_{t_{n-1}}^{t_n}\frac{1}{(T-s)^2}ds.
$$
As $(\rv X_{T-\epsilon},\hat{X}_N)$ above is a specific coupling between the two variables, we have $W_2(\L(\rv X_{T-\epsilon}),\L(\hat{X}_N)) \leq \Vert \rv X_{T-\epsilon} - \hat{X}_N\ \Vert_{L_2}$, and with (\ref{eq:convergence_SDE_true_score:init}), we get,
\begin{multline*}
W_2(\L(\rv X_{T-\epsilon}),\L(\hat{X}_N))^2 \leq \op\prod_{n=0}^{N-1}L_{{n}}\cp^2 W_2(\L(X_T),\L(\hat{X}_0))^2\\
+ \frac{d}{2}\frac{R^4}{\epsilon^2}h^2 \sum_{n=1}^{N}\op\prod_{m=n}^{N-1}L_{m}\cp^2\int_{t_{n-1}}^{t_n}\frac{1}{(T-s)^2}ds.
\end{multline*}

\noindent\textbf{Bounding the propagation of error.} With Lemma~\ref{lemma:bound_propagation_error:SDE}, we have
$$
\prod_{m=0}^{N-1}L_{{m}} \leq  \frac{2\epsilon}{T+h} \exp\op\frac{R^2}{\epsilon}\cp \leq  \frac{2\epsilon}{T} \exp\op\frac{R^2}{\epsilon}\cp,
$$
and
$$
\sum_{n=1}^{N}\op\prod_{m=n}^{N-1}L_{m}\cp^2\int_{t_{n-1}}^{t_n}\frac{1}{(T-s)^2}ds \leq \frac{4}{3\epsilon}\exp\op\frac{2R^2}{\epsilon}\cp,
$$
leading to 
$$
W_2(\L(\rv X_{T-\epsilon}),\L(\hat{X}_N))^2  \leq \op\frac{2\epsilon}{T} \exp\op\frac{R^2}{\epsilon}\cp\cp^2 W_2(\L(X_T),\L(\hat{X}_0))^2
+ \frac{2d}{3}\frac{R^4}{\epsilon^3}\exp\op\frac{2R^2}{\epsilon}\cp h^2. 
$$
Then, using that for $a,b\geq 0$, $\sqrt{a+b} \leq \sqrt{a}+\sqrt{b}$, we get that,
$$
W_2(\L(\rv X_{T-\epsilon}),\L(\hat{X}_N))  \leq \frac{2\epsilon}{T} \exp\op\frac{R^2}{\epsilon}\cp W_2(\L(X_T),\L(\hat{X}_0))
+ \sqrt{\frac{2d}{3}}\frac{R^2}{\epsilon^{3/2}}\exp\op\frac{R^2}{\epsilon}\cp h. 
$$

\noindent\textbf{Early stopping error.} With the triangular inequality and Lemma~\ref{lemma:bound_early_stopping_error}, we have 
\begin{align*}
W_2(\L(X),\L(\hat{X})) = W_2(\L(\rv X_T),\L(\hat{X}_N))
&\leq W_2(\L(\rv X_T),\L(\rv X_{T-\epsilon})) + W_2(\L(\rv X_{T-\epsilon}),\L(\hat{X}_N))  \\
&= W_2(\L(X_0),\L(X_{\epsilon})) + W_2(\L(\rv X_{T-\epsilon}),\L(\hat{X}_N)) \\
&\leq \sqrt{d\epsilon} + W_2(\L(\rv X_{T-\epsilon}),\L(\hat{X}_N)).
\end{align*}

\noindent\textbf{Initialization error.} Corollary~\ref{cor:init_error} gives, for $T$ large enough (depending only on $\L(X)$)
$$
W_2(\L(X_T),\L(\hat{X}_0))\leq \frac{R^2}{\sqrt{T}},
$$
hence finally,
\begin{equation*}
\tag{\ref{eq:bound_SDE}}
W_2(\L(X),\L(\hat{X})) \leq 
\sqrt{d\epsilon}
+\frac{2\epsilon}{T^{3/2}} \exp\op\frac{R^2}{\epsilon}\cp R^2
+\sqrt{\frac{2d}{3}}\frac{R^2}{\epsilon^{3/2}}\exp\op\frac{R^2}{\epsilon}\cp h.  
\end{equation*}

\section{Explosion of the Reverse ODE in Finite Time for Quadratically Perturbed Score}
\label{sct:explosion_ODE}
\begin{proposition}
\label{prop:explosion_ODE}
Assume that $X\in B(0,R)$ almost surely, and, for $\alpha > 0$, define :
$$
s(t,x) = \nabla\log p_t (x) + \alpha \Vert x\Vert x.
$$
Denote $\rv x_t$ the process defined by:
\begin{equation*}
\left\{\begin{array}{rl}
    \frac{d x_t}{dt} &= \frac{1}{2} s(T-t,x_t),\\
    x_{0} &= X_{T},
\end{array}\right.
\end{equation*}
and  $\tau \in [0,\infty]$ the random time of explosion, i.e., the stopping time such that for $\tau < \infty$, $\Vert x_t\Vert \xrightarrow[t\rightarrow \tau^-]{} \infty$ almost surely\footnote{This explosion time can always be defined, as one can take $\tau = \infty$ when there is no explosion.}. Then for all $\delta > 0$, $\P(\tau \leq \delta) > 0$.
\end{proposition}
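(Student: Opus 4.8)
The plan is to show that whenever the initialization $X_T$ happens to be large in norm, the solution of the ODE $\dot x_t = \tfrac12 s(T-t,x_t)$ blows up before time $\delta$, and then to use that $X_T$ has an everywhere-positive density to conclude $\P(\tau\le\delta)>0$. We may assume $\delta\le T/2$, since the event $\{\tau\le\delta\}$ is non-decreasing in $\delta$; on the interval $[0,\delta]$ the coefficient $1/(T-t)$ is then bounded by $2/T$, and the drift $(t,x)\mapsto\tfrac12 s(T-t,x)$ is locally Lipschitz on $[0,\delta]\times\R^d$ (the map $x\mapsto\nabla\log p_t(x)$ is smooth in $x$ by Lemma~\ref{lemma:score_moment}, and $x\mapsto\alpha\Vert x\Vert x$ is locally Lipschitz). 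Hence, for each deterministic initial value $x_0=y\in\R^d$, the ODE has a unique solution which either extends to all of $[0,\delta]$ or escapes to infinity at some time $\tau_y\in(0,\delta)$; I must show the latter occurs for a set of $y$ of positive $\L(X_T)$-probability.

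The first step is to derive a scalar differential inequality for $u_t:=\Vert x_t\Vert^2$. Differentiating, $\dot u_t=\langle x_t,s(T-t,x_t)\rangle=\langle x_t,\nabla\log p_{T-t}(x_t)\rangle+\alpha\Vert x_t\Vert^3$. From $\nabla\log p_t(x)=\tfrac1t(\E[X\mid X_t=x]-x)$ (Lemma~\ref{lemma:score_moment}) together with $X\in B(0,R)$ a.s., one gets the linear growth bound $\Vert\nabla\log p_t(x)\Vert\le (R+\Vert x\Vert)/t$, hence on $[0,\delta]$, $\langle x_t,\nabla\log p_{T-t}(x_t)\rangle\ge-\tfrac2T\Vert x_t\Vert(R+\Vert x_t\Vert)$. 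Therefore
\[
\dot u_t\ \ge\ \alpha\Vert x_t\Vert^3-\tfrac{2R}{T}\Vert x_t\Vert-\tfrac{2}{T}\Vert x_t\Vert^2 .
\]
Choosing $M:=\max\big(\sqrt{8R/(\alpha T)},\,8/(\alpha T)\big)$, the cubic term dominates once $\Vert x_t\Vert\ge M$: in that region $\dot u_t\ge\tfrac\alpha2\Vert x_t\Vert^3=\tfrac\alpha2 u_t^{3/2}>0$.

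Next I run a comparison argument. Fix $y$ with $\Vert y\Vert\ge M$. Since $\dot u_t>0$ whenever $u_t\ge M^2$, the region $\{\Vert x\Vert\ge M\}$ is forward-invariant, so the inequality $\dot u_t\ge\tfrac\alpha2 u_t^{3/2}$ holds throughout the maximal interval of existence. Comparing with the solution of $\dot v=\tfrac\alpha2 v^{3/2}$, $v_0=\Vert y\Vert^2$, namely $v_t=\big(\Vert y\Vert^{-1}-\tfrac{\alpha}{4}t\big)^{-2}$, which explodes at $t^\star(y)=\tfrac{4}{\alpha\Vert y\Vert}$, the standard ODE comparison lemma yields $u_t\ge v_t$, so the maximal solution must blow up no later than $t^\star(y)$, i.e.\ $\tau_y\le\tfrac{4}{\alpha\Vert y\Vert}$. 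Setting $\rho:=\max\big(M,\,8/(\alpha\delta)\big)$, every $y$ with $\Vert y\Vert\ge\rho$ satisfies $\tau_y\le\tfrac{4}{\alpha\rho}\le\tfrac\delta2<\delta$. Finally $X_T=X+B_T$ with $B_T\sim\N(0,TI)$ independent of $X\in B(0,R)$, so $X_T$ has density $p_T(x)=\int(2\pi T)^{-d/2}e^{-\Vert x-x_0\Vert^2/(2T)}\,dP_X(x_0)>0$ for every $x\in\R^d$; hence $\P(\Vert X_T\Vert\ge\rho)>0$. Since $\{\Vert X_T\Vert\ge\rho\}\subseteq\{\tau\le\delta\}$, we conclude $\P(\tau\le\delta)>0$.

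The main obstacle is the bookkeeping in the comparison step: the favorable inequality $\dot u_t\ge\tfrac\alpha2 u_t^{3/2}$ is only valid in the region $\{\Vert x\Vert\ge M\}$, so one must first observe that this region is forward-invariant (because $u$ is strictly increasing there), and then carefully invoke the characterization of the maximal solution — it either exists on all of $[0,\delta]$ or leaves every compact set — to turn ``$u_t$ dominates a function that blows up at $t^\star(y)$'' into ``$\tau_y\le t^\star(y)$''. Everything else (the linear growth of the score from Lemma~\ref{lemma:score_moment}, the explicit solution of the comparison ODE, and the positivity of the Gaussian-convolution density of $X_T$) is routine.
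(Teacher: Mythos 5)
Your proof is correct and takes essentially the same route as the paper's: derive a scalar differential inequality for $\Vert x_t\Vert^2$ from the linear growth of $\nabla\log p_t$ (Lemma~\ref{lemma:score_moment}) and the explicit cubic perturbation, compare with the explosive Riccati-type ODE $\dot v=\tfrac{\alpha}{2}v^{3/2}$ whose blow-up time $4/(\alpha\Vert y\Vert)$ can be pushed below $\delta$ by taking $\Vert y\Vert$ large, and conclude from the everywhere-positive density of $X_T=X+B_T$. You are somewhat more explicit about the forward invariance of the large-norm region and about turning ``$u_t$ dominates a function blowing up at $t^\star$'' into ``$\tau\le t^\star$'' via the maximal-solution dichotomy, where the paper simply invokes Petrovitsch's generalization of Gr\"onwall; otherwise the bookkeeping (explicit threshold $M$ versus the paper's nonconstructive $C$, cutting off time at $\delta\le T/2$ versus $T-\epsilon$) is a cosmetic difference.
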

\begin{proof}
We start by computing the derivative of $\Vert x_t\Vert^2$:
\begin{align*}
\frac{d}{dt}\Vert x_t\Vert^2 
&= 2 x_t \cdot \frac{d}{dt}x_t\\
&= \alpha \Vert x_t \Vert^3 + x_t \cdot \nabla\log p_{T-t} (x_t)\\
\text{(Lemma~\ref{lemma:score_moment})}&= \alpha \Vert x_t \Vert^3 + x_t\cdot \frac{1}{T-t}(\E[X|X_{T-t} = x_t]  - x_t)\\
&= \alpha \Vert x_t \Vert^3 -  \frac{\Vert x_t\Vert^2}{T-t} + \frac{x_t \cdot\E[X|X_{T-t} = x_t] }{T-t}.\\
\end{align*}
As $X\in B(0,R)$ almost surely, $x_t \cdot\E[X|X_{T-t} = x_t] \geq - \Vert x_t\Vert \Vert\E[X|X_{T-t} = x_t]\Vert \geq -R \Vert x_t\Vert$, hence, 
$$
\frac{d}{dt}\Vert x_t\Vert^2 \geq \alpha \Vert x_t \Vert^3 -  \frac{\Vert x_t\Vert^2}{T-t} - \frac{R \Vert x_t\Vert}{T-t}.
$$
We denote $y_t = \Vert x_t\Vert^2$, and we get that 
$$
\frac{d}{dt}y_t \geq 2\alpha y_t^{3/2} -  \frac{y_t}{T-t} - \frac{R \sqrt{y_t}}{T-t}.
$$
We fix $\epsilon >0$, then for $t \in [0, T-\epsilon]$, we have 
\begin{equation}
\label{eq:explosion_ODE:inequality_dy_dt}
\frac{d}{dt}y_t \geq \alpha y_t^{3/2} -  \frac{y_t}{\epsilon} - \frac{R \sqrt{y_t}}{\epsilon}. 
\end{equation}

\noindent Moreover, the usual computation on limits gives,
$$
\op \alpha y^{3/2} -  \frac{y}{\epsilon} - \frac{R \sqrt{y}}{\epsilon}\cp - \frac{\alpha}{2} y^{3/2} = \frac{\alpha}{2}y^{3/2} -  \frac{y}{\epsilon}- \frac{R \sqrt{y}}{\epsilon} \xrightarrow[y\rightarrow+\infty]{} +\infty,
$$
hence there is some $C > 0$ such that for all $y \geq C$, 
\begin{equation}
\label{eq:explosion_ODE:inequality_y}
\alpha y^{3/2} -  \frac{y}{\epsilon} - \frac{R \sqrt{y}}{\epsilon} \geq \frac{\alpha}{2} y^{3/2} > 0.
\end{equation}

\noindent Under the assumption that $y_0 \geq C$, this inequality, along with (\ref{eq:explosion_ODE:inequality_dy_dt}), ensures that, for $t \in [0, T-\epsilon]$, $y_t$ is increasing, $y_t \geq C > 0$ and 
$$
\frac{d}{dt}y_t \geq \alpha y_t^{3/2} -  \frac{y_t}{\epsilon} - \frac{R \sqrt{y_t}}{\epsilon} \geq \frac{\alpha}{2} y_t^{3/2}.
$$
Using a result by \citet{petrovitschManiereDetendreTheoreme1901} (a generalization of Grönwall's Lemma is the 1D case), we then deduce that for $t \in [0, T-\epsilon]$, $y_t \geq z_t$ with $z_t$ the solution to the ODE:
\begin{equation*}
\left\{\begin{array}{rl}
    \frac{d z_t}{dt} &= \frac{\alpha}{2} z_t^{3/2},\\
    z_{0} &= y_0.
\end{array}\right.
\end{equation*}
We solve this equation in explicit form with 
$$
z_t = \op\frac{1}{y_0^{-1/2} - \frac{\alpha}{4} t}\cp^2,
$$
In particular, $z_t$ explode in time $\tau_z = \frac{4}{\alpha \sqrt{y_0}}$, and as $y_t \geq z_t$, it follows that $\tau \leq \tau_z$.
Then for $\delta >0$,
\begin{align*}
\P(\tau \leq \delta)
&\geq \P(\tau \leq \min(\delta, T-\epsilon),y_0 \geq C)\\
&\geq \P(\tau_z \leq \min(\delta, T-\epsilon), y_0 \geq C)\\
&= \P\op\frac{4}{\alpha \sqrt{y_0}}\leq \max(\delta, T-\epsilon), y_0 \geq C\cp\\
&= \P\op y_0 \geq \max\op C, \op\frac{4}{\alpha\min(\delta, T-\epsilon)}\cp^2\cp\cp.
\end{align*}
Finally, as $y_0 = \Vert x_0\Vert^2=\Vert X_T\Vert^2$, and $X_T = X + B_T$ has a positive density over $\R^d$, $y_0$ has a positive density over $\R_+$, hence 
$$
\P(\tau \leq \delta) \geq \P\op y_0 \geq \max\op C, \op\frac{4}{\alpha\min(\delta, T-\epsilon)}\cp^2\cp\cp > 0.
$$
\end{proof}

\section{Parametrization of Diffusion Models}
\label{sct:param_time}
In this section, we look at other time parametrizations (noise schedules) for diffusion models. The objective of this section is to provide the appropriate tools for comparing the convergence guarantees of diffusion models independently of the parametrization, and to give indications on how to adapt our results to these other contexts and highlight what might or might not change.

Note also that we could try to adapt this work to flow matching or stochastic interpolants \citep{liuFlowStraightFast2023,lipmanFlowMatchingGenerative2023,albergoStochasticInterpolantsUnifying2023}.
When the source distribution is a Gaussian, flow matching corresponds to a particular time parametrization of diffusion models; hence, the tools developed here could be directly used.
Using a more general source distribution distribution will require adapting how we control of the regularity of the velocity fields, under proper assumptions on this distribution.

\subsection{Time Parametrization of Diffusion Models}
The forward process $X_t$ defined by (\ref{eq:sde_forward}) can be generalized as \citep[see, e.g.,][]{songScoreBasedGenerativeModeling2021,karrasElucidatingDesignSpace2022}
\begin{equation}
\label{eq:sde_forward_gen}
\left\{
\begin{array}{l}
dX_t =  -f(t)X_tdt + g(t)dB_t, \\
X_0 = X,
\end{array}\right.
\end{equation}
where $f,g : \R \mapsto\R_+$ are continuous function. This SDE has a closed-form solution,
$$
X_t =  a(t) X +  a(t)\int_0^t \frac{g(u)}{a(u)}dB_u
$$
where
$$
a(t) = \exp\op-\int_0^t f(u)du\cp
$$
In particular, for each time $t$, the marginal distribution $\L(X_t)$ verifies that 
$$
X_t = a(t) X + b(t) Z = s(t)(X + \sigma(t) Z).
$$
with $Z\sim \N(0,I)$ independent from $X$ and
\begin{align*}
b(t) &= a(t)\sqrt{\int _0^t \frac{g(u)^2}{a(u)^2}du},\\
s(t) &= a(t) =  \exp\op-\int_0^t f(u)du\cp,\\
\sigma(t) &= \frac{b(t)}{a(t)} = \sqrt{\int _0^t \frac{g(u)^2}{s(u)^2}du}.
\end{align*}
Note that we can also start for $a(t),b(t)$ or $s(t),\sigma(t)$ to define the corresponding functions $f,g$ by taking
\begin{align*}
f(t) &= - \frac{d}{dt}\log a(t) =- \frac{d}{dt}\log s(t) \\
g(t) &= a(t)\sqrt{\frac{d}{dt} \op a(t)^{-2} b(t)^2\cp} = s(t)\sqrt{\frac{d}{dt}\sigma(t)^2}.
\end{align*}

We can generalize the result of Propositions~\ref{prop:reverse_SDE} and \ref{prop:reverse_ODE}, i.e., for some $T>0$, define a reverse process $\rv X_t$ such that $\rv X_t \sim X_{T-t}$. Denoting $p_t$ the density of $X_t$ and $\rv p_t = p_{T-t}$, we take
$$
\left\{
\begin{array}{ll}
d\rv{X}_t &=  \op-\frac{\dot s(T-t)}{s(T-t)}\rv X_t + (1+\lambda)s(T-t)^2\dot \sigma(T-t)\sigma(T-t)\nabla\log \rv p_{t}(\rv{X}_t)\cp dt \\
&\quad+ \sqrt{\lambda} s(T-t)\sqrt{2\dot\sigma(T-t)\sigma(T-t)}dW_t, \\
\rv{X}_0 &= X_T,
\end{array}\right.
$$
with $W_t$ a Brownian motion. Note that $\nabla\log \rv p_t = \nabla \log p_{T-t}$ depends on the time parametrization, to make this dependency explicit, we define $p\op x; \sigma^2\cp$   the density of the normalized variable at noise level $\sigma^2$: $X + \N(0,\sigma^2I)$). We have $X_t \sim s(t)(X+\N(0,\sigma(t)^2I))$, hence
$$
 p_t(x) \propto p\op\frac{x}{s(t)};\sigma(t)^2\cp,
$$
leading to 
\begin{equation}
\label{eq:score_to_normalized_score}
\nabla\log p_t(x) = \frac{1}{s(t)} \nabla \log p\op\frac{x}{s(t)};\sigma(t)^2\cp.
\end{equation}
Then, the reverse process can be rewritten as:
\begin{equation}
\label{eq:reverse_process_gen}
\left\{
\begin{array}{ll}
d\rv{X}_t &=   d^\lambda (\rv X_t,T-t) dt + \sqrt{\lambda} s(T-t)\sqrt{2\dot\sigma(T-t)\sigma(T-t)}dW_t, \\
\rv{X}_0 &= X_T,
\end{array}\right.
\end{equation}
with 
\begin{align}
\label{eq:generalized_drift}
d^\lambda (x,T-t)
&= -\frac{\dot s(T-t)}{s(T-t)} x + (1+\lambda) s(T-t)\dot \sigma(T-t)\sigma(T-t)\nabla\log p \op \frac{x}{s(T-t)};\sigma(T-t)^2\cp\notag\\
&= f(T-t) x + \frac{1+\lambda}{2} \frac{g(T-t)^2}{s(T-t)}\nabla\log p \op \frac{x}{s(T-t)};\sigma(T-t)^2\cp.
\end{align}

For $s(t) = 1$ and $\sigma(t) = \sqrt{t}$, we recover our reverse ODE (\ref{eq:reverse_ODE}) with $\lambda = 0$ and  reverse SDE (\ref{eq:reverse_SDE})  with $\lambda = 1$. Algorithms are obtained similarly as before by taking discretization (Euler-Maruyama, Euler or Heun).

\subsection{Impact of Parametrization on Different Quantities}
\label{sct:param_time:changes}
\subsubsection{Score Regularity and Propagation of Errors}
We can adapt the results of Lemma~\ref{lemma:bound_spatial_regularity} in order to control the spacial regularity of the drift, which is essential to control how errors are propagated from steps to steps.
We have
$$
\nabla_x d^\lambda (x,t) = -\frac{\dot s(t)}{s(t)} I + (1+\lambda) \dot \sigma(t)\sigma(t)\nabla^2\log p \op \frac{x}{s(t)};\sigma(t)^2\cp.
$$
As with our original time parametrization, $\nabla \log p(x;t) = \nabla\log p_t(x)$, from Lemma~\ref{lemma:bound_spatial_regularity}, we deduce that
$$
- \frac{1}{\sigma^2} I\preccurlyeq  \nabla^2\log p(x;\sigma^2) \preccurlyeq  \op - \frac{1}{\sigma^2} + \frac{R^2}{\sigma^4}\cp I,
$$
hence
$$
\op -\frac{\dot s(t)}{s(t)}-  (1+\lambda) \frac{\dot \sigma(t)}{\sigma(t)} \cp I
\preccurlyeq 
\nabla d^\lambda (x,t)
\preccurlyeq  \op -\frac{\dot s(t)}{s(t)} + (1+\lambda) \op-\frac{\dot\sigma(t)}{\sigma(t)} + \frac{\dot \sigma(t)R^2}{\sigma(t)^3}\cp\cp I.
$$
We get that, for $h$ small enough,  $f_{t,h}: x \mapsto x + h d^\lambda (x,t)$,  $f_{t,h}$ is $L_{t,h}$-Lipchitz, with
$$
L_{t,h} = 1 + h \op -\frac{\dot s(t)}{s(t)} + (1+\lambda) \op-\frac{\dot\sigma(t)}{\sigma(t)} + \frac{\dot \sigma(t)R^2}{\sigma(t)^3}\cp\cp.
$$
This constant is important as it controls how error are propagated through iterations (see proofs of Propositions~\ref{prop:convergence_ODE_emp_score}-\ref{prop:convergence_SDE_true_score}). More precisely, error at step $n$ are propagated to the last step $N$ by being multiplied by $\prod_{m=n}^{N-1}L_{T-t_m,h}$ with $t_m = hm$.
We have given precise bounds for this quantity with our original time parametrization in Lemmas~\ref{lemma:bound_propagation_error:ODE}-\ref{lemma:bound_propagation_error:Heun}.
We give here the order of magnitude for the general case:
\begin{align}
\label{eq:propagation_coef_gen}
\prod_{m=n}^{N-1}L_{T-t_m,h} 
&= \prod_{m=n}^{N-1} \op 1 + h \op -\frac{\dot s(T-t_m)}{s(T-t_m)} + (1+\lambda) \op-\frac{\dot\sigma(T-t_m)}{\sigma(T-t_m)} + \frac{\dot \sigma(T-t_m)R^2}{\sigma(T-t_m)^3}\cp\cp\cp \notag\\
&\leq \exp\op h\sum_{m=n}^{N-1} \op -\frac{\dot s(T-t_m)}{s(T-t_m)} + (1+\lambda) \op-\frac{\dot\sigma(T-t_m)}{\sigma(T-t_m)} + \frac{\dot \sigma(T-t_m)R^2}{\sigma(T-t_m)^3}\cp\cp\cp\notag\\
&\approx \exp\op \int_\epsilon^T -\frac{\dot s(t)}{s(t)} + (1+\lambda) \op-\frac{\dot\sigma(t)}{\sigma(t)} + \frac{\dot \sigma(t)R^2}{\sigma(t)^3}\cp dt\cp \notag\\
&\leq \frac{1}{s(T)} \op \frac{\sigma(\epsilon)}{\sigma(T)}\cp^{1+\lambda}\exp\op\frac{(1+\lambda)R^2}{2\sigma(\epsilon)^2}\cp.
\end{align}

\subsubsection{Initialization Error}
The initialization $X_T \sim s(T)(X +\sigma(T)\N(0,I))$ is approximated by taking $\hat X_0 = s(T)\sigma(T)Z$ with $Z\sim \N(0,I)$. Proposition~\ref{prop:init_error} gives 
$$
W_2(\L(Y),\L(\hat{X}_0)) \leq s(T) W_2\op \L(X),\delta_0\cp = s(T) \E[\Vert X\Vert^2].
$$
Assuming moreover that $\E[X] = 0$, and that for some some $\xi>0$, $\E\ob e^{\xi X^2}\cb < \infty$ as $\sigma(T) \rightarrow+\infty$, we also have
$$
W_2(\L(Y),\L(\hat{X}_0)) \sim  \frac{s(T)}{2\sigma(T)} \left\Vert \Sigma\right\Vert_\textnormal{F}
$$
with $\Sigma = \E\ob XX^\top\cb$.

At first glance, it can seem that taking a time parametrization such that $s(T)\xrightarrow[T\rightarrow\infty]{} 0$ (for example $s(T) = e^{-T}$ for the Ornstein–Uhlenbeck process) will lead to a smaller initialization error.
This idea is incorrect because the gain from the scaling is exactly compensated by the multiplicative factor corresponding to the propagation of error that scales as $s(T)^{-1}$:
$$
\textnormal{propagated initialization error}  \lesssim \op \frac{\sigma(\epsilon)}{\sigma(T)}\cp^{1+\lambda}\exp\op\frac{(1+\lambda)R^2}{2\sigma(\epsilon)^2}\cp\E[\Vert X\Vert^2].
$$
This bound does not depend on a specific time parametrization $t\mapsto(s(t),\sigma(t))$, but only on the noise levels $(\sigma(\epsilon),\sigma(T))$.
Note however that our second bound in Proposition~\ref{prop:init_error} gives a better asymptomatic dependency in $\sigma(T)$ that the bounds found in the literature:
$$
\textnormal{propagated initialization error}  \lesssim \frac{1}{\sigma(T)}\op \frac{\sigma(\epsilon)}{\sigma(T)}\cp^{1+\lambda}\exp\op\frac{(1+\lambda)R^2}{2\sigma(\epsilon)^2}\cp \left\Vert \Sigma\right\Vert_\textnormal{F}.
$$

\subsubsection{Score Error}
The $L_2$-error on the score also depends on the time parametrization:
\begin{align*}
\ep_{\text{score}}(t) &= \Vert\nabla\log p_t(X_t) - f_\theta(X_t,t)\Vert_{L_2} \\
&= \frac{1}{s(t)}\left\Vert\nabla\log p\op \frac{X_t}{s(t)};\sigma(t)^2\cp - \td f_\theta\op \frac{X_t}{s(t)},\sigma(t)^2\cp\right\Vert_{L_2}\\
&= \frac{1}{s(t)}\left\Vert\nabla\log p\op X + \sigma(t)Z;\sigma(t)^2\cp - \td f_\theta( X + \sigma(t)Z,\sigma(t)^2)\right\Vert_{L_2}\\
&= \frac{1}{s(t)}\td\ep_{\text{score}}(\sigma(t)^2),
\end{align*}
where $\td f_\theta(\cdot, \sigma^2)$ is the normalized score predictor at noise level $\sigma^2$ defined by $\td f_\theta(x, \sigma(t)^2) = s(t) f_\theta(s(t)x,t)$.
Note that with our initial scaling, $s(t)=1$ and $\sigma(t) = \sqrt{t}$, we simply have $\ep_{\text{score}}(t) = \td\ep_{\text{score}}(t)$. 

Theoretical works on the convergence of diffusion models often assume that the quantity $\ep_{\text{score}}(t)$ is bounded. However, to compare works using different time parametrization, it is important to translate these hypotheses in term of $\td\ep_{\text{score}}(\sigma(t)^2)$. Typically, a bound of the form $\forall t, \ep_{\text{score}}(t) \leq \alpha$ corresponds to $\td\ep_{\text{score}}(\sigma(t)^2)\leq s(t) \alpha$.

In the proofs, what matters is the error on the drift coefficient of the process at time $t_n$, which can be expressed using the error on the score:
\begin{align*}
\ep_\textnormal{approximation}(t_n) 
&= \Vert d^\lambda (\rv X_t,T-t_n) - \hat d^\lambda (\rv X_t,T-t_n)\Vert_{L_2}\\
&= (1+\lambda) s(T-t_n)\dot \sigma(T-t_n)\sigma(T-t_n) \td\ep_{\text{score}}(\sigma(t_n)^2)
\end{align*}

Once again, it can seem at first glance that some time parametrizations $t\mapsto(s(t),\sigma(t))$ lead to smaller approximation errors. However, when adding these errors along steps and accounting from the multiplicative propagation coefficient (\ref{eq:propagation_coef_gen}), we get an integral over noise levels that does not depend on the specific time parametrization:
\begin{align*}
\textnormal{propagated }&\textnormal{approximation error}\\
&\lesssim h\sum_{n=0}^{N-1}\op \frac{\sigma(\epsilon)}{\sigma(T-t_n)}\cp^{1+\lambda}\exp\op\frac{(1+\lambda)R^2}{2\sigma(\epsilon)^2}\cp\dot \sigma(T-t_n)\sigma(T-t_n) \td\ep_{\text{score}}(\sigma(t_n)^2)\\
&\approx \int_\epsilon^T \op \frac{\sigma(\epsilon)}{\sigma(t)}\cp^{1+\lambda}\exp\op\frac{(1+\lambda)R^2}{2\sigma(\epsilon)^2}\cp\dot \sigma(t)\sigma(t) \td\ep_{\text{score}}(\sigma(t)^2)dt\\
&= \frac{1}{2}\int_{\sigma(\epsilon)^2}^{\sigma(T)^2} \op \frac{\sigma(\epsilon)}{\sqrt{\sigma^2}}\cp^{1+\lambda}\exp\op\frac{(1+\lambda)R^2}{2\sigma(\epsilon)^2}\cp\td\ep_{\text{score}}(\sigma^2)d\sigma^2.
\end{align*}

\subsubsection{Discretization Error}
Similarly to section~\ref{sct:discretization_error}, we can control the discretization error by looking at the time derivative of the drift $d^\lambda(\rv X_T,T-t)$.
As before, we can express derivatives of $\nabla \log p(x;\sigma^2)$ in term of conditional moments of the distribution, all of which can be controlled under Assumption 1 or 1'. But looking at the expression of $d^\lambda(\rv X_T,T-t)$ (\ref{eq:generalized_drift}), we see that it will also be needed to control time derivation of $s(t)$ and $\sigma(t)$ (or equivalently of $f,g$) which will be possible under the proper assumptions on the forward process.
However, inspection of equation (\ref{eq:generalized_drift}) reveals that computing this derivative entails differentiating compositions and products, resulting in a substantial increase in the number of terms. While each of these terms can, in principle, be controlled using the tools developed here, deriving explicit bounds in the general case lies beyond the scope of the present work and is deferred to future research.

\vskip 0.2in
\bibliography{biblio.bib}

\end{document}